\documentclass{article} 
\usepackage{iclr2027_conference,times}

\usepackage{hyperref}
\usepackage{url}
\usepackage{mymacro}
\usepackage{array}
\usepackage{booktabs}
\usepackage{caption}
\usepackage{enumitem}

\title{Steepest Guidance: A Practical and Principled Approach to Inference-Time Alignment of Flow and Diffusion-based Models}

\author{\vspace{0cm}
Shokichi Takakura$^{1}$, Akifumi Wachi$^{1}$, Rei Higuchi$^{2,3}$, Kohei Miyaguchi$^{1}$, Taiji Suzuki$^{2,3}$ \\\vspace{0.1cm}
$^{1}$LY Corporation, $^{2}$The University of Tokyo, $^{3}$RIKEN AIP\\\vspace{0cm}
\small{\texttt{\{stakakur,akifumi.wachi\}@lycorp.co.jp, higuchi-rei714@g.ecc.u-tokyo.ac.jp}}\\\vspace{0cm}
\small{\texttt{kmiyaguc@lycorp.co.jp, taiji@mist.i.u-tokyo.ac.jp}}
}

\iclrfinalcopy 
\begin{document}

\maketitle

\begin{abstract}
   Inference-time alignment of flow and diffusion-based models
   is critical for achieving flexible generative modeling.
   Theoretically, Doob's $h$-transform provides an elegant solution to this problem,
   and most existing methods are based on this principle.
   However, in practice, estimating the optimal guidance derived from Doob's $h$-transform at inference time is challenging.
   To deal with this issue, we regard inference-time alignment as a sequential optimization problem in the space of probability measures
   and propose a novel framework called \textit{Steepest Guidance}, based on the principle of maximizing local improvement in the objective.
   We provide a theoretical analysis of the proposed method and demonstrate its effectiveness through extensive experiments.
\end{abstract}

\section{Introduction}
Flow and diffusion-based generative models have emerged as powerful tools for modeling complex data distributions across various domains,
including image generation~\citep{ho2020denoising,lipman2022flow}, natural language processing~\citep{li2022diffusion}, and molecular design~\citep{hoogeboom2022equivariant}.
In many applications, we often aim to optimize some criteria over the generated samples.
For instance, in image generation, we may want to generate images that not only look realistic but also satisfy certain aesthetic criteria or exhibit diversity.
This can be formalized as an optimization problem on a reward functional $R[\mu]$ over the distribution of generated samples $\mu$.
Typically, the reward functional is defined as the expected value of a reward function $r: \R^d \to \R$ over the distribution $\mu$, i.e., $R[\mu] = \int r(y) \mu(dy)$.
However, in some applications, we may want to consider more complex rewards such as diversity-promoting objectives~\citep{corso2023particle,vinograd2026diverse,de2025flow} or risk-sensitive objectives~\citep{zhang2020variational,de2025flow,wang2026efficient},
which cannot be expressed as the expected value of a reward function since they are non-linear in $\mu$.

Reward-guided generation is often formulated as an optimal control problem~\citep{domingo2024adjoint}, where the generative model is treated as a stochastic process, and the reward serves as a control objective.
Theoretically, the optimal solution to this problem is given by Doob's $h$-transform~\citep{rogers2000diffusions}, which provides a principled way to modify the generative process to maximize the expected reward,
while ensuring that the generated samples remain close to the original data distribution.
Several methods~\citep{domingo2024adjoint,uehara2024fine} have been proposed to address this problem by learning a guidance model that steers the generative process towards high-reward regions of the sample space.
In addition,~\citet{marion2024implicit,kawata2025direct,de2025flow} have developed a general framework for fine-tuning diffusion and flow models to maximize arbitrary reward functionals including non-linear ones.

On the other hand, a line of work has developed training-free approaches, where the guidance is estimated during the inference phase without additional training.
The plug-in estimator~\citep{dandapanthula2026we} computes the optimal guidance via backpropagation through the generative process,
which requires the reward and sampling process to be differentiable.
Recently, to deal with non-differentiable rewards,~\citet{zhu2026training} have proposed REINFORCE~\citep{williams1992simple} estimators for the optimal guidance.

In spite of the intensive research on training-free inference-time alignment of flow and diffusion models,
there are still several challenges that remain to be addressed.
First, most existing methods can only handle linear reward functionals, and cannot be applied to non-linear reward functionals.
Second, even if the reward functional is linear, the optimal guidance is difficult to estimate in practice.
Recently,~\citet{dandapanthula2026we} have revealed that the plug-in estimator with finite Monte Carlo samples is biased, leading to suboptimal performance in reward-guided generation.
In addition, as we show later, REINFORCE estimators also suffer from large bias, which causes the performance of inference-time alignment of flow models to degrade significantly.

In this paper, we regard the inference-time alignment of flow and diffusion models as a sequential optimization problem in the space of probability measures,
and propose a novel method called \textit{Steepest Guidance} based on the local improvement of the reward functional.
For linear reward functionals, we show that it can be estimated in an unbiased manner, leading to improved performance in inference-time alignment.
In addition, steepest guidance can be naturally applied to non-linear reward functionals through a particle approximation.
Our contributions are summarized as follows:
\begin{itemize}[itemsep=0.4mm,leftmargin=8mm]
    \item We regard the inference-time alignment of flow and diffusion models as a sequential optimization problem in the space of probability measures,
          and propose \textit{Steepest Guidance}, which guides the generative process towards the steepest ascent direction of the reward functional.
          Furthermore, we generalize it to an entropy-regularized objective and develop \textit{Regularized Steepest Guidance}.
    \item Accounting for the evolution of the objective, we establish provable reward improvement and global convergence of our proposed method under suitable assumptions.
          This can be seen as a functional extension of the existing analysis of Wasserstein gradient flows~\citep{bakry2014analysis,chizatmean,nitanda2022convex} from fixed-objective optimization to the optimization of objectives that evolve along the generative process.
    \item Through extensive experiments, we demonstrate that the proposed method consistently outperforms existing approaches and successfully handles non-linear reward functionals.
\end{itemize}

\vspace{-2mm}
\subsection{Related Work}
Here, we briefly review the related work and refer to Appendix~\ref{sec:related_work} for a more detailed discussion. \vspace{-1mm}

{\bf Inference-time Alignment of Flow and Diffusion Models.}
Inference-time alignment of flow and diffusion models can be categorized into two main approaches:
selection-based methods and guidance-based methods.
Selection-based methods include sequential Monte Carlo~\citep{wu2023practical},
SVDD~\citep{li2024derivative}, and Best-of-N~\citep{nakano2021webgpt}, which utilize multiple particles and select or resample them to obtain better samples.
Guidance-based methods include DOIT~\citep{zhu2026training} and Gradient Guidance~\citep{guo2024gradient},
which are often efficient since they can utilize gradient information of the reward function.
However, they require computationally expensive backpropagation or suffer from large bias. \vspace{-1mm}

{\bf Optimization in the Space of Probability Measures.}
Several works~\citep{marion2024implicit,kawata2025direct,de2025flow} have regarded reward-guided generation as an optimization problem in the space of probability measures
and developed fine-tuning methods for general reward functionals, which require additional training.
On the other hand, (Mean-field) Langevin dynamics~\citep{welling2011bayesian,chizatmean,nitanda2022convex}
is based on a Wasserstein gradient flow and optimizes a functional over the space of probability measures through gradient-based particle updates.
However, the distribution of the real-world data is often complex and multimodal,
which hinders the convergence of such methods.
Recently, Slowly Annealed Langevin Dynamics (SALD)~\citep{nitanda2026slowly} has been applied to inference-time alignment of flow and diffusion models,
but it cannot be applied to non-linear reward functionals. \vspace{-2mm}
\section{Preliminaries}
\vspace{-2mm}
In this section, we introduce the flow and diffusion-based generative models and formulate the inference-time alignment as an optimal control problem.
For $d \in \N$, let $\mathcal{P}$ be the space of probability measures on $\R^d$ which have density functions, and finite entropy and second moment.
We denote a data distribution by $\pi_1 \in \mathcal{P}$.

\subsection{Flow and Diffusion-based Models}
A flow matching model is formulated as the following ordinary differential equation (ODE):
\begin{align*}
    \d Y_t = v_t(Y_t) \d t, \quad Y_0 \sim \mathcal{N}(0, I),
\end{align*}
where $v_t(Y_t)$ is a time-dependent vector field, which is approximated by a neural network.
Typically, $v_t(x)$ is defined as $\Expec{\odv{Y_t}{t} \mid Y_t = x}$,
where $Y_t := (1-t) Y_0 + t Y_1$, $Y_0 \sim \mathcal{N}(0, I)$, and $Y_1 \sim \pi_1$.
Furthermore, for any $\sigma_t \in \R$, the solution of the SDE
\begin{align}
    \d Y_t = \ab(v_t(Y_t) + \frac{\sigma_t^2}{2} \grad \log \pi_t(Y_t)) \d t + \sigma_t \d W_t, \quad Y_0 \sim \mathcal{N}(0, I)~\label{eq:flow_sde}
\end{align}
has the same marginal distribution as the solution of the ODE, where $\pi_t$ is the marginal distribution of $Y_t$.
In this paper, we consider a memoryless noise schedule $\sigma_t^2 = 2(1-t)/t$ following~\citet{domingo2024adjoint,bergmeister2026reinforce}.

On the other hand, for diffusion models, we first define the following (forward) SDE~\citep{jiao2025towards}:
\begin{align*}
    \d X_t = -\frac{1}{2(1-t)}X_t \d t + \frac{1}{\sqrt{1-t}} \d W_t, \quad X_0 \sim \pi_1.
\end{align*}
Then, the reverse-time SDE is given by
\begin{align*}
    \d Y_t = \ab(\frac{1}{2}Y_t + \grad \log \pi_t(Y_t)) \frac{\d t}{t} + \frac{1}{\sqrt{t}} \d W_t, \quad Y_0 \sim \mathcal{N}(0, I),
\end{align*}
where $Y_t$ has the same marginal distribution as $X_{1-t}$. Here, $\grad \log \pi_t(y)$ is the score function of $\pi_t$ and is approximated by a neural network.

In this paper, we consider the following SDE as a general form of flow and diffusion models:
\begin{align*}
    \d Y_t = b_t(Y_t) \d t + \sigma_t \d W_t, \quad Y_\varepsilon \sim \pi_\varepsilon, \quad t \in [\varepsilon,1].
\end{align*}
Here, we introduce a small constant $\varepsilon \geq 0$ and assume that $\varepsilon > 0$ for theoretical analysis to avoid blow-up of $\sigma_t$ at $t=0$.
We denote its path measure by $\mathbb{P}_{\pi}$ and the marginal distribution of $Y_t$ by $\pi_t$.
In the following, we assume that the second moment of $\pi_t$ is finite.

\subsection{Inference-time Alignment of Flow Models}
In this paper, we aim to maximize a reward functional $R: \mathcal{P} \to \R$ over the generated samples
by adding a guidance term $g_t(y)$ to the generative process:
\begin{align*}
    \d Y_t = (b_t(Y_t) + g_t(Y_t)) \d t + \sigma_t \d W_t, \quad Y_\varepsilon \sim \pi_\varepsilon, \quad t \in [\varepsilon,1].
\end{align*}
This guidance based formula has been widely employed by several existing works
such as classifier/classifier-free guidance \citep{dhariwal2021diffusion,ho2022classifier}.
We call a guidance $g_t(y)$ admissible if it satisfies
$
    \int_\varepsilon^1 \Expec[\pi_t^g]{\norm{g_t(Y_t)}^2 / \sigma_t^2} \d t < \infty.
$
We denote its path measure by $\mathbb{P}_{\pi^g}$ and the marginal distribution of $Y_t$ by $\pi_t^g$.
Then, we consider the following optimization problem:
\begin{align}
    \max_{g} R[\pi^g_1] - \frac{1}{\lambda} \kl(\mathbb{P}_{\pi^g} \mid \mathbb{P}_{\pi}), \label{eq:objective}
\end{align}
where $\kl$ is the KL divergence and $\lambda > 0$ is a regularization parameter.
Typically, the reward functional is defined as the expected value of a reward function $r: \R^d \to \R$ over the distribution $\mu$, i.e., $R[\mu] = \int r(y) \mu(dy)$.
On the other hand, in some applications, we may want to consider more complex rewards such as risk-averse objectives like CVaR or diversity-promoting objectives like Rao's quadratic entropy:
\begin{align*}
    R_{\mathrm{CVaR}}[\mu] = \Expec[\mu]{r(Y) \mid r(Y) \leq q_\alpha}, \quad R_{\mathrm{Rao}}[\mu] = -\frac{1}{2}\int \int k(y, y') \mu(dy) \mu(dy'),
\end{align*}
where $q_\alpha$ is the $\alpha$-quantile of $r(Y)$ and $k: \R^d \times \R^d \to \R$ is a positive definite kernel.
See Table~\ref{tab:reward_functionals} for other examples of (non-linear) reward functionals.

In this paper, we assume that the reward functional $R$ has a functional derivative $\fdv{R}{\mu}[\mu](y)$ with $\|\grad \fdv{R}{\mu}[\mu](y)\|,|\fdv{R}{\mu}[\mu](y)| \leq C_V$
and $\mu_1^* := \argmax_{\mu_1 \in \mathcal{P}} R[\mu_1] - \frac{1}{\lambda} \kl(\mu_1 \mid \pi_1)$ exists.
\begin{definition}[Functional Derivative]
    The functional derivative of a reward functional $R: \mathcal{P} \to \R$ with respect to a measure $\mu \in \mathcal{P}$ is a function $\fdv{R}{\mu}[\mu](y)$ such that for any $\nu \in \mathcal{P}$ and the mixture path $\mu_\epsilon := (1-\epsilon)\mu + \epsilon\nu$, it holds that
    $
        \frac{\d}{\d \epsilon} R[\mu_\epsilon] \Big|_{\epsilon=0} = \int \fdv{R}{\mu}[\mu](y) \, (\nu-\mu)(\d y).
    $
\end{definition}

\subsection{Optimal Control via Doob's $h$-transform}
As a theoretical principle to determine the guidance for the reward maximization, Doob's $h$-transform has been utilized by several existing works~\citep{uehara2024fine,domingo2024adjoint,kawata2025direct,bergmeister2026reinforce,zhu2026training} to guide the generative process.
Formally, Doob's h-function is defined as $h_t(y) = \Expec{\exp(\lambda \fdv{R}{\mu}[\mu_1^*](Y_1)) \mid Y_t = y}$, and also it can be defined through the following SDE in the sense that the density ratio between the marginal distributions of $Y_t$ for the original process and the following process is proportional to $h_t$:
\begin{align*}
    \d Y_t = (b_t(Y_t) + g_t^*(Y_t)) \d t + \sigma_t \d W_t, \quad Y_0 \sim \mathcal{N}(0, I),
\end{align*}
where $g_t^*(y) = \sigma_t^2 \grad \log h_t(y)$.
As discussed in~\citet{domingo2024adjoint}, under the memoryless noise schedule this Doob transform solves the control problem~\eqref{eq:objective} with $\varepsilon = 0$.

Unfortunately, Doob's $h$-transform does not admit a closed-form expression and requires numerical estimation, presenting a significant computational challenge.
To tackle this problem, some recent works have proposed to estimate the optimal guidance $g_t^*(y)$ at inference time using Monte Carlo samples from the conditional distribution $\pi_1(\cdot \mid Y_t = y)$.
From the chain rule, the optimal guidance $g_t^*(y)$ can be expressed as
\begin{align*}
    g_t^*(y) = \sigma_t^2
    \frac{\grad_y \Expec[\pi_1(\cdot \mid Y_t = y)]{\exp(\lambda \fdv{R}{\mu}[\mu_1^*](Y_1))}}{\Expec[\pi_1(\cdot \mid Y_t = y)]{\exp(\lambda \fdv{R}{\mu}[\mu_1^*](Y_1))}}.
\end{align*}
Let $z_1, \dots, z_k$ be i.i.d. samples from the conditional distribution $\pi_1(Z \mid Y_t = y)$.
Then, the denominator can be estimated as $\frac{1}{k} \sum_{i=1}^k \exp(\lambda \fdv{R}{\mu}[\mu_1^*](z_i))$.
To estimate the numerator, there are two main approaches: Plug-in and REINFORCE estimators.

\paragraph{Plug-in Estimator.}
Assume that the sampling process can be expressed as $z_i = f(y, \epsilon_i)$ for some deterministic function $f$ and random variable $\epsilon_i$.
Then, the plug-in estimator of $g_t^*(y)$ is given by
\begin{align*}
    \grad_y \Expec[\pi_1(\cdot \mid Y_t = y)]{\exp\left(\lambda \fdv{R}{\mu}[\mu_1^*](Y_1)\right)} \simeq \frac{1}{k} \sum_{i=1}^k \grad_y \exp\left(\lambda \fdv{R}{\mu}[\mu_1^*](f(y, \epsilon_i))\right).
\end{align*}
Note that the plug-in estimator requires the reward and sampling process to be differentiable.
Furthermore, even if the reward and sampling process are differentiable, computing the gradient is computationally expensive since it requires backpropagation through the generative process.

\paragraph{REINFORCE Estimator.}
Following \citet{zhu2026training}, let us consider the following identity~\citep{williams1992simple}:
\begin{align*}
    \grad_y \Expec[\pi_1(\cdot \mid Y_t = y)]{f(Y_1)} = \Expec[\pi_1(\cdot \mid Y_t = y)]{f(Y_1) \grad_y \log \pi_1(Y_1 \mid Y_t = y)}.
\end{align*}
Then, the REINFORCE estimator of $g_t^*(y)$ is given by
\begin{align*}
    \grad_y \Expec[\pi_1(\cdot \mid Y_t = y)]{\exp\left(\lambda \fdv{R}{\mu}[\mu_1^*](Y_1)\right)} \simeq \frac{1}{k} \sum_{i=1}^k \exp\left(\lambda \fdv{R}{\mu}[\mu_1^*](z_i)\right) \grad_y \log \pi_1(z_i \mid Y_t = y).
\end{align*}
Instead of the differentiability of the reward and sampling process, the REINFORCE estimator only requires the differentiability of the log-likelihood of the conditional distribution $\pi_1(\cdot \mid Y_t = y)$.
Therefore, the REINFORCE estimator can be applied to non-differentiable reward functions.
As shown in the following lemma, we can compute the gradient of the log-likelihood of the conditional distribution $\pi_1(\cdot \mid Y_t = y)$ from the score function of the marginal distribution $\pi_t$.
\begin{lemma}\label{lem:conditional_score}
    For any $t \in [\varepsilon, 1)$, we have
    \begin{align*}
        \grad_y \log \pi_1(z \mid Y_t = y) =
        \begin{cases}
            -\frac{1}{(1-t)^2} (y - tz) -\frac{tv_t(y) - y}{1-t}       & (\text{Flow Matching Model}), \\
            -\frac{1}{1-t} \cdot (y - \sqrt{t}z) - \grad \log \pi_t(y) & (\text{Diffusion Model}).
        \end{cases}
    \end{align*}
\end{lemma}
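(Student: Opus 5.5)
The plan is to write the conditional density with Bayes' rule and differentiate its logarithm in $y$. For $t\in[\varepsilon,1)$ we have
\begin{align*}
    \pi_1(z \mid Y_t = y) = \frac{p_{t\mid 1}(y \mid z)\, \pi_1(z)}{\pi_t(y)},
\end{align*}
where $p_{t\mid 1}(\cdot\mid z)$ is the density of $Y_t$ given $Y_1=z$. Since $\pi_1(z)$ does not depend on $y$, this gives
\begin{align*}
    \grad_y \log \pi_1(z \mid Y_t = y) = \grad_y \log p_{t\mid 1}(y\mid z) - \grad \log \pi_t(y).
\end{align*}
This identity needs the joint law of $(Y_t,Y_1)$, not just the marginals. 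For the flow model I take the interpolation coupling $Y_t=(1-t)Y_0+tY_1$. For the diffusion model I use the fact that the reverse SDE reproduces the joint law of the forward process, so that $(Y_t,Y_1)\overset{d}{=}(X_{1-t},X_0)$. It then remains to compute the Gaussian term $p_{t\mid 1}$ in each case and, for flow matching, to express $\grad\log\pi_t$ through $v_t$.

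\textbf{Flow matching.} Given $Y_1=z$, we have $Y_t\sim\mathcal N(tz,(1-t)^2 I)$, so
\begin{align*}
    \grad_y\log p_{t\mid1}(y\mid z)=-\frac{y-tz}{(1-t)^2}.
\end{align*}
For the marginal score I apply Tweedie's formula. Differentiating $\pi_t(y)=\int p_{t\mid1}(y\mid z)\pi_1(dz)$ under the integral gives
\begin{align*}
    \grad\log\pi_t(y)=-\frac{y-t\,\Expec{Y_1\mid Y_t=y}}{(1-t)^2}.
\end{align*}
Next I eliminate the posterior mean using $v_t$. Write $E_i:=\Expec{Y_i\mid Y_t=y}$. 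Then $v_t(y)=E_1-E_0$ and $y=(1-t)E_0+tE_1$, which together give $E_1=y+(1-t)v_t(y)$. Substituting, the marginal score becomes
\begin{align*}
    \grad\log\pi_t(y)=-\frac{(1-t)y-t(1-t)v_t(y)}{(1-t)^2}=\frac{tv_t(y)-y}{1-t}.
\end{align*}
Subtracting this from the Gaussian term yields the first case of the lemma.

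\textbf{Diffusion model.} I solve the forward linear SDE conditionally on $X_0=z$ and evaluate at $s=1-t$. The conditional mean satisfies $m'=-m/(2(1-s))$, so $m_s=\sqrt{1-s}\,z=\sqrt t\,z$. The variance satisfies $v'=(1-v)/(1-s)$ with $v_0=0$, whose solution is $v_s=s=1-t$. Hence $Y_t\mid Y_1=z\sim\mathcal N(\sqrt t z,(1-t)I)$, and the Gaussian term is $-(y-\sqrt t z)/(1-t)$. Here $\grad\log\pi_t$ is kept as the network's score, which gives the second case.

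\textbf{Main obstacle.} I expect the main difficulty to be justification rather than algebra. One must confirm that the joint law of $(Y_t,Y_1)$ under the generative SDE (with $\varepsilon>0$) coincides with that of the reference coupling, which holds because the marginals are matched and the reverse-time construction is exact. One must also check that differentiating under the integral in Tweedie's formula is legitimate. For $t<1$ the Gaussian kernel is smooth with bounded moments and $\pi_1$ has finite second moment, so dominated convergence should suffice.
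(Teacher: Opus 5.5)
You have the same skeleton as the paper: the Bayes decomposition, the Tweedie-type identity $\nabla\log\pi_t(y)=(tv_t(y)-y)/(1-t)$, and the diffusion computation all agree with it. In the diffusion case the reverse SDE is the exact time reversal of the forward SDE, so $(Y_t,Y_1)$ and $(X_{1-t},X_0)$ have the same law.

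\textbf{The gap is the flow-matching Gaussian term $\nabla_y\log p_{t\mid1}(y\mid z)$.} The conditional that matters is the one induced by the generative SDE, i.e.\ its transition kernel from $t$ to $1$. That kernel is what defines $V$ and what the algorithm samples from. You argue that its joint law with $Y_t$ equals the independent interpolation coupling ``because the marginals are matched and the reverse-time construction is exact.'' This fails for flow models. The flow SDE has marginals $\pi_t$ for \emph{every} choice of $\sigma_t$, yet the joint law of $(Y_t,Y_1)$ depends on $\sigma_t$. For $\sigma_t\equiv0$, $Y_1$ is a deterministic function of $Y_t$, so $\pi_1(\cdot\mid Y_t=y)$ has no density at all. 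Already for $\pi_1=\mathcal{N}(0,1)$, the ODE gives $\mathbb{E}[Y_1\mid Y_t=y]=y/\sqrt{(1-t)^2+t^2}$, not the interpolation value $ty/((1-t)^2+t^2)$. Unlike the diffusion case, the flow SDE is not in general the time reversal of the interpolation. Marginal matching alone therefore cannot give you $Y_t\mid Y_1=z\sim\mathcal{N}(tz,(1-t)^2I)$.

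\textbf{The missing ingredient is the memoryless schedule $\sigma_t^2=2(1-t)/t$.} The paper fixes this schedule throughout, and your proposal never uses it; it is exactly what makes the generative SDE reproduce the interpolation's bridge. You can close the gap with the score identity you already derived:
\begin{itemize}[itemsep=0.4mm,leftmargin=8mm]
\item The time reversal $X_s=Y_{1-s}$ of the flow SDE has drift $-v_{1-s}+\tfrac12\sigma_{1-s}^2\nabla\log\pi_{1-s}$.
\item Substitute $\tfrac12\sigma_{1-s}^2=s/(1-s)$ and $\nabla\log\pi_{1-s}(x)=((1-s)v_{1-s}(x)-x)/s$. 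The velocity terms cancel, leaving the linear SDE $\mathrm{d}X_s=-\frac{X_s}{1-s}\,\mathrm{d}s+\sqrt{2s/(1-s)}\,\mathrm{d}W_s$ with $X_0\sim\pi_1$.
\item Solve it, e.g.\ via $Z_s=X_s/(1-s)$. This gives $X_s\mid X_0=z\sim\mathcal{N}((1-s)z,s^2I)$, i.e.\ $Y_t\mid Y_1=z\sim\mathcal{N}(tz,(1-t)^2I)$ for the generative process itself.
\end{itemize}
This is precisely the paper's auxiliary lemma on the conditional distribution of the memoryless flow. With it inserted, the rest of your argument goes through unchanged.
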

See Appendix~\ref{proof:conditional_score} for the proof.

\paragraph{Challenges in Estimating the Optimal Guidance.}
While Doob's $h$-transform provides a principled way to compute the optimal guidance,
there are two main challenges in computing the optimal guidance $g_t^*(y)$.

\emph{Non-linearity of the reward functional:}
First, while $\fdv{R}{\mu}[\mu_1^*](y) = r(y)$ for linear reward functionals, we cannot compute $\fdv{R}{\mu}[\mu_1^*](y)$ for general non-linear reward functionals
because it requires the knowledge of the optimal distribution $\mu_1^*$, which is exactly what we are looking for.

\begin{wrapfigure}[15]{r}{0.33\textwidth}
    \vspace{-4mm}
    \centering
    \includegraphics[width=\linewidth]{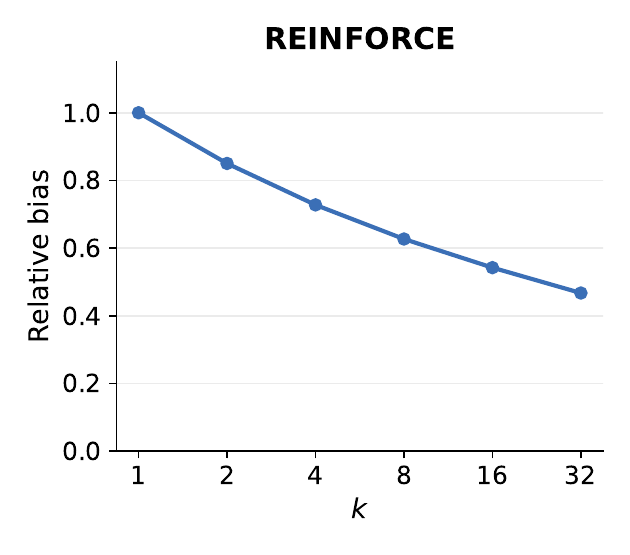}\vspace{-2mm}
    \caption{Relative bias of the REINFORCE estimators for the toy Gaussian problem.}
    \label{fig:bias}
\end{wrapfigure}
\emph{Non-linearity of the guidance:}
Second, even if we can compute $\fdv{R}{\mu}[\mu_1^*]$, we cannot obtain unbiased estimates of $g_t^*(y)$ since a non-linear transformation (logarithm) is applied to the expectation.
Therefore, if we use a finite number of samples, the estimator of $g_t^*(y)$ is biased.
For the plug-in estimator,~\citet{dandapanthula2026we} have shown that the bias of the plug-in estimator leads to suboptimal performance in reward-guided generation.
Here, we show that the REINFORCE estimator may be biased even in the case where the plug-in estimator is unbiased.
Specifically, we consider a simple toy problem, where the terminal distribution is the one-dimensional Gaussian and the reward function is $r(y) = y$.
In this case, the plug-in estimator is unbiased
but the REINFORCE estimator suffers from large bias as shown in Fig.~\ref{fig:bias}.
See Appendix~\ref{sec:experimental_details} for detailed experimental settings.
\vspace{-2mm}
\section{Proposed Method: Steepest Guidance}
\vspace{-2mm}
As shown in the previous section, the optimal guidance $g_t^*(y)$ is difficult to estimate in practice due to the non-linearity of the reward functional and the guidance.
Instead of estimating the globally optimal guidance, we propose to consider the local optimality of the guidance
and design a \textit{steepest guidance} that improves the reward functional in a local manner.
The following proposition is fundamental to analyzing the improvement of the reward functional with respect to the guidance.
\begin{proposition}\label{prop:reward_improvement}
    For any admissible guidance $g_t(y)$, we have
    \begin{align*}
        R[\pi_1^g] \!-\! R[\pi_1] \!=\! \int_{\varepsilon}^1 \! \Expec[\pi_t^g]{g_t(Y_t) \cdot \grad \fdv{V(t, \pi_t^g)}{\mu}(Y_t)} \d t,~
        \kl(\mathbb{P}_{\pi^g} \mid \mathbb{P}_{\pi}) \!=\! \int_{\varepsilon}^1 \!\Expec[\pi_t^g]{\frac{\|g_t(Y_t)\|^2}{2\sigma_t^2}} \d t,
    \end{align*}
    where $V(t, \mu) := R[K_t \mu]$ and $K_t$ is the transition kernel from $t$ to $1$.
\end{proposition}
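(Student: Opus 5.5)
The plan is to treat the two identities separately. The reward identity follows from differentiating a value function along the guided marginals. The KL identity is a direct application of Girsanov's theorem.

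\textbf{Reward identity.} Define $\phi(t) := V(t, \pi_t^g) = R[K_t \pi_t^g]$ for $t \in [\varepsilon,1]$. Since $\pi_\varepsilon^g = \pi_\varepsilon$ and $K_\varepsilon \pi_\varepsilon = \pi_1$, we get $\phi(\varepsilon) = R[\pi_1]$. Since $K_1$ is the identity, $\phi(1) = R[\pi_1^g]$. It therefore suffices to show
\[
\phi'(t) = \mathbb{E}_{\pi_t^g}\Bigl[g_t(Y_t)\cdot \nabla \frac{\delta V(t,\pi_t^g)}{\delta \mu}(Y_t)\Bigr]
\]
and then integrate from $\varepsilon$ to $1$. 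Write $L_t = b_t\cdot\nabla + \frac{\sigma_t^2}{2}\Delta$ for the generator of the unguided process.

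\emph{Step 1.} For fixed $\mu$, the backward Kolmogorov equation gives $\partial_t (K_t \mu) = -K_t(L_t^*\mu)$. This is the statement that pushing $\mu$ forward from time $t$ is the same as first evolving it with the base dynamics over $[t,t+h]$ and then pushing forward from $t+h$. Hence, by the chain rule for functional derivatives,
\[
\partial_t V(t,\mu) = -\int L_t\Bigl[\frac{\delta V(t,\mu)}{\delta\mu}\Bigr]\, d\mu .
\]
This uses $\frac{\delta V(t,\mu)}{\delta\mu}(y) = \int \frac{\delta R}{\delta\mu}[K_t\mu](z)\,K_t(y,dz)$, because $K_t$ is linear in $\mu$.

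\emph{Step 2.} The guided marginals satisfy the Fokker--Planck equation $\partial_t \pi_t^g = (L_t + g_t\cdot\nabla)^*\pi_t^g$. So the chain rule along $t \mapsto \pi_t^g$ contributes
\[
\int (L_t + g_t\cdot\nabla)\Bigl[\frac{\delta V(t,\pi_t^g)}{\delta\mu}\Bigr]\, d\pi_t^g .
\]

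\emph{Step 3.} Adding the contributions from Steps 1 and 2, the $L_t$ terms cancel. What remains is exactly the desired expression for $\phi'(t)$.

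\textbf{Main obstacle.} The hard part is justifying this total-derivative chain rule rigorously. It needs regularity of $\frac{\delta V}{\delta\mu}$ in $y$ and $t$, and enough integrability to differentiate under the integral. I would try to avoid a direct chain rule by working with finite differences. Write $\phi(t+h) - \phi(t) = [V(t+h,\pi^g_{t+h}) - V(t+h, K_{t,t+h}\pi^g_t)]$, using the identity $V(t,\mu) = V(t+h, K_{t,t+h}\mu)$. Then linearize along the mixture path via the definition of the functional derivative. The bounds $\|\nabla \frac{\delta R}{\delta\mu}\|, |\frac{\delta R}{\delta\mu}| \le C_V$ carry over to $\frac{\delta V}{\delta\mu}$ by averaging over $K_t$. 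Admissibility, together with $\sigma_t$ being bounded on $[\varepsilon,1]$, gives $\int \mathbb{E}\|g_t\| \, dt < \infty$ and hence integrability of $\phi'$. The remaining difference between the one-step guided and unguided evolutions is $h\,\mathbb{E}[g_t\cdot\nabla \frac{\delta V}{\delta \mu}] + o(h)$ by Itô's formula applied to the test function $\frac{\delta V(t+h,\cdot)}{\delta\mu}$.

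\textbf{KL identity.} The two processes share the diffusion coefficient $\sigma_t$ and the initial law $\pi_\varepsilon$, and differ only in drift by $g_t$. Girsanov's theorem, valid under the admissibility condition as a finite-energy condition, gives
\[
\log\frac{d\mathbb{P}_{\pi^g}}{d\mathbb{P}_\pi} = \int_\varepsilon^1 \frac{g_t}{\sigma_t}\cdot dW_t^g + \int_\varepsilon^1 \frac{\|g_t\|^2}{2\sigma_t^2}\,dt ,
\]
where $W^g$ is a Brownian motion under $\mathbb{P}_{\pi^g}$. Taking the expectation under $\mathbb{P}_{\pi^g}$, the stochastic integral has zero mean: it is a true martingale because admissibility bounds its quadratic variation in expectation. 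This leaves the claimed formula. If Novikov's condition is not available, I would cite the entropy form of Girsanov (Léonard's result), which requires only finite energy.
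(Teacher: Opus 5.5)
Your proposal is correct and follows the paper's proof. You differentiate $t\mapsto V(t,\pi_t^g)$ and cancel the generator term via the backward Kolmogorov identity $\partial_t V(t,\mu) = -\int \mathcal{L}_t \frac{\delta V(t,\mu)}{\delta\mu}\,d\mu$, which the paper derives from $V(t,\mu)=V(s,\mu_s)$. You then read off the endpoints from $K_\varepsilon\pi_\varepsilon=\pi_1$ and $K_1=\mathrm{id}$, and get the KL identity from Girsanov.

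One small caveat concerns your optional rigor sketch. The bound on $\nabla\frac{\delta R}{\delta\mu}$ does not carry over to $\nabla_y\frac{\delta V}{\delta\mu}$ merely by averaging over $K_t$. Differentiating $y\mapsto\int f(z)\,K_t(y,dz)$ brings in the Jacobian of the stochastic flow (equivalently, a conditional score), which need not be bounded. The paper's argument is formal at this level and does not address such regularity either.
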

See Appendix~\ref{proof:reward_improvement} for the proof.
We observe that this proposition provides the {\it steepest direction} to improve the objective at each time $t$,
and that the functional derivative $\fdv{V(t, \pi_t^g)}{\mu}(Y_t)$ directly appears in the expectation without any non-linear transformation,
in contrast to Doob's $h$-transform.
Similar results can be found for linear functionals in \citet{jiao2025towards}
but we extend them to general reward functionals through a generator-based argument that extends to non-linear functionals.

\vspace{-2mm}
\subsection{Local Improvement of the Reward Functional}
\vspace{-2mm}
Here, we explain the intuitive idea behind our proposed method.
Let $\mu_\tau$ be the marginal distribution of $Y_\tau$ at time $\tau \in [\varepsilon, 1)$.
Then, we consider the following SDE, in which a time-independent guidance is applied over a small interval $[\tau, \tau + \Delta\tau]$ with $\tau + \Delta\tau \leq 1$
\begin{align*}
    \d Y_t = (b_t(Y_t) + g(Y_t))\cdot \d t + \sigma_t \d W_t \quad (t \in [\tau, \tau + \Delta \tau]), \\
    \d Y_t = b_t(Y_t)\cdot \d t + \sigma_t \d W_t \quad (t \in [\tau + \Delta \tau, 1]).
\end{align*}
Intuitively, Proposition \ref{prop:reward_improvement} implies that for a small time interval $\Delta \tau$, we have
\begin{align*}
    \textstyle  R[\mu_1^g] - R[\mu_1] - \frac{1}{\lambda} \kl(\mathbb{P}_{\mu^g} \mid \mathbb{P}_{\mu}) \simeq \Delta\tau \cdot \Expec[\mu_\tau]{g(Y_\tau) \cdot \grad_y \fdv{V(\tau, \mu_\tau)}{\mu}(Y_\tau) - \frac{\|g(Y_\tau)\|^2}{2\lambda\sigma_\tau^2}},
\end{align*}
which is a quadratic function of $g$ and can be maximized by setting $g(y) = \lambda \sigma_\tau^2 \grad_y \fdv{V(\tau, \mu_\tau)}{\mu}(y)$.

Based on the above observation, we propose a \textit{steepest guidance} defined as
\begin{align}
    g_t(y) = \lambda \sigma_t^2 \grad_y \fdv{V(t, \pi_t^g)}{\mu}(y), \label{eq:steepest_guidance}
\end{align}
which is characterized by the local optimality of the reward improvement.
The following theorem shows that the steepest guidance provably improves the reward functional
while controlling the KL divergence between the original and guided processes.
\begin{theorem}\label{thm:improvement_steepest}
    Assume that the steepest guidance defined in Eq.~\eqref{eq:steepest_guidance} is admissible. Then, we have
    \begin{align*}
        \textstyle    R[\pi_1^g] - R[\pi_1] & =\textstyle \lambda \Gamma_1, \quad \kl(\mathbb{P}_{\pi^g} \mid \mathbb{P}_{\pi}) = \frac{\lambda^2 \Gamma_1}{2},
    \end{align*}
    where $\Gamma_t := \int_{\varepsilon}^t \Expec[\pi_t^g]{\sigma_t^2 \left\|\grad_y \fdv{V(t, \pi_t^g)}{\mu}(Y_t)\right\|^2} dt$ for $t \in [\varepsilon, 1]$.
\end{theorem}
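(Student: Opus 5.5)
This follows by substituting the steepest guidance of Eq.~\eqref{eq:steepest_guidance} into the two identities of Proposition~\ref{prop:reward_improvement}. The admissibility assumption lets us invoke that proposition for this $g$. The subtlety is that $g$ is defined implicitly, through the guided marginals $\pi_t^g$. Proposition~\ref{prop:reward_improvement}, however, holds for any admissible guidance, and its integrands are evaluated at exactly the pair $(t,\pi_t^g)$ appearing in the definition of $g$. So no fixed-point argument is needed beyond assuming the guided SDE is well posed, which is implicit in the admissibility hypothesis.

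\textbf{Reward.} Write $\phi_t(y) := \grad_y \fdv{V(t, \pi_t^g)}{\mu}(y)$, so that $g_t = \lambda\sigma_t^2 \phi_t$. Plugging this into the first identity of Proposition~\ref{prop:reward_improvement} gives
\begin{align*}
R[\pi_1^g] - R[\pi_1] = \int_\varepsilon^1 \Expec[\pi_t^g]{\lambda \sigma_t^2 \phi_t(Y_t)\cdot \phi_t(Y_t)} \d t = \lambda \int_\varepsilon^1 \Expec[\pi_t^g]{\sigma_t^2 \|\phi_t(Y_t)\|^2}\d t = \lambda \Gamma_1 .
\end{align*}

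\textbf{KL divergence.} Plugging the same $g$ into the second identity gives
\begin{align*}
\kl(\mathbb{P}_{\pi^g}\mid\mathbb{P}_\pi) = \int_\varepsilon^1 \Expec[\pi_t^g]{\frac{\lambda^2\sigma_t^4\|\phi_t(Y_t)\|^2}{2\sigma_t^2}}\d t = \frac{\lambda^2}{2}\Gamma_1 .
\end{align*}
Both right-hand sides are finite: admissibility says exactly that $\int_\varepsilon^1 \Expec[\pi_t^g]{\|g_t(Y_t)\|^2/\sigma_t^2}\d t = \lambda^2\Gamma_1 < \infty$.

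\textbf{Main obstacle.} The only genuine difficulty is making sure Proposition~\ref{prop:reward_improvement} legitimately applies to this self-referential guidance. Concretely:
\begin{itemize}
\item the map $(t,y)\mapsto \phi_t(y)$ must be measurable;
\item the guided SDE must admit a solution whose marginals are the $\pi_t^g$ used in defining $g$.
\end{itemize}
I would treat both as part of the standing admissibility hypothesis. I would also observe that the proof of Proposition~\ref{prop:reward_improvement} only uses, at each time $t$, the drift and the current marginal, so it does not matter that $g$ depends on the law of the process. With that settled, the theorem is a two-line computation.
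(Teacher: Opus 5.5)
Your proposal is correct and matches the paper's proof: both substitute $g_t = \lambda\sigma_t^2 \nabla_y \frac{\delta V(t,\pi_t^g)}{\delta\mu}$ into the two identities of Proposition~\ref{prop:reward_improvement} and read off $\lambda\Gamma_1$ for the reward and $\lambda^2\Gamma_1/2$ for the KL divergence. Your caveat, that the self-referential guided SDE must be well posed with marginals $\pi_t^g$, is something the paper also leaves implicit in its admissibility assumption.
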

See Appendix~\ref{proof:improvement_steepest} for the proof.
We may interpret this theorem as a functional version of the descent lemma, analogous to the standard analysis of finite-dimensional gradient descent.
It also suggests that a larger $\lambda$ is expected to yield a greater improvement in the reward functional.
Indeed, in Section~\ref{sec:ConvergenceAnalysis}, we establish global convergence for a regularized variant of steepest guidance
by taking sufficiently large $\lambda$, under suitable assumptions.

\vspace{-2mm}
\subsection{Practical Implementation of Steepest Guidance}
\vspace{-2mm}
To implement the steepest guidance in practice, we need to discretize the process and estimate $g_t(y)$ from finite Monte Carlo samples.
First, we focus on the case where $R$ is linear in $\mu$, i.e., $R[\mu] = \int r(y) \mu(dy)$ for some reward function $r: \R^d \to \R$.
In this case, we have $V(t, \mu) = \int r(y_1) K_t \mu(dy_1)$ and $\fdv{V(t, \mu)}{\mu}(y) = E[r(Y_1) \mid Y_t = y]$.
Therefore, the steepest guidance can be estimated as follows:
\begin{align}
    \hat{g}_t^{\mathrm{steepest}}(y) = \lambda \sigma_t^2 \frac{1}{k} \sum_{i=1}^k r(y_i) \grad_y \log \pi_1(y_i \mid Y_t = y), \label{eq:steepest_estimator}
\end{align}
where $y_i$ are i.i.d. samples from the conditional distribution $\pi_1(\cdot \mid Y_t = y)$.
An important property of the above estimator is that it is unbiased, i.e., $\Expec{\hat{g}_t^{\mathrm{steepest}}(y)} = g_t(y)$,
which is in contrast to the case of the optimal guidance.

\begin{remark}
    In practice, it is often computationally expensive to sample from the conditional distribution $\pi_1(\cdot \mid Y_t = y)$.
    Thus, previous works have proposed to use the approximate sampling processes such as GLASS flow~\citep{holderrieth2025glass} and Diamond Map~\citep{holderrieth2026diamond}.
    Under the approximate sampling process, the estimator in Eq.~\eqref{eq:steepest_estimator} is biased but can still be effective in practice, as shown in our experiments.
\end{remark}

In the case where $R$ is non-linear in $\mu$, the population first variation $\fdv{R}{\mu}[\mu_{1,t}](y)$ depends on the unknown terminal distribution $\mu_{1,t} := K_t\pi_t^g$.
Following previous work~\citep{takakura2026inference}, we approximate this first variation using lookahead particles.
Let $\{y^{(j)}\}_{j=1}^N$ be samples in a batch. Then, for each $j = 1, \dots, N$, we generate $k$ i.i.d. samples $\{y_i^{(j)}\}_{i=1}^k$ from the conditional distribution $\pi_1(\cdot \mid Y_t = y^{(j)})$
and define the empirical measure $\hat{\mu}_1 = \frac{1}{Nk} \sum_{i=1}^k \sum_{j=1}^N \delta_{y_i^{(j)}}$.
With a slight abuse of notation, we write $\fdv{R}{\mu}[\hat{\mu}_1](y)$ for the sample-based plug-in approximation obtained
by replacing the population quantities in $\fdv{R}{\mu}[\mu_{1,t}](y)$ with their empirical counterparts.
We then estimate the steepest guidance by
\begin{align}
    \hat{g}_t^{\mathrm{steepest}}(y^{(j)}) = \lambda \sigma_t^2 \frac{1}{k} \sum_{i=1}^k \fdv{R}{\mu}[\hat{\mu}_1](y_i^{(j)}) \grad_y \log \pi_1(y_i^{(j)} \mid Y_t = y^{(j)}). \label{eq:steepest_estimator_nonlinear}
\end{align}
Unlike the linear-reward estimator in Eq.~\eqref{eq:steepest_estimator}, this sample-based estimator is generally biased for finite $N$ because the estimated first variation depends non-linearly on the empirical samples.
We show the detailed algorithm of steepest guidance in Algorithm~\ref{alg:steepest-guidance}.

\section{Regularized Steepest Guidance}
We are often interested in optimizing the terminal KL-regularized objective instead of the original reward functional $R$
to ensure that the generated samples are close to the original distribution:
\begin{align}
    \max_{g} J_\eta(\pi^g_1) := \max_{g} R[\pi^g_1] - \frac{1}{\eta} \kl(\pi^g_1 \mid \pi_1), \label{eq:objective_kl_regularized}
\end{align}
for $\eta > 0$.
From the data-processing inequality, we have $\kl(\pi^g_1 \mid \pi_1) \leq \kl(\mathbb{P}_{\pi^g} \mid \mathbb{P}_{\pi})$.
Therefore, we can obtain the following results:
\begin{corollary}\label{cor:improvement_steepest_kl}
    Applying steepest guidance $g_t(y) = \lambda \sigma_t^2 \nabla_y \fdv{V(t, \pi_t^g)}{\mu}(y)$ with $0 < \lambda \leq 2\eta$,
    we have
    \begin{align*}
        J_\eta(\pi^g_1) - J_\eta(\pi_1)
        \geq \lambda \left(1 - \frac{\lambda}{2\eta}\right) \int_{\varepsilon}^1 \Expec[\pi_t^g]{\sigma_t^2 \left\|\grad_y \fdv{V(t, \pi_t^g)}{\mu}(Y_t)\right\|^2} \d t \geq 0.
    \end{align*}
\end{corollary}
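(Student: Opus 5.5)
The plan is to combine Theorem~\ref{thm:improvement_steepest} with the data-processing inequality already noted before the statement. Since the steepest guidance is assumed admissible (as in Theorem~\ref{thm:improvement_steepest}), that theorem gives the exact identities
\begin{align*}
R[\pi_1^g] - R[\pi_1] = \lambda \Gamma_1, \qquad \kl(\mathbb{P}_{\pi^g} \mid \mathbb{P}_{\pi}) = \frac{\lambda^2 \Gamma_1}{2}.
\end{align*}
Here $\Gamma_1 = \int_\varepsilon^1 \Expec[\pi_t^g]{\sigma_t^2 \|\grad_y \fdv{V(t,\pi_t^g)}{\mu}(Y_t)\|^2}\d t$ is exactly the integral on the right-hand side of the corollary.

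Next, evaluate $J_\eta$ at the unguided terminal law. Since $\kl(\pi_1 \mid \pi_1) = 0$, we have $J_\eta(\pi_1) = R[\pi_1]$. Hence
\begin{align*}
J_\eta(\pi_1^g) - J_\eta(\pi_1) = R[\pi_1^g] - R[\pi_1] - \frac{1}{\eta}\kl(\pi_1^g \mid \pi_1).
\end{align*}
The terminal marginals are pushforwards of the path measures under the evaluation map at $t=1$. The data-processing inequality therefore gives $\kl(\pi_1^g \mid \pi_1) \le \kl(\mathbb{P}_{\pi^g} \mid \mathbb{P}_{\pi}) = \lambda^2\Gamma_1/2$. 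Substituting yields
\begin{align*}
J_\eta(\pi_1^g) - J_\eta(\pi_1) \ge \lambda\Gamma_1 - \frac{\lambda^2}{2\eta}\Gamma_1 = \lambda\left(1 - \frac{\lambda}{2\eta}\right)\Gamma_1.
\end{align*}

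Finally, nonnegativity holds because $\Gamma_1 \ge 0$ as the integral of a nonnegative quantity, and $\lambda(1-\lambda/(2\eta)) \ge 0$ exactly when $0<\lambda\le 2\eta$.

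The only delicate point is making sure both path measures share the initial law $\pi_\varepsilon$, so that the path-space KL is the one computed in Proposition~\ref{prop:reward_improvement}, and checking that the data-processing step is legitimate. The latter is standard, since the KL is finite by admissibility. No genuine obstacle is expected beyond this bookkeeping.
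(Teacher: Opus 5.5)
Your proposal is correct and follows the paper's proof essentially verbatim. You use Theorem~\ref{thm:improvement_steepest} to get the exact reward gain $\lambda\Gamma_1$ and the path-space KL $\lambda^2\Gamma_1/2$, bound the terminal KL by the path KL via data processing, and conclude using $0<\lambda\le 2\eta$; your note that admissibility of the steepest guidance must be assumed, as in the theorem, makes explicit a hypothesis the paper leaves implicit.
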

See Appendix~\ref{proof:improvement_steepest_kl} for the proof.
This corollary shows that the steepest guidance improves the KL-regularized objective for $0 < \lambda \leq 2\eta$.
However, this guarantee constrains the guidance strength relative to the KL regularization parameter,
and thus $\lambda$ cannot be chosen independently of $\eta$.

Instead of controlling the terminal KL divergence via pathwise KL divergence,
we can consider the terminal KL-regularized objective $J_\eta(K_t\mu)$ directly
thanks to our general formulation.
However, computing the first-order variation of $J_\eta(K_t\mu)$ requires the density ratio $\frac{K_t \mu(y)}{K_t \pi_t(y)}$,
which is generally intractable.
To deal with this issue, let us consider the quantity $\mathcal{V}(t, \mu) := V(t, \mu) - \frac{1}{\eta} \kl(\mu \mid \pi_t)$.
From the data-processing inequality, we can show that $\mathcal{V}(t, \mu)$ is a lower bound on $J_\eta(K_t\mu)$.
Based on this observation, we propose to use $\mathcal{V}(t, \mu)$ to construct the following guided SDE:
\begin{align}
    \d Y_t^g & = (b_t(Y_t^g) + g_t(Y_t^g))\cdot \d t + \sigma_t \d W_t, \label{eq:original_sde}
\end{align}
where $$g_t(y)   = \lambda \sigma_t^2 \nabla_y \fdv{\mathcal{V}(t, \pi_t^g)}{\mu}(y) = \lambda \sigma_t^2 \nabla_y \fdv{V(t, \pi_t^g)}{\mu}(y) - \frac{\lambda}{\eta} \sigma_t^2 \nabla_y \log \frac{\pi_t^g(y)}{\pi_t(y)}.$$
Even in this case, the steepest guidance requires the density ratio
but we can construct an SDE that has the same marginal distribution as Eq.~\eqref{eq:original_sde} without computing it.
\begin{proposition}\label{prop:equivalence_sde}
    The following SDE has the same marginal distribution as Eq.~\eqref{eq:original_sde}:
    \begin{align*}
         & ~~ \d Y_t^g   = (b_t(Y_t^g) + g_t(Y_t^g))\cdot \d t + \sqrt{1 + \tfrac{2\lambda}{\eta}} \cdot \sigma_t \cdot \d W_t,                           \\
         & \text{where}~~~ g_t(y)    := \lambda \sigma_t^2 \nabla_y \fdv{V(t, \pi_t^g)}{\mu}(y) + \tfrac{\lambda}{\eta} \sigma_t^2 \grad_y \log \pi_t(y).
    \end{align*}
\end{proposition}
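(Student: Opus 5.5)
Both processes are McKean--Vlasov type SDEs whose drifts depend on their own marginal law. The plan is to show that they share the same Fokker--Planck equation and the same initial law $\pi_\varepsilon$, and then conclude by uniqueness. We write $\rho_t$ for the marginal of the SDE in Eq.~\eqref{eq:original_sde}, so that $\rho_t = \pi_t^g$ appears inside its own drift. Its Fokker--Planck equation is
\begin{align*}
\partial_t \rho_t = -\nabla\cdot\Big(\rho_t\big(b_t + \lambda\sigma_t^2 \nabla \tfrac{\delta V(t,\rho_t)}{\delta\mu} - \tfrac{\lambda}{\eta}\sigma_t^2 \nabla\log\rho_t + \tfrac{\lambda}{\eta}\sigma_t^2\nabla\log\pi_t\big)\Big) + \tfrac{\sigma_t^2}{2}\Delta\rho_t .
\end{align*}
Here we have split $\nabla\log(\rho_t/\pi_t) = \nabla\log\rho_t - \nabla\log\pi_t$.

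The key step is the identity $\rho_t\nabla\log\rho_t = \nabla\rho_t$. It gives
\begin{align*}
\nabla\cdot\big(\rho_t \tfrac{\lambda}{\eta}\sigma_t^2\nabla\log\rho_t\big) = \tfrac{\lambda}{\eta}\sigma_t^2\Delta\rho_t ,
\end{align*}
so the density-dependent part of the drift turns into additional diffusion. After the sign flip, the total diffusion coefficient becomes
\begin{align*}
\tfrac{\sigma_t^2}{2} + \tfrac{\lambda}{\eta}\sigma_t^2 = \tfrac{1}{2}\big(1+\tfrac{2\lambda}{\eta}\big)\sigma_t^2 .
\end{align*}
This is exactly the generator's second-order coefficient for an SDE with noise $\sqrt{1+2\lambda/\eta}\,\sigma_t\,\d W_t$. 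The remaining drift is $b_t + \lambda\sigma_t^2\nabla\frac{\delta V(t,\rho_t)}{\delta\mu} + \frac{\lambda}{\eta}\sigma_t^2\nabla\log\pi_t$, which matches the drift in the proposition. The second SDE's drift depends on its own marginal only through $\nabla\frac{\delta V(t,\cdot)}{\delta\mu}$, and that dependence is identical in both equations. Hence the two marginal flows solve the same nonlinear Fokker--Planck equation with the same initial condition $\pi_\varepsilon$.

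The main obstacle is uniqueness for this nonlinear (McKean--Vlasov) Fokker--Planck equation, since $V$ enters through the unknown marginal. I would handle it by freezing the measure argument. Given the flow $\rho_t$ of the first SDE, consider the linear Fokker--Planck equation with the frozen drift $b_t + \lambda\sigma_t^2\nabla\frac{\delta V(t,\rho_t)}{\delta\mu} + \frac{\lambda}{\eta}\sigma_t^2\nabla\log\pi_t$ and the enlarged diffusion. By the computation above, $\rho_t$ solves this linear equation. I would then argue that this linear equation has a unique solution, using the standing assumptions: $\nabla\frac{\delta V}{\delta\mu}$ is bounded by $C_V$, the coefficients are regular on $[\varepsilon,1]$ with $\varepsilon>0$, and second moments are finite. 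Consequently the marginals of the second SDE, driven by this frozen drift, coincide with $\rho_t$. They are therefore a consistent solution of its self-referential drift, and uniqueness of the mean-field solution, which I assume as in the rest of the paper's well-posedness conventions, yields the claim.
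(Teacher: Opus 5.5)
Your proposal is correct and is essentially the paper's argument. The paper's proof just invokes Lemma~\ref{lem:equivalence_sde} (Corollary 18 of \citet{albergo2025stochastic}) with the process in Eq.~\eqref{eq:original_sde} as the base SDE, its own marginal $\pi_t^g$ in place of $\pi_t$, and $\lambda/\eta$ in place of $\lambda$: the added term $\frac{\lambda}{\eta}\sigma_t^2\nabla\log\pi_t^g$ cancels the density-dependent part of the drift, which is exactly the Fokker--Planck identity $\nabla\cdot(\rho\nabla\log\rho)=\Delta\rho$ you carry out by hand, with the measure argument implicitly frozen as in your uniqueness step. Your well-posedness discussion goes beyond what the paper provides, but note that the standing assumption bounds $\nabla\fdv{R}{\mu}$ rather than $\nabla\fdv{V(t,\cdot)}{\mu}$ (the latter involves the Jacobian of the base flow $K_t$) and that $\nabla\log\pi_t$ is typically unbounded, so uniqueness for the frozen linear Fokker--Planck equation would need its own justification.
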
\vspace{-1mm}
See Appendix~\ref{proof:equivalence_sde} for the proof.
We call the above method \textit{regularized} steepest guidance.
As shown in the following theorem, regularized steepest guidance improves $J_\eta$.
\begin{theorem}\label{thm:improvement_steepest_terminal_kl}
    For the regularized steepest guidance, we have
    \begin{align*}
        J_\eta(\pi_1^g) - J_\eta(\pi_1)
        \geq \lambda \int_\varepsilon^1 \Expec[\pi_t^g]{\sigma_t^2 \left\|\nabla \frac{\delta \mathcal{V}(t, \pi_t^g)}{\delta \mu}(Y_t)\right\|^2} \d t.
    \end{align*}
\end{theorem}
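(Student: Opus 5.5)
Our strategy is to follow the quantity $\Phi(t) := \mathcal{V}(t, \pi_t^g) = V(t,\pi_t^g) - \frac{1}{\eta}\kl(\pi_t^g \mid \pi_t)$ along the guided process and write $J_\eta(\pi_1^g)-J_\eta(\pi_1)$ as the integral of its derivative. The two endpoints are handled as follows. At $t=1$, $K_1$ is the identity, so $\Phi(1) = R[\pi_1^g] - \frac1\eta\kl(\pi_1^g\mid\pi_1) = J_\eta(\pi_1^g)$. At $t=\varepsilon$, we have $\pi_\varepsilon^g = \pi_\varepsilon$ and $K_\varepsilon \pi_\varepsilon = \pi_1$, so $\Phi(\varepsilon) = R[\pi_1] = J_\eta(\pi_1)$. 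Hence it suffices to show
\[
\tfrac{\d}{\d t}\Phi(t) \ge \lambda\, \Expec[\pi_t^g]{\sigma_t^2 \|\nabla \tfrac{\delta \mathcal{V}(t,\pi_t^g)}{\delta\mu}(Y_t)\|^2},
\]
and then integrate over $[\varepsilon,1]$. The data-processing lower bound $\mathcal{V}(t,\mu)\le J_\eta(K_t\mu)$ is not needed, because we only evaluate at the endpoints, where equality holds. By Proposition~\ref{prop:equivalence_sde}, we may work with Eq.~\eqref{eq:original_sde} instead, since only marginals enter.

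\textbf{Derivative of the $V$ part.} We reuse the generator argument behind Proposition~\ref{prop:reward_improvement}. Because $K_t$ is the transition kernel of the unguided process, $V(t,\pi_t)$ is constant in $t$ along unguided marginals. Along guided marginals, the only contribution comes from the extra drift:
\[
\tfrac{\d}{\d t}V(t,\pi_t^g) = \Expec[\pi_t^g]{g_t\cdot\nabla\tfrac{\delta V(t,\pi_t^g)}{\delta\mu}}.
\]
This is the differential form of Proposition~\ref{prop:reward_improvement}, and we take it as given.

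\textbf{Derivative of the KL part.} Both $\pi_t^g$ and $\pi_t$ solve Fokker--Planck equations with the same diffusion $\sigma_t$; the drifts are $b_t+g_t$ and $b_t$ respectively. The standard computation of $\frac{\d}{\d t}\kl(\pi_t^g\mid\pi_t)$ (integration by parts, with the usual decay assumptions justified by finite second moments and entropy) gives
\[
\tfrac{\d}{\d t}\kl(\pi_t^g\mid\pi_t) = \Expec[\pi_t^g]{g_t\cdot\nabla\log\tfrac{\pi_t^g}{\pi_t}} - \tfrac{\sigma_t^2}{2}\Expec[\pi_t^g]{\|\nabla\log\tfrac{\pi_t^g}{\pi_t}\|^2}.
\]
The second term is the relative Fisher information, which is nonpositive.

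\textbf{Combining.} Let $u_t := \nabla\frac{\delta\mathcal{V}(t,\pi_t^g)}{\delta\mu} = \nabla\frac{\delta V}{\delta\mu} - \frac1\eta\nabla\log\frac{\pi_t^g}{\pi_t}$. Combining the two derivatives,
\[
\Phi'(t) = \Expec[\pi_t^g]{g_t\cdot u_t} + \tfrac{\sigma_t^2}{2\eta}\Expec[\pi_t^g]{\|\nabla\log\tfrac{\pi_t^g}{\pi_t}\|^2}.
\]
Substituting $g_t = \lambda\sigma_t^2 u_t$ gives
\[
\Phi'(t) = \lambda\,\Expec[\pi_t^g]{\sigma_t^2\|u_t\|^2} + (\text{nonnegative Fisher term}) \ge \lambda\,\Expec[\pi_t^g]{\sigma_t^2\|u_t\|^2}.
\]
Integrating over $[\varepsilon,1]$ yields the theorem.

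\textbf{Main obstacle.} The hardest step is rigorously justifying the derivative of $V(t,\pi_t^g)$ when $\pi_t^g$ itself depends on the density-ratio guidance. This requires two things: the chain rule for $t\mapsto R[K_t\pi_t^g]$ with the functional derivative bounded by $C_V$, and the KL time-derivative computation, including integrability of $g_t$ (admissibility) and boundary terms. I would first try to reduce this to Proposition~\ref{prop:reward_improvement} applied on subintervals $[\tau,\tau+\Delta\tau]$, and then pass to the limit.
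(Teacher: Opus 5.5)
Your proposal is correct and follows essentially the same route as the paper. Both arguments work with the marginally equivalent SDE~\eqref{eq:original_sde}, derive $\frac{\d}{\d t}\mathcal{V}(t,\pi_t^g)=\lambda\sigma_t^2\Expec[\pi_t^g]{\|u_t\|^2}+\frac{\sigma_t^2}{2\eta}I(\pi_t^g\mid\pi_t)$, drop the nonnegative Fisher term, and integrate using $\mathcal{V}(\varepsilon,\pi_\varepsilon)=J_\eta(\pi_1)$ and $\mathcal{V}(1,\pi_1^g)=J_\eta(\pi_1^g)$. The only difference is bookkeeping: you differentiate $V$ and the KL term separately along the guided flow, whereas the paper first packages the unguided part as $\partial_t\mathcal{V}+\mathfrak{L}_t\mathcal{V}=\frac{\sigma_t^2}{2\eta}I(\mu\mid\pi_t)$ via Lemmas~\ref{lem:kolmogorov_backward} and~\ref{lem:relative_entropy_dissipation} and then adds the guidance term $\Expec[\pi_t^g]{g_t\cdot u_t}$.
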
\vspace{-1mm}
See Appendix~\ref{proof:improvement_steepest_terminal_kl} for the proof.
In contrast to Corollary~\ref{cor:improvement_steepest_kl},
the above theorem ensures the improvement of the KL-regularized objective even when $\lambda > 2\eta$.
\vspace{-1mm}
\subsection{Global Convergence for Concave Reward Functionals}\label{sec:ConvergenceAnalysis}
\vspace{-1mm}
If the reward functional $R$ is concave, i.e., for any $\mu, \nu \in \mathcal{P}$ and $\theta \in [0,1]$, we have
$
    R(\theta \mu + (1-\theta) \nu) \geq \theta R(\mu) + (1-\theta) R(\nu),
$
then we can prove the global convergence of the regularized steepest guidance under some structural assumptions.
For each $t \in [\varepsilon,1]$, let $\pi_t^* \in \argmax_{\mu \in \mathcal P}\mathcal V(t,\mu)$.
Following the literature~\citep{nitanda2022convex,chizatmean}, we assume the {\it uniform log-Sobolev inequality} and additional regularity conditions:
\begin{assumption}\label{assump:lsi}
    (Uniform LSI) For any $t \in [\varepsilon, 1]$, there exists a constant $\alpha_t > 0$ such that for any $\mu \in \mathcal{P}$ and smooth function $g: \R^d \to \R$, we have
    \begin{align*}
        \Expec[\nu_t^\mu]{g^2(Y_t) \log g^2(Y_t)} -
        \Expec[\nu_t^\mu]{g^2(Y_t)} \log E_{\nu_t^\mu}[g^2(Y_t)] \leq \frac{2}{\alpha_t} \Expec[\nu_t^\mu]{\|\grad_y g(Y_t)\|^2},
    \end{align*}
    where $\nu_t^\mu(y) \propto \exp(\eta \fdv{V(t, \mu)}{\mu}(y)) \cdot \pi_t(y)$.
    (Regularity condition) Furthermore, defining
    \begin{align*}
        s_t(y) & := \grad_y \log \frac{\pi_t^*(y)}{\pi_t(y)},~\Phi_t(y) := \norm{s_t(y)}^2, ~\Psi_t(y) := \operatorname{div} s_t(y) + s_t(y) \cdot \grad_y \log \pi_t^*(y),
    \end{align*}
    there exists $C_t < \infty$ such that
    $
        \sup_y \Phi_t(y) \leq C_t,~\sup_y \abs{\Psi_t(y)} \leq C_t.
    $
\end{assumption}
The LSI condition can be established through several technical tools \citep{pmlr-v178-chewi22a};
for example, if $\pi_t$ satisfies the LSI and the oscillation of $\eta \fdv{V(t, \mu)}{\mu}(\cdot)$ is bounded, then the Bakry-Emery and Holley-Stroock arguments yield the LSI condition of $\nu_t^\mu$ \citep{10.1007/BFb0075847,holley1987logarithmic}.
Since $\pi_t~(t < 1)$ is smoothed by a Gaussian, we expect the LSI constant $\alpha_t$ to be larger than that of the target distribution.
Under the above assumption, we have the following result.
\begin{theorem}\label{thm:convergence}
    Assume that $R$ is concave and Assumption~\ref{assump:lsi} holds.
    Then, we have
    \begin{align*}
        J_\eta(\pi_1^*)-J_\eta(\pi_1^g)
         & = O\left(\varepsilon + \frac{\eta}{\lambda^2}
        \int_\varepsilon^1\frac{C_t^2}{\alpha_t^2}w_\lambda(t)\d t\right).
    \end{align*}
    where $w_\lambda(t) := \lambda\sigma_t^2\alpha_t / \eta \cdot \exp(-\frac{\lambda}{\eta} \int_t^1 \sigma_s^2\alpha_s \d s)$
    is a weight function that satisfies $\int_\varepsilon^1 w_\lambda(t) \d t \leq 1$.
\end{theorem}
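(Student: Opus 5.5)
The idea is to track a time-indexed optimality gap, $\Delta_t := \mathcal V(t,\pi_t^*) - \mathcal V(t,\pi_t^g)$ for $t\in[\varepsilon,1]$, and derive a differential inequality for it. We then integrate this inequality and translate the bound at $t=1$ back to $J_\eta$.

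\textbf{Endpoints.} At $t=1$ we have $K_1=\mathrm{id}$, so $\mathcal V(1,\mu)=J_\eta(\mu)$ and $\Delta_1 = J_\eta(\pi_1^*)-J_\eta(\pi_1^g)$. At $t=\varepsilon$ the guided and unguided processes start from the same law, so $\pi_\varepsilon^g=\pi_\varepsilon$. Hence $\Delta_\varepsilon = \max_\mu \mathcal V(\varepsilon,\mu)-V(\varepsilon,\pi_\varepsilon)$. This should be bounded using $|\fdv{R}{\mu}|\le C_V$ together with the Gaussian smoothing of $K_\varepsilon$. Since $\pi_\varepsilon$ is close to $\mathcal N(0,I)$, the kernel $K_\varepsilon$ nearly "forgets" its initial condition under the memoryless schedule, and I would aim to show $\Delta_\varepsilon=O(\varepsilon)$. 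This step is the source of the $O(\varepsilon)$ term.

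\textbf{Differential inequality.} Differentiate $t\mapsto \mathcal V(t,\pi_t^g)$ by the same generator argument as in Proposition~\ref{prop:reward_improvement} and Theorem~\ref{thm:improvement_steepest_terminal_kl}. Along the unguided dynamics $V(t,\pi_t)$ is constant and $\kl(\pi^g_t\mid\pi_t)$ evolves by the usual Fokker--Planck identity. The guidance contributes
\begin{align*}
\frac{\d}{\d t}\mathcal V(t,\pi_t^g)\ \ge\ \lambda\sigma_t^2\,\Expec[\pi_t^g]{\Bigl\|\grad\fdv{\mathcal V(t,\pi_t^g)}{\mu}\Bigr\|^2}.
\end{align*}
Next we lower-bound the right-hand side by $\Delta$. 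Concavity of $R$, which transfers to $V(t,\cdot)$ because $K_t$ is linear, gives the entropy-sandwich bound $\Delta_t\le \frac1\eta\kl(\pi_t^g\mid \nu_t^{\pi_t^g})$, where $\nu_t^\mu$ is the proximal Gibbs measure of Assumption~\ref{assump:lsi}; this is the standard argument of \citet{nitanda2022convex,chizatmean}. The uniform LSI then yields $\frac1\eta\kl\le \frac{1}{2\alpha_t\eta}\Expec{\|\grad\log(\pi_t^g/\nu_t)\|^2} = \frac{\eta}{2\alpha_t}\Expec{\|\grad\fdv{\mathcal V}{\mu}\|^2}$, since $\grad\fdv{\mathcal V}{\mu}=-\frac1\eta\grad\log(\pi^g_t/\nu_t)$.

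\textbf{Moving-target correction.} The maximizer $\pi_t^*$ also moves in time, so $\frac{\d}{\d t}\mathcal V(t,\pi_t^*)$ appears with a sign we do not control. By the envelope theorem this derivative equals the time derivative of the objective with $\mu$ frozen at $\pi_t^*$. Along the base dynamics that derivative reduces to $-\frac1\eta\frac{\d}{\d t}\kl(\pi_t^*\mid\pi_t)$ evaluated under the base flow. Expanding this with the Fokker--Planck equation produces exactly $\sigma_t^2\,\Expec[\pi_t^*]{\Phi_t}$ and $\Expec[\pi_t^*]{\Psi_t}$-type terms, each bounded by a multiple of $C_t$. I expect this step to be the main obstacle. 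A plain $O(\sigma_t^2 C_t)$ drift would not give the stated $C_t^2/(\alpha_t^2\lambda^2)$ scaling. To get that scaling, I would bound the drift against the dissipation, using Young's inequality with the LSI once more, so that the cross term costs $\sigma_t^2 C_t^2 \eta/(\lambda\alpha_t)$-type quantities.

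\textbf{Grönwall.} Combining the steps gives
\begin{align*}
\dot\Delta_t \le -\frac{2\lambda\sigma_t^2\alpha_t}{\eta}\Delta_t + c\,\frac{\sigma_t^2 C_t^2}{\lambda\alpha_t},
\end{align*}
possibly after absorbing half of the dissipation, with constants $\frac{\lambda}{\eta}\sigma_t^2\alpha_t$ as in $w_\lambda$. Grönwall's inequality from $\varepsilon$ to $1$ gives $\Delta_1\le e^{-\cdots}\Delta_\varepsilon+\int_\varepsilon^1 \exp(-\frac\lambda\eta\int_t^1\sigma_s^2\alpha_s\d s)\frac{\sigma_t^2C_t^2}{\lambda\alpha_t}\d t$. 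Rewriting the integrand as $\frac{\eta}{\lambda^2}\frac{C_t^2}{\alpha_t^2}w_\lambda(t)$ gives the claimed bound. The normalization $\int w_\lambda\le1$ follows because $w_\lambda=\frac{\d}{\d t}\exp(-\frac\lambda\eta\int_t^1\sigma_s^2\alpha_s\d s)$.
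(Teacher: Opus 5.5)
There is a genuine gap in your moving-target step. In the differential inequality you write $\frac{\mathrm{d}}{\mathrm{d}t}\mathcal V(t,\pi_t^g)\ge\lambda\sigma_t^2\,\mathbb{E}_{\pi_t^g}\|\nabla\frac{\delta\mathcal V(t,\pi_t^g)}{\delta\mu}\|^2$. This discards the term $\frac{\sigma_t^2}{2\eta}I(\pi_t^g\mid\pi_t)$, which comes from entropy dissipation along the base flow.

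What remains from the optimizer side is $\frac{\mathrm{d}}{\mathrm{d}t}\mathcal V(t,\pi_t^*)=\partial_t\mathcal V(t,\pi_t^*)=\frac{\sigma_t^2}{2\eta}I(\pi_t^*\mid\pi_t)=\frac{\sigma_t^2}{2\eta}\mathbb{E}_{\pi_t^*}[\Phi_t]$. This uses $\mathfrak L_t\mathcal V(t,\pi_t^*)=0$ by first-order optimality, and the $\Psi_t$ term integrates to zero under $\pi_t^*$. The result is a pure source term: it does not depend on $\pi_t^g$ and does not vanish as $\Delta_t\to0$. So there is no cross term for Young's inequality to split, and a second use of the LSI cannot absorb it.

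What your setup actually yields is $\dot{\Delta}_t\le-2a_t\Delta_t+\frac{C_t}{2\lambda\alpha_t}a_t$ with $a_t=\lambda\sigma_t^2\alpha_t/\eta$. After Gr\"onwall this gives only a bound of order $\lambda^{-1}$, linear in $C_t/\alpha_t$, not the stated $\frac{\eta}{\lambda^2}\int_\varepsilon^1\frac{C_t^2}{\alpha_t^2}w_\lambda(t)\,\mathrm{d}t$. Your final differential inequality has the right form, but nothing in your plan produces it.

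The missing idea is to keep the discarded term and let it cancel the moving target:
\[
\dot{\Delta}_t=-\frac{\lambda\sigma_t^2}{\eta^2}I(\pi_t^g\mid\nu_t^g)+\frac{\sigma_t^2}{2\eta}\bigl(I(\pi_t^*\mid\pi_t)-I(\pi_t^g\mid\pi_t)\bigr).
\]
The bracket vanishes when $\pi_t^g=\pi_t^*$. Integration by parts gives $I(\mu\mid\pi_t)=I(\mu\mid\pi_t^*)-2\mathbb{E}_\mu[\Psi_t]+\mathbb{E}_\mu[\Phi_t]$. Boundedness of $\Phi_t$ and $\Psi_t$ (the regularity half of Assumption~\ref{assump:lsi}) together with Pinsker then bounds the bracket by $\frac{5C_t}{\sqrt2}\sqrt{\kl(\pi_t^g\mid\pi_t^*)}$.

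Next you need the \emph{lower} half of the entropy sandwich, $\kl(\pi_t^g\mid\pi_t^*)\le\eta\Delta_t$; you only invoked the upper half. This gives $\dot{\Delta}_t\le-2a_t\Delta_t+2a_th_t\sqrt{\Delta_t}$ with $h_t=\frac{5\sqrt\eta\, C_t}{4\sqrt2\,\lambda\alpha_t}$. From here your Young step is legitimate: $2a_th_t\sqrt{\Delta_t}\le a_t\Delta_t+a_th_t^2$ gives $\dot{\Delta}_t\le-a_t\Delta_t+\frac{25\sigma_t^2C_t^2}{32\lambda\alpha_t}$, hence exactly the claimed bound. The paper instead applies Gr\"onwall to $\sqrt{\Delta_t}$ and then Jensen, arriving at the same constant.

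Separately, your initial-gap step is only a heuristic. The paper obtains it from $\Delta_\varepsilon\le\frac1\eta\kl(\pi_\varepsilon\mid\nu_\varepsilon^{\pi_\varepsilon})$, which is a log-moment-generating function of $\mathbb{E}[\phi(Y_1)\mid Y_\varepsilon]$. Its variance is bounded, via Pinsker, by the mutual information between $Y_1$ and $Y_\varepsilon$, which is $O(\varepsilon)$, and Bennett's inequality closes the bound.
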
\vspace{-2mm}
See Appendix~\ref{proof:convergence} for the proof.
Thus, if $\int_\varepsilon^1 C_t^2 / \alpha_t^2 \cdot w_\lambda(t)\d t$ is finite, by setting $\lambda$ sufficiently large and $\varepsilon$ sufficiently small,
we can ensure that the generated distribution $\pi_1^g$ is close to the optimal distribution $\pi_1^*$.
Note that while the assumptions are similar,
the proof techniques are different from those of mean-field Langevin dynamics~\citep{nitanda2022convex,chizatmean} since the objective functional $\mathcal{V}(t, \mu)$ is time-dependent and
we carefully handle this time-dependency in the proof.
In that sense, this convergence theorem can be viewed as a functional generalization of the well-known convergence analysis for Langevin dynamics \citep{bakry2014analysis}; specifically, it extends to distribution-dependent (mean-field) and time-dependent objective.

\vspace{-2mm}
\section{Numerical Experiments}
\vspace{-2mm}
\begin{figure}[t]
    \centering
    \includegraphics[width=0.9\textwidth]{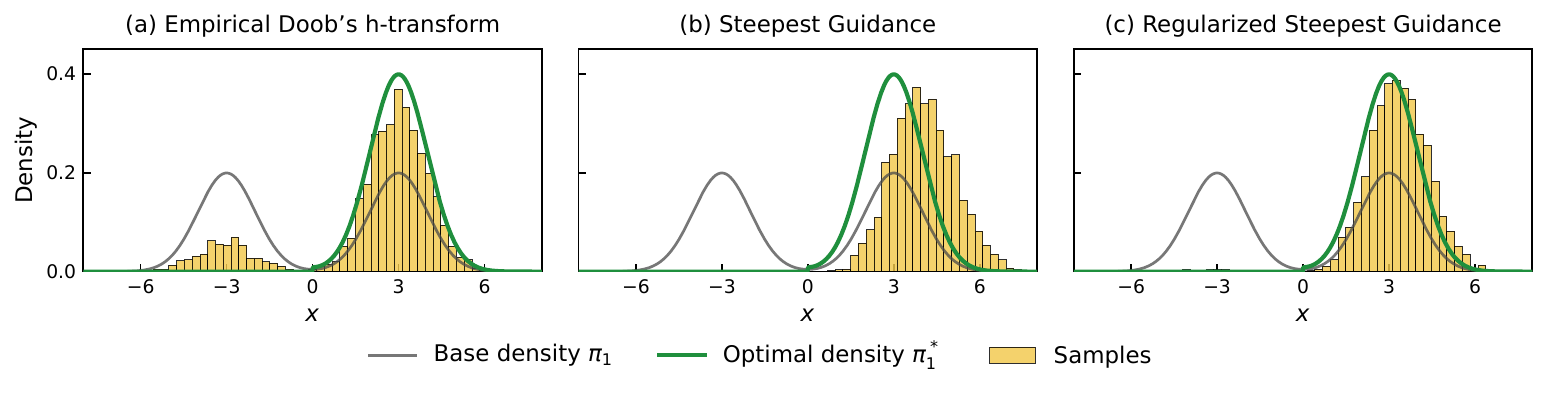}\vspace{-2mm}
    \caption{Toy experiment comparing Doob's $h$-transform (left), Steepest Guidance (middle), and Regularized Steepest Guidance (right).}
    \label{fig:toy_experiment}
    \vspace{-2mm}
\end{figure}

\begin{table*}[t]
    \centering
    \caption{Text-to-image reward summary for a prompt ``A portrait photo of a golden-yellow lion". Results are reported as mean $\pm$ standard deviation over 32 generated images for each model, reward function, and method.}\vspace{-2mm}
    \label{tab:t2i_reward_summary}
    \begin{tabular}{@{}llrrrr@{}}
\toprule
Model & Reward & Unguided & Steepest (Ours) & DOIT & SVDD \\
\midrule
SD & Blueness & $-0.72 \pm 0.06$ & $\mathbf{-0.12} \pm 0.21$ & $-0.69 \pm 0.06$ & $-0.67 \pm 0.06$ \\
 & ImageReward & $-0.39 \pm 0.49$ & $\mathbf{0.86} \pm 0.53$ & $0.10 \pm 0.45$ & $0.68 \pm 0.44$ \\
 & PickScore & $20.97 \pm 0.48$ & $\mathbf{21.90} \pm 0.48$ & $21.38 \pm 0.45$ & $21.84 \pm 0.44$ \\
 & Compressibility & $-14.35 \pm 1.80$ & $\mathbf{-5.25} \pm 1.65$ & $-14.17 \pm 1.74$ & $-13.54 \pm 1.77$ \\
\midrule
FLUX & Blueness & $-0.33 \pm 0.05$ & $\mathbf{-0.13} \pm 0.06$ & $-0.29 \pm 0.04$ & $-0.23 \pm 0.04$ \\
 & ImageReward & $0.05 \pm 0.28$ & $\mathbf{1.66} \pm 0.24$ & $0.45 \pm 0.30$ & $0.92 \pm 0.29$ \\
 & PickScore & $21.45 \pm 0.53$ & $\mathbf{22.62} \pm 0.31$ & $22.10 \pm 0.38$ & $22.42 \pm 0.30$ \\
 & Compressibility & $-8.26 \pm 0.77$ & $\mathbf{-4.68} \pm 0.56$ & $-7.61 \pm 0.67$ & $-6.59 \pm 0.49$ \\
\bottomrule
\end{tabular}
\vspace{-2mm}
\end{table*}

\begin{figure}[t]
    \centering
    \begin{minipage}[t]{0.4\textwidth}
        \centering
        \scalebox{1}[0.9]{\includegraphics[width=\linewidth]{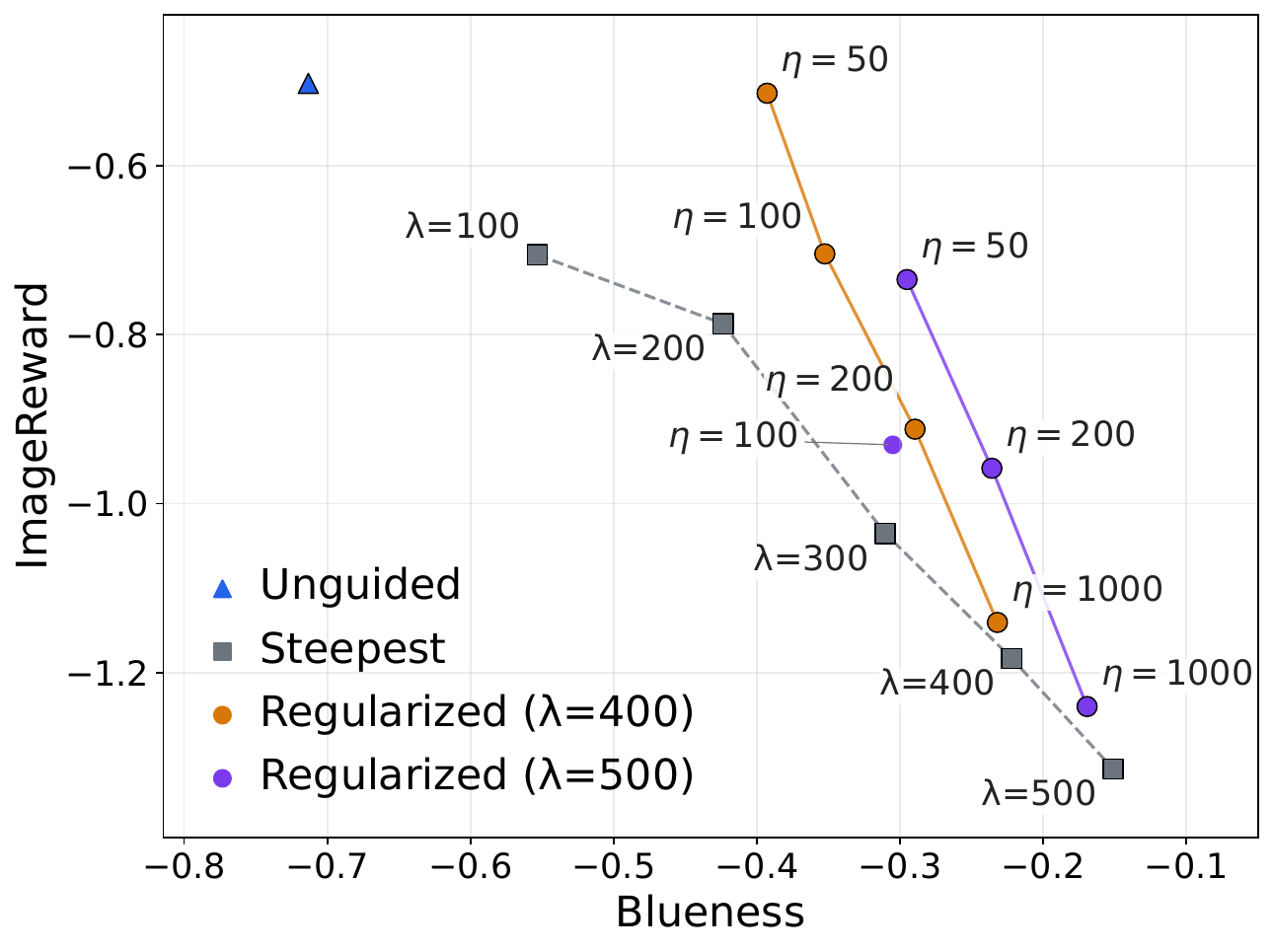}}
        \caption{Trade-off between blueness and image reward for (Regularized) Steepest Guidance.}
        \label{fig:t2i_lion_kl_pareto}
    \end{minipage}
    \hfill
    \begin{minipage}[t]{0.55\textwidth}
        \centering
        \scalebox{1}[0.9]{\includegraphics[width=\linewidth]{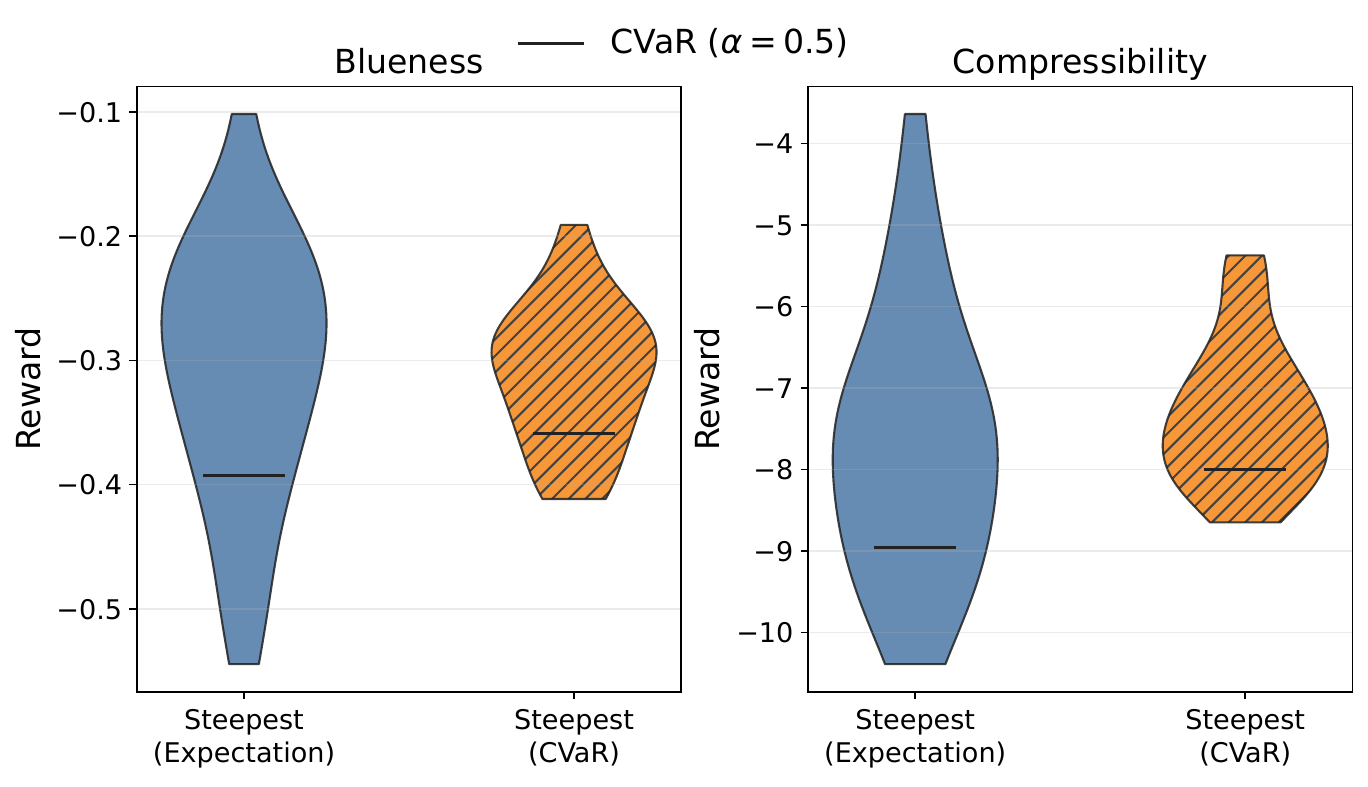}}
        \caption{Comparison between Steepest Guidance with and without CVaR. Horizontal lines indicate the lower-tail CVaR with $\alpha=0.5$.}
        \label{fig:cvar}
    \end{minipage}
    \vspace{-5mm}
\end{figure}

In this section, we evaluate the performance of the proposed steepest guidance method on synthetic and image generation tasks.
\vspace{-1mm}
\subsection{Toy Experiments}
\vspace{-1mm}
First, we consider a toy experiment to illustrate the differences between Steepest Guidance and Doob's $h$-transform with finite lookahead samples.
We consider a simple 1D Gaussian mixture model $\pi_1 = \frac{1}{2} \mathcal{N}(x\mid -3.0, 1.0) + \frac{1}{2} \mathcal{N}(x\mid 3.0, 1.0)$
as the target distribution and a step reward $r(y) = 10 \cdot \mathbf{1}\{y \geq 0\}$.
That is, the objective is to select the right mode in the positive region.

Fig.~\ref{fig:toy_experiment} illustrates the distributions obtained by each method and the optimal distribution derived analytically.
We see that 1) empirical Doob's guidance sometimes fails to select the right mode while ideally it matches the optimal distribution,
2) steepest guidance successfully selects the right mode but the distribution fails to match the optimal distribution,
and 3) regularized steepest guidance successfully approximates the optimal distribution.
The results match our theoretical analysis.


\subsection{Image Generation}
Next, we evaluate the proposed methods on image generation tasks
using Stable Diffusion v1.5 (SD)~\citep{Rombach_2022_CVPR} and FLUX.1 [dev] (FLUX)~\citep{flux2024}.
As reward functions, we consider Blueness (how blue an image is), PickScore~\citep{kirstain2023pick},
ImageReward~\citep{xu2023imagereward}, and Compressibility (scaled negative compression size).
For SD experiments using ImageReward or PickScore, we set $k=8$;
for all other experiments, we use $k=4$.
See Appendix~\ref{sec:experimental_details} for details of the experimental setup
and additional results including results for other prompts, generated images and ablation studies on $k$.

\textbf{Steepest Guidance consistently outperforms baselines:}
Here, we consider expected reward maximization and evaluate the performance of the proposed steepest guidance against
baselines including empirical Doob's transform and SVDD~\citep{li2024derivative}, which is a state-of-the-art selection-based approach.
We tune the hyperparameter $\lambda$ separately for steepest guidance and Doob's $h$-transform for each model and reward,
using random seeds distinct from those used for final evaluation.
Table~\ref{tab:t2i_reward_summary} shows that the steepest guidance outperforms baselines across reward functions and models.

\textbf{Regularized Steepest Guidance improves the reward--quality trade-off:}
We compare the performance of regularized steepest guidance and (vanilla) steepest guidance.
Since the terminal KL divergence is not directly measurable, we use ImageReward as a proxy to assess the extent
to which the guided distribution deviates from the original distribution in terms of prompt alignment.
Fig.~\ref{fig:t2i_lion_kl_pareto} shows that regularized steepest guidance (orange and purple curves) achieves better trade-offs between Blueness and ImageReward than vanilla steepest guidance (gray dotted curve)
by controlling the strength of the regularization parameter $\eta$.

\textbf{Steepest Guidance can handle non-linear rewards:}
To demonstrate the effectiveness of steepest guidance for non-linear reward functionals, we consider CVaR as a reward functional.
We set $\alpha=0.5$. Fig.~\ref{fig:cvar} shows the distribution of the rewards obtained by maximizing the expectation (left) and CVaR (right).
We see that the proposed method achieves better CVaR (horizontal line) compared to maximizing the expectation.

\section{Conclusion}
We proposed \textit{steepest guidance}, a training-free approach to aligning flow and diffusion models.
Different from existing approaches based on Doob's $h$-transform, steepest guidance directly optimizes the reward functional in a local manner.
As a result, our approach simplifies guidance estimation and accommodates general reward functionals.
Theoretically, we established provable improvement guarantees for the ideal dynamics and global convergence under suitable assumptions.
The experiments demonstrated the effectiveness of our proposed method in text-to-image generation tasks.


\subsubsection*{Acknowledgments}
RH was partially supported by JSPS KAKENHI (24K02905) and JST BOOST (JPMJBS2418).
TS was partially supported by JST CREST (PMJCR2015) and JST ERATO (JPMJER2601).
This research is supported by the National Research Foundation,
Singapore and the Ministry of Digital Development and Information
under the AI Visiting Professorship Programme (award number AIVP-2024-004).
Any opinions, findings and conclusions or recommendations expressed in this material
are those of the author(s) and do not reflect the views of National Research Foundation,
Singapore and the Ministry of Digital Development and Information.



\bibliography{iclr2027_conference}
\bibliographystyle{iclr2027_conference}

\appendix
\section{Related Work}\label{sec:related_work}
In this section, we provide a detailed discussion on related work and compare our method with previous works on reward-guided generation.
\subsection{Systematic Comparison with Previous Works}
Here, we show in Table~\ref{tab:comparison} a systematic comparison of our method with previous works on reward-guided generation
from the perspective of three key properties: training-free, derivative-free, and the ability to handle non-linear reward functionals.
\begin{table}[h]
    \centering
    \caption{Systematic comparison with previous works on reward-guided generation.
        (1) Training-free: The method does not require additional training of models.
        (2) Derivative-free: The method does not require gradients of the reward functional or the generative process.
        (3) Non-linear reward functional: The method can handle non-linear reward functionals.
    }
    \label{tab:comparison}
    \begin{tabular}{ccccc}
        \toprule
        Method                                                      & (1)          & (2)          & (3)          \\
        \midrule
        Steepest Guidance (Ours)                                    & $\checkmark$ & $\checkmark$ & $\checkmark$ \\
        DOIT~\citep{zhu2026training}                                & $\checkmark$ & $\checkmark$ & $\times$     \\
        FDC~\citep{de2025flow}                                      & $\times$     & $\times$     & $\checkmark$ \\
        Adjoint Matching~\citep{domingo2024adjoint}                 & $\times$     & $\times$     & $\times$     \\
        Reinforce Adjoint Matching~\citep{bergmeister2026reinforce} & $\times$     & $\checkmark$ & $\times$     \\
        SALD~\citep{nitanda2026slowly}                              & $\checkmark$ & $\checkmark$ & $\times$     \\
        Greedy Guidance~\citep{blasingame2025greed}                 & $\checkmark$ & $\times$     & $\times$     \\
        Gradient Guidance~\citep{guo2024gradient}                   & $\checkmark$ & $\times$     & $\times$     \\
        \bottomrule
    \end{tabular}
\end{table}

\subsection{Other Related Works}
\paragraph{Diversity-seeking Generation}
A line of work~\citep{corso2023particle,vinograd2026diverse,zilberstein2024repulsive} proposes diversity-seeking generation methods
utilizing repulsive forces between particles.
Such diversity-seeking generation methods can be interpreted as an optimization of non-linear reward functionals
but they are not applicable to general reward functionals.

\paragraph{Optimization of Non-linear Reward Functionals}
The optimization of general functionals over probability measures is studied in various contexts,
including training of two-layer neural networks~\citep{nitanda2022convex,chizatmean}, reinforcement learning~\citep{zhang2020variational}, and
inference-aware training of LLMs~\citep{takakura2026inference}.

\paragraph{Relation to SALD}
SALD~\citep{nitanda2026slowly} employs dynamics similar to ours. Indeed, for linear reward functionals,
greedy guidance can be regarded as a special case of SALD since SALD considers general guidance which satisfies $g_1(y) = \grad r(y)$.
However, our approach is completely different in its design principles.
First of all, SALD is designed for sampling from tilted distributions, which is the optimal solution to KL-regularized expected reward maximization.
Therefore, it cannot be applied to general non-linear reward functionals.
On the other hand, greedy guidance is designed from the perspective of optimization of probability measures and can handle general reward functionals.
Furthermore, SALD's guidance does not generally align with a direction that improves the expected reward.
Consequently, its improvement guarantee relies on sufficiently slow annealing.
In practice, such slowdown can significantly affect the efficiency of generation.

\section{Examples of Reward Functionals}\label{sec:examples_reward_functionals}
In this section, we provide representative examples of (non-linear) reward functionals for various applications.
Table~\ref{tab:reward_functionals} summarizes representative reward functionals.
We mainly follow~\citet{de2025flow} and have added several reward functionals that are not included in their work.

For entropy maximization, we cannot directly compute the first-order variation of the entropy functional
since it requires knowledge of the density $\mu(x)$. Thus, we propose to use the following relation instead:
\begin{align*}
    \ent(\mu) = -\kl(\mu \mid \pi_1) - \Expec[x\sim\mu]{\log \pi_1(x)}.
\end{align*}
Utilizing the above relation, entropy maximization can be interpreted as KL-regularized expected reward maximization.
In general, we cannot compute $\log \pi_1(x)$ but we can use the score function $\grad \log \pi_1(x)$.
Thus, if the sampling process is differentiable, we can estimate the guidance using a plug-in estimator and the chain rule.

\begin{table}[h]
    \centering
    \caption{Representative reward functionals for various applications. }
    \label{tab:reward_functionals}
    \renewcommand{\arraystretch}{1.35}
    \begin{tabular}{cc}
        \toprule
        Application & Functional $R[\mu]$                                                                                     \\
        \midrule
        Expected reward
                    & $\Expec[x\sim\mu]{r(x)}$                                                                                \\
        KL divergence
                    & $D_{\mathrm{KL}}(\mu\|\pi_1)
        :=\int \mu(x)\log\frac{\mu(x)}{\pi_1(x)}\,\d x$                                                                       \\
        Conditional value-at-risk
                    & $\Expec[x\sim\mu]{r(x)\mid r(x)\leq q_\alpha}$                                                          \\
        Variance
                    & $\Expec[x\sim\mu]{(r(x)-\Expec[x\sim\mu]{r(x)})^2}$                                                     \\
        Entropy
                    & $\ent[\mu] := -\Expec[x\sim\mu]{\log\mu(x)}$                                                            \\

        Optimal experimental design
                    & $\mathsf{s}\!\left(\Expec[x\sim\mu]{
        \Phi(x)\Phi(x)^\top-\lambda I}\right)$                                                                                \\
        Log-barrier
                    & $-\beta\log\!\left(\Expec[x\sim\mu]{c(x)}-C\right)$                                                     \\

        Maximum mean discrepancy
                    & $\operatorname{MMD}_k(\mu\|\pi_1) :=\|m_\mu-m_{\pi_1}\|,~m_\mu:=\Expec[x\sim\mu]{k(x,\cdot)}$           \\
        Best- or worst-case reward
                    & $\Expec[x_1,\dots, x_N \sim \mu^N]{\max_{i} r(x_i)},~\Expec[x_1\dots, x_N \sim \mu^N]{\min_{i} r(x_i)}$ \\
        \bottomrule
    \end{tabular}
\end{table}

\subsection{First-order Variation of Reward Functionals}
Here, we provide the first-order variation of CVaR and Rao's quadratic entropy, which are used in our experiments.
\begin{lemma}[First-order variation of CVaR]
    Let $R_{\mathrm{CVaR}}[\mu] = \Expec[\mu]{r(Y) \mid r(Y) \leq q_\alpha}$ be the CVaR functional, where $q_\alpha$ is the $\alpha$-quantile of $r(Y)$.
    Assume that $\mu$ has a strictly positive density on $\R^d$ and that the law of $r(Y)$ under $\mu$ has a continuous density $f_\mu$ in a neighborhood of $q_\alpha$, with $f_\mu(q_\alpha)>0$.
    Then, the first-order variation of $R_{\mathrm{CVaR}}$ is given by
    \begin{align*}
        \fdv{R_{\mathrm{CVaR}}}{\mu}[\mu](y) = \frac{1}{\alpha} \cdot \min\{r(y) - q_\alpha, 0\}.
    \end{align*}
\end{lemma}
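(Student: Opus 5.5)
I plan to use the Rockafellar--Uryasev variational representation of CVaR and then apply an envelope (Danskin-type) argument along the mixture path $\mu_\epsilon = (1-\epsilon)\mu + \epsilon\nu$ from the definition of the functional derivative. Concretely, for the lower-tail CVaR at level $\alpha$,
\begin{align*}
    R_{\mathrm{CVaR}}[\mu] = \max_{q \in \R} \ab\{ q + \frac{1}{\alpha}\Expec[\mu]{\min\{r(Y) - q, 0\}} \},
\end{align*}
and the maximizer is $q = q_\alpha(\mu)$. The first step is to verify this identity. The objective $F(q,\mu) := q + \frac{1}{\alpha}\Expec[\mu]{\min\{r(Y)-q,0\}}$ is concave in $q$, with derivative $1 - \frac{1}{\alpha}\Pr_\mu(r(Y) \le q)$. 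This derivative vanishes exactly at the $\alpha$-quantile, and the quantile is unique because $f_\mu(q_\alpha)>0$. Substituting $q=q_\alpha$ and using $\Pr_\mu(r(Y)\le q_\alpha)=\alpha$ (no atom there, by continuity of $f_\mu$) gives
\begin{align*}
    q_\alpha + \frac{1}{\alpha}\Expec[\mu]{(r(Y)-q_\alpha)\mathbf{1}\{r(Y)\le q_\alpha\}} = \Expec[\mu]{r(Y) \mid r(Y)\le q_\alpha}.
\end{align*}

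The second step uses the fact that, for fixed $q$, $F(q,\mu)$ is affine in $\mu$. Hence
\begin{align*}
    \frac{\d}{\d\epsilon}F(q,\mu_\epsilon) = \frac{1}{\alpha}\int \min\{r(y)-q,0\}\,(\nu-\mu)(\d y).
\end{align*}
The envelope theorem then says
\begin{align*}
    \frac{\d}{\d\epsilon}\max_q F(q,\mu_\epsilon)\Big|_{\epsilon=0} = \partial_\epsilon F(q_\alpha(\mu),\mu_\epsilon)\big|_{\epsilon=0},
\end{align*}
which is exactly $\int \frac{1}{\alpha}\min\{r(y)-q_\alpha,0\}\,(\nu-\mu)(\d y)$, i.e.\ the claimed first variation. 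Note that $\min\{r-q_\alpha,0\}$ and the full $\frac1\alpha\min\{\cdot\}$ differ only by the constant freedom implicit in integrating against $\nu-\mu$, so no additive constant matters.

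The main obstacle is justifying the envelope step rigorously. I would do it with two-sided bounds. The lower bound is immediate: $R[\mu_\epsilon]\ge F(q_\alpha(\mu),\mu_\epsilon)$ and $R[\mu] = F(q_\alpha(\mu),\mu)$, so the lower Dini derivative is at least the claimed quantity. For the upper bound, I write
\begin{align*}
    R[\mu_\epsilon]-R[\mu] \le F(q_\epsilon,\mu_\epsilon)-F(q_\epsilon,\mu),
\end{align*}
where $q_\epsilon := q_\alpha(\mu_\epsilon)$. This equals $\epsilon\cdot\frac1\alpha\int\min\{r-q_\epsilon,0\}\,\d(\nu-\mu)$. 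It then suffices to show $q_\epsilon\to q_\alpha$ and to pass to the limit in the integral. Continuity of $q_\epsilon$ follows because the CDF of $r(Y)$ under $\mu_\epsilon$ converges uniformly, and $\mu$'s CDF is strictly increasing near $q_\alpha$ since $f_\mu(q_\alpha)>0$. The limit passage uses $|\min\{r-q_\epsilon,0\}-\min\{r-q_\alpha,0\}|\le|q_\epsilon-q_\alpha|$, which is a uniform Lipschitz bound. Integrability of $r$ under $\mu$ and $\nu$ is needed for the integrals to make sense, and I would assume it implicitly as the standing condition for the CVaR to be finite.
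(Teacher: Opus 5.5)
Your argument is correct in outline and takes a genuinely different route from the paper. The paper works directly with the conditional-expectation form $R_{\mathrm{CVaR}}[\rho]=\alpha^{-1}\int r\,\mathbf{1}\{r\le q\}\,\d\rho$, evaluated at the quantile of $\rho$. It shows $q_\epsilon-q_\alpha=O(\epsilon)$ and identifies $q_0'=(F_\mu(q_\alpha)-F_\nu(q_\alpha))/f_\mu(q_\alpha)$ by implicitly differentiating $F_{\mu_\epsilon}(q_\epsilon)=\alpha$. It then splits the increment into a measure-perturbation term and a quantile-shift term $q_\alpha f_\mu(q_\alpha)(q_\epsilon-q_\alpha)+o(\epsilon)$; the quantile-shift term is what produces the $-q_\alpha$ in the first variation. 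In your Rockafellar--Uryasev/Danskin sandwich, the quantile shift is invisible at first order because $q_\alpha$ maximizes the concave map $q\mapsto F(q,\mu)$, and the $-q_\alpha$ is already built into the integrand. This makes the limit cleaner. You never need a derivative or a rate for $q_\epsilon$, only $q_\epsilon\to q_\alpha$. Moreover, $\nu$ enters only through $\int\min\{r-q_\epsilon,0\}\,\d\nu$, which is $1$-Lipschitz in $q_\epsilon$. The paper's computation is more explicit about where the shift comes from, but it must control $F_\nu$ near $q_\alpha$ to identify $q_0'$.

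There is one step you should close. You verify $R_{\mathrm{CVaR}}[\mu]=\max_q F(q,\mu)$ only at $\mu$, yet both of your bounds use the same identity at $\mu_\epsilon$. With the conditional-expectation definition this needs $\mu_\epsilon(r(Y)\le q_\epsilon)=\alpha$, which can fail when the law of $r(Y)$ under $\nu$ has an atom. For example, if $\nu(r(Y)<q_\alpha)\le\alpha<\nu(r(Y)\le q_\alpha)$, then $q_\epsilon=q_\alpha$ for all small $\epsilon$, while $\mu_\epsilon(r(Y)\le q_\alpha)-\alpha$ is of order $\epsilon$. The conditional expectation then genuinely has a different derivative. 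This is precisely what the hypothesis you never used, that $\mu$ has a strictly positive density, excludes. Since $\nu$ has a density, $\nu\ll\mathrm{Leb}\ll\mu$, so the law of $r(Y)$ under $\nu$ is absolutely continuous with respect to its law under $\mu$. Hence it has no atoms in the neighborhood of $q_\alpha$ where $f_\mu$ exists. Once $q_\epsilon$ lies in that neighborhood, $F_{\mu_\epsilon}$ is continuous at $q_\epsilon$, so $F_{\mu_\epsilon}(q_\epsilon)=\alpha$, and your computation at $\mu$ carries over verbatim to $\mu_\epsilon$. Separately, your aside on additive constants conflates the multiplicative factor $1/\alpha$ with an additive constant. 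The only constant that drops out is the $q$ term of $F$, which integrates to zero against $\nu-\mu$.
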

\begin{proof}
    For $\nu\in\mathcal P$, let $\mu_\epsilon=(1-\epsilon)\mu+\epsilon\nu$, and denote the reward distribution function under a measure $\rho$ by $F_\rho$.
    Write $q=q_\alpha$ and define $q_\epsilon=\inf\{z:F_{\mu_\epsilon}(z)\geq\alpha\}$.

    First, we prove the differentiability of $q_\epsilon$ with respect to $\epsilon$.
    Since
    \begin{align}
        F_{\mu_\epsilon}(q_\epsilon)=(1-\epsilon)F_\mu(q_\epsilon)+\epsilon F_\nu(q_\epsilon) = \alpha, \label{eq:implicit-q}
    \end{align}
    we have $|F_\mu(q_\epsilon)-\alpha|\leq\epsilon$.
    For sufficiently small $\epsilon$, we have
    \begin{align*}
        \frac{f_\mu(q)}{2}|q_\epsilon-q|
        \leq |F_\mu(q_\epsilon)-F_\mu(q)|\leq\epsilon,
    \end{align*}
    where the last inequality follows from $|F_\mu(q_\epsilon)-\alpha|\leq\epsilon$.
    Since $f_\mu(q)>0$, we obtain $q_\epsilon-q=O(\epsilon)$, which establishes the differentiability of $q_\epsilon$ with respect to $\epsilon$ at $\epsilon=0$.
    By differentiating Eq.~\eqref{eq:implicit-q} with respect to $\epsilon$, we obtain
    \begin{align*}
        f_{\mu_\epsilon}(q_\epsilon) q'_\epsilon = F_\mu(q_\epsilon) - F_\nu(q_\epsilon),
    \end{align*}
    where $q'_\epsilon = \frac{\d q_\epsilon}{\d \epsilon}$.

    Let
    \begin{align*}
        H_\rho(a):=\int r(y)\mathbf{1}\{r(y)\leq a\}\rho(\d y).
    \end{align*}
    Note that $R_{\mathrm{CVaR}}[\rho]=\alpha^{-1}H_\rho(q_\alpha)$.
    Then, we have
    \begin{align*}
        R_{\mathrm{CVaR}}[\mu_\epsilon]-R_{\mathrm{CVaR}}[\mu]
         & = \frac{1}{\alpha}\left(H_{\mu_\epsilon}(q_\epsilon)-H_\mu(q)\right)                                                                                \\
         & = \frac{1}{\alpha}\left(\underbrace{H_{\mu_\epsilon}(q_\epsilon)-H_\mu(q_\epsilon)}_{=:D_1}+\underbrace{H_\mu(q_\epsilon)-H_\mu(q)}_{=:D_2}\right).
    \end{align*}
    For $D_1$, we have
    \begin{align*}
        D_1 = H_{\mu_\epsilon}(q_\epsilon)-H_\mu(q_\epsilon) = \epsilon \int r(y)\mathbf{1}\{r(y)\leq q_\epsilon\}(\nu-\mu)(\d y)
    \end{align*}
    and
    \begin{align*}
        \lim_{\epsilon \to 0}\frac{D_1}{\epsilon} = \int r(y)\mathbf{1}\{r(y)\leq q\}(\nu-\mu)(\d y).
    \end{align*}
    For $D_2$, we have
    \begin{align*}
        D_2 = H_\mu(q_\epsilon)-H_\mu(q) = qf_\mu(q)(q_\epsilon-q) + o(|q_\epsilon-q|),
    \end{align*}
    and
    \begin{align*}
        \lim_{\epsilon \to 0}\frac{D_2}{\epsilon} = qf_\mu(q)q'_0 = q (F_\mu(q)-F_\nu(q)) = \int q \mathbf{1}\{r(y)\leq q\}(\mu-\nu)(\d y).
    \end{align*}
    Combining the above results, we obtain
    \begin{align*}
        \lim_{\epsilon \to 0}\frac{R_{\mathrm{CVaR}}[\mu_\epsilon]-R_{\mathrm{CVaR}}[\mu]}{\epsilon} & = \frac{1}{\alpha}\left(\lim_{\epsilon \to 0}\frac{D_1}{\epsilon} + \lim_{\epsilon \to 0}\frac{D_2}{\epsilon}\right) \\
                                                                                                     & = \frac{1}{\alpha}\int (r(y) - q)\mathbf{1}\{r(y)\leq q\}(\nu-\mu)(\d y).
    \end{align*}
    This implies that the first-order variation of $R_{\mathrm{CVaR}}$ with respect to $\mu$ is
    \begin{align*}
        \fdv{R_{\mathrm{CVaR}}}{\mu}[\mu](y) = \frac{1}{\alpha}\min\{r(y)-q,0\}.
    \end{align*}
\end{proof}

\begin{lemma}[First-order variation of Rao's quadratic entropy]
    Let $R_{\mathrm{Rao}}[\mu] = -\frac{1}{2}\int \int k(y, y') \mu(\d y) \mu(\d y')$ be Rao's quadratic entropy functional,
    where $k: \R^d \times \R^d \to \R$ is a symmetric positive definite kernel.
    Then, the first-order variation of $R_{\mathrm{Rao}}$ is given by
    \begin{align*}
        \fdv{R_{\mathrm{Rao}}}{\mu}[\mu](y) = -\int k(y, y') \mu(\d y').
    \end{align*}
\end{lemma}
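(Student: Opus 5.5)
The plan is to compute the derivative along the mixture path $\mu_\epsilon := (1-\epsilon)\mu + \epsilon\nu$ for an arbitrary $\nu \in \mathcal{P}$ directly from the definition of the functional derivative. Bilinearity of the double integral and the symmetry of $k$ will do essentially all the work.

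The first step is to substitute $\mu_\epsilon$ into $R_{\mathrm{Rao}}$ and expand:
\begin{align*}
    R_{\mathrm{Rao}}[\mu_\epsilon]
     & = -\frac{1}{2}\int\int k(y,y')\,(\mu + \epsilon(\nu-\mu))(\d y)\,(\mu + \epsilon(\nu-\mu))(\d y') \\
     & = R_{\mathrm{Rao}}[\mu] - \frac{\epsilon}{2}\left(\int\int k(y,y')(\nu-\mu)(\d y)\mu(\d y') + \int\int k(y,y')\mu(\d y)(\nu-\mu)(\d y')\right) + O(\epsilon^2).
\end{align*}
The $O(\epsilon^2)$ term is $-\frac{\epsilon^2}{2}\int\int k\,(\nu-\mu)(\d y)(\nu-\mu)(\d y')$.

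The second step uses the symmetry $k(y,y') = k(y',y)$ together with Fubini to identify the two cross terms. This gives
\begin{align*}
    \frac{\d}{\d\epsilon}R_{\mathrm{Rao}}[\mu_\epsilon]\Big|_{\epsilon=0} = \int \left(-\int k(y,y')\mu(\d y')\right)(\nu-\mu)(\d y),
\end{align*}
which is exactly the claimed formula in the sense of the paper's definition of the functional derivative.

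The only point needing care is justifying Fubini and the finiteness of every integral involved. This is where I expect the (mild) obstacle to lie. I would handle it by assuming, as is implicit in the paper's standing assumption that $|\fdv{R}{\mu}|\le C_V$, that $k$ is bounded. Alternatively, one could assume that $k$ has at most quadratic growth, so that integrability follows from the finite second moments of measures in $\mathcal{P}$. Under either assumption, all double integrals against the finite signed measures $\mu,\nu,\nu-\mu$ are absolutely convergent. The expansion is then an exact quadratic polynomial in $\epsilon$, so differentiation at $\epsilon = 0$ is immediate.
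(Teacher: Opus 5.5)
Your proposal is correct and follows essentially the same route as the paper. The paper writes $R_{\mathrm{Rao}}[\nu]-R_{\mathrm{Rao}}[\mu]$ as a term linear in $h=\nu-\mu$ (the two cross terms merged via the symmetry of $k$) plus the quadratic remainder $-\frac{1}{2}\int\int k\,h\,h$, which is your mixture-path expansion with $h$ replaced by $\epsilon h$; your integrability remark (bounded $k$, or quadratic growth with finite second moments) adds a detail the paper leaves implicit.
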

\begin{proof}
    For any $\nu \in \mathcal{P}(\R^d)$, let $h = \nu - \mu$. By the symmetry of $k$, we have
    \begin{align*}
        R_{\mathrm{Rao}}[\nu] - R_{\mathrm{Rao}}[\mu]
         & = -\int \int k(y, y') h(\d y) \mu(\d y')
        - \frac{1}{2}\int \int k(y, y') h(\d y) h(\d y').
    \end{align*}
    The first term is linear in $h$, while the second term is quadratic. Therefore,
    \begin{align*}
        \fdv{R_{\mathrm{Rao}}}{\mu}[\mu](y) = -\int k(y, y') \mu(\d y').
    \end{align*}
\end{proof}

\section{Detailed Algorithm of Steepest Guidance}
\begin{algorithm}[H]
    \caption{Steepest Guidance for Reward-Guided Generation}
    \begin{algorithmic}[1]\label{alg:steepest-guidance}
        \STATE Choose a time grid $\varepsilon=t_0<t_1<\cdots<t_L=1$ and set $\Delta t_l=t_{l+1}-t_l$.
        \STATE Sample $M$ particles $\{y_0^{(j)}\}_{j=1}^M$ from $\pi_\varepsilon$ by evolving the unguided base process from its native prior to time $\varepsilon$.
        \FOR{$l = 0, \dots, L-1$}
        \STATE Sample $y_{l,i}^{(j)}$ from $\pi_1(\cdot \mid Y_{t_l}=y_l^{(j)})$ for $j=1,\dots,M$ and $i=1,\dots,k$.
        \STATE Construct the empirical terminal measure $\hat{\mu}_{1,l}=\frac{1}{Mk}\sum_{j=1}^M\sum_{i=1}^k\delta_{y_{l,i}^{(j)}}$.
        \STATE Compute the plug-in approximations $\fdv{R}{\mu}[\hat{\mu}_{1,l}](y_{l,i}^{(j)})$ for all $i$ and $j$, using the abuse of notation introduced in the main text.
        \STATE Estimate the steepest guidance for each $j=1,\dots,M$:
        \begin{align*}
            \hat{g}_{t_l}^{\mathrm{steepest}}(y_l^{(j)})
            = \lambda \sigma_{t_l}^2 \frac{1}{k}\sum_{i=1}^k
            \fdv{R}{\mu}[\hat{\mu}_{1,l}](y_{l,i}^{(j)})
            \grad_{y_l^{(j)}}\log\pi_1(y_{l,i}^{(j)}\mid Y_{t_l}=y_l^{(j)}).
        \end{align*}
        \STATE Update the particles:
        \begin{align*}
            y_{l+1}^{(j)}
            = y_l^{(j)}
            + \left(b_{t_l}(y_l^{(j)})+\hat{g}_{t_l}^{\mathrm{steepest}}(y_l^{(j)})\right)\Delta t_l
            + \sigma_{t_l}\sqrt{\Delta t_l}\,\xi_l^{(j)},
            \quad \xi_l^{(j)}\sim\mathcal N(0,I).
        \end{align*}
        \ENDFOR
        \STATE \textbf{Return:} $\{y_L^{(j)}\}_{j=1}^M$ as the generated samples.
    \end{algorithmic}
\end{algorithm}

\section{Auxiliary Lemmas}
\begin{lemma}[Corollary 18 in~\citep{albergo2025stochastic}]\label{lem:equivalence_sde}
    Let $Y_t$ be the solution of the SDE
    \begin{align*}
        \d Y_t = b_t(Y_t) \d t + \sigma_t \d W_t,
    \end{align*}
    and let $\tilde Y_t$ be the solution of the SDE
    \begin{align*}
        \d \tilde Y_t = (b_t(\tilde Y_t) + g_t(\tilde Y_t)) \d t + \sqrt{1 + 2\lambda} \cdot \sigma_t \d W_t,
    \end{align*}
    where $g_t(y) = \lambda \sigma_t^2 \grad_y \log \pi_t(y)$.
    Then the marginal distribution $\tilde \pi_t$ of $\tilde Y_t$
    coincides with the marginal distribution $\pi_t$ of $Y_t$ for all $t \in [0, 1]$, if $\tilde \pi_0 = \pi_0$.
\end{lemma}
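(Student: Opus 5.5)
The statement to prove is Lemma~\ref{lem:equivalence_sde}, the marginal-preserving noise inflation borrowed from \citet{albergo2025stochastic}. My plan is to compare the Fokker--Planck equations of the two processes and show that, when the guidance is a multiple of the score of the \emph{unguided} marginals, the extra drift cancels the extra diffusion exactly.

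First, I write down the Fokker--Planck equation for $Y_t$:
\begin{align*}
    \partial_t \pi_t = -\operatorname{div}(\pi_t b_t) + \frac{\sigma_t^2}{2}\Delta \pi_t .
\end{align*}
Next I check that $\pi_t$ also solves the Fokker--Planck equation of $\tilde Y_t$. For the SDE with drift $b_t+g_t$ and diffusion coefficient $\sqrt{1+2\lambda}\,\sigma_t$, the right-hand side evaluated at $\pi_t$ is
\begin{align*}
    -\operatorname{div}(\pi_t b_t) - \operatorname{div}(\pi_t g_t) + \frac{(1+2\lambda)\sigma_t^2}{2}\Delta\pi_t .
\end{align*}
The score identity $\pi_t \grad \log \pi_t = \grad \pi_t$ gives
\begin{align*}
    \operatorname{div}(\pi_t g_t) = \lambda\sigma_t^2 \operatorname{div}(\grad \pi_t) = \lambda\sigma_t^2 \Delta \pi_t .
\end{align*}
This exactly cancels the excess term $\lambda\sigma_t^2\Delta\pi_t$. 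The right-hand side therefore reduces to $-\operatorname{div}(\pi_t b_t)+\frac{\sigma_t^2}{2}\Delta\pi_t=\partial_t\pi_t$, so $\pi_t$ is a solution of the Fokker--Planck equation of $\tilde Y$.

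The crucial point is that $g_t$ is a fixed, prescribed vector field built from the reference marginals $\pi_t$, not from $\tilde\pi_t$. Consequently the Fokker--Planck equation for $\tilde Y$ is a \emph{linear} parabolic equation with given coefficients. We have shown that $\pi_t$ solves it, $\tilde\pi_t$ solves it by definition, and the initial conditions agree since $\tilde\pi_0=\pi_0$.

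I expect the main obstacle to be the uniqueness step that closes the argument: concluding $\tilde\pi_t=\pi_t$ from the fact that both solve the same linear equation. My first attempt would invoke uniqueness of weak solutions of linear Fokker--Planck equations in the class of probability-measure-valued curves. This holds under local Lipschitz, linear-growth drift and nondegenerate diffusion, or alternatively under the integrability condition $\int\!\!\int |b_t+g_t|^2\,\d\pi_t\,\d t<\infty$ via the superposition principle (Figalli/Trevisan). If these regularity conditions on $\grad\log\pi_t$ are not available near $t=0$, I would restrict to $t\in[\varepsilon,1]$, where $\sigma_t$ is bounded and $\pi_t$ is Gaussian-smoothed, exactly as the paper does for its theory. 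I would rely on the cited Corollary~18 for the precise hypotheses and not reprove the PDE theory.
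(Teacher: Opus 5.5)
Your proposal is correct and is essentially the argument behind the cited Corollary~18; the paper itself gives no proof beyond that citation. The identity $\operatorname{div}(\pi_t g_t)=\lambda\sigma_t^2\Delta\pi_t$ shows that $\pi_t$ solves the linear Fokker--Planck equation of $\tilde Y$, and uniqueness for that equation with $\tilde\pi_0=\pi_0$ closes the argument. One small precision: on the superposition route, your integrability condition yields uniqueness of the Fokker--Planck solution only together with uniqueness in law for the SDE of $\tilde Y$ (which the lemma's phrase ``the solution'' implicitly assumes), not from integrability alone.
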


\begin{lemma}[Score Function of Flow Models]\label{lem:score_flow}
    For a flow model, the score function $\grad \log \pi_t(y)$ can be expressed as
    \begin{align*}
        \grad \log \pi_t(y) = \frac{tv_t(y) - y}{1-t}
    \end{align*}
    where $v_t(y)$ is the velocity field of the flow model.
\end{lemma}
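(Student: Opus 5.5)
The plan is to write $\pi_t$ in terms of the flow-matching interpolant and express the score through the conditional expectation of the endpoints. For the flow matching model, $Y_t = (1-t)Y_0 + tY_1$ with $Y_0 \sim \mathcal{N}(0,I)$ independent of $Y_1 \sim \pi_1$. Conditionally on $Y_1 = z$, we have $Y_t \mid Y_1 = z \sim \mathcal{N}(tz, (1-t)^2 I)$, so
\[
\pi_t(y) = \int \mathcal{N}(y; tz, (1-t)^2 I)\, \pi_1(\d z).
\]
I would differentiate under the integral sign (Tweedie's formula). This gives
\[
\grad \log \pi_t(y) = -\frac{y - t\,\Expec{Y_1 \mid Y_t = y}}{(1-t)^2}.
\]

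The next step links $\Expec{Y_1 \mid Y_t=y}$ to $v_t$. Since $\odv{Y_t}{t} = Y_1 - Y_0$ and $Y_0 = (Y_t - tY_1)/(1-t)$, we get $Y_1 - Y_0 = (Y_1 - Y_t)/(1-t)$. Hence
\[
v_t(y) = \frac{\Expec{Y_1 \mid Y_t=y} - y}{1-t},
\qquad\text{i.e.}\qquad
\Expec{Y_1 \mid Y_t=y} = y + (1-t)v_t(y).
\]
Substituting this into the Tweedie expression, the numerator becomes
\[
y - ty - t(1-t)v_t(y) = (1-t)\bigl(y - t v_t(y)\bigr).
\]
Dividing by $(1-t)^2$ yields $\grad \log \pi_t(y) = (tv_t(y) - y)/(1-t)$.

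The only points needing care are the differentiation under the integral and the restriction to $t<1$. The Gaussian kernel has a nondegenerate variance $(1-t)^2 > 0$ for $t\in[0,1)$, so its gradient is dominated by an integrable function times the kernel. Together with finiteness of the second moment of $\pi_1$, dominated convergence justifies the interchange. I expect no substantive obstacle; the statement is understood for $t\in[\varepsilon,1)$, where $\pi_t$ is smooth and positive, so the logarithm is well defined.
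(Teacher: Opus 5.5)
Your proposal is correct and follows essentially the same route as the paper: differentiate the Gaussian-mixture representation of $\pi_t$ to get $\grad \log \pi_t(y) = \Expec{(tY_1 - Y_t)/(1-t)^2 \mid Y_t = y}$, then use $Y_t = tY_1 + (1-t)Y_0$ to express this through $v_t(y) = \Expec{Y_1 - Y_0 \mid Y_t = y}$. The only difference is the order of the algebra: you solve for $\Expec{Y_1 \mid Y_t = y}$ first, while the paper substitutes inside the conditional expectation. Your justification of the interchange is a small addition the paper omits, though it does not need the second-moment assumption, since for $t<1$ the gradient of the kernel is already bounded uniformly in $y$ and $z$.
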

\begin{proof}
    From the definition of the flow model, we have $Y_t = t Y_1 + (1 - t) Y_0$, where $Y_0 \sim \mathcal{N}(0, I)$ and $Y_1 \sim \pi_1$.
    Therefore, we have $\frac{Y_t - t Y_1}{1 - t} = Y_0 \sim \mathcal{N}(0, I)$ and
    \begin{align*}
        \pi_t(y) & = \int \pi_t(y \mid Y_1 = z) \pi_1(z) \d z                                         \\
                 & = \int C \cdot \exp\ab(-\frac{1}{2} \norm{\frac{y - t z}{1 - t}}^2) \pi_1(z) \d z,
    \end{align*}
    where $C$ is a normalization constant.
    Differentiating $\pi_t(y)$ with respect to $y$, we have
    \begin{align*}
        \grad \pi_t(y) & = \int C \cdot \exp\ab(-\frac{1}{2} \norm{\frac{y - t z}{1 - t}}^2) \cdot \frac{t z - y}{(1 - t)^2} \pi_1(z) \d z \\
                       & = \int \pi_t(y \mid Y_1 = z) \cdot \frac{t z - y}{(1 - t)^2} \pi_1(z) \d z                                        \\
                       & = \int \pi_1(z \mid Y_t = y) \cdot \frac{t z - y}{(1 - t)^2} \pi_t(y) \d z                                        \\
                       & = \pi_t(y) \cdot \Expec{\frac{tY_1 - Y_t}{(1 - t)^2} \mid Y_t = y}.
    \end{align*}
    Thus, the score function is given by
    \begin{align*}
        \grad \log \pi_t(y) & = \frac{\grad \pi_t(y)}{\pi_t(y)}                                                                         \\
                            & = \Expec{\frac{tY_1 - Y_t}{(1 - t)^2} \mid Y_t = y}                                                       \\
                            & = \Expec{\frac{t\{(1-t)(Y_1 - Y_0) + \overbrace{tY_1 + (1-t)Y_0}^{=Y_t}\} - Y_t}{(1 - t)^2} \mid Y_t = y} \\
                            & = \Expec{\frac{t(1-t)(Y_1 - Y_0) - (1 - t)Y_t}{(1 - t)^2} \mid Y_t = y}                                   \\
                            & = \Expec{\frac{t(Y_1 - Y_0) - Y_t}{(1 - t)} \mid Y_t = y}                                                 \\
                            & = \frac{tv_t(y) - y}{1-t},
    \end{align*}
    where the last equality follows from the definition of the velocity field $v_t(y) = \Expec{Y_1 - Y_0 \mid Y_t = y}$.
\end{proof}

\begin{lemma}[Conditional Distribution of Memoryless Flow]\label{lem:memoryless_flow_conditional}
    Assume that the flow model uses the memoryless noise schedule
    \begin{align*}
        \sigma_t^2 = \frac{2(1-t)}{t}.
    \end{align*}
    Then, for any $t \in [\varepsilon, 1)$,
    \begin{align*}
        Y_t \mid Y_1 = z \sim \mathcal{N}\ab(tz, (1-t)^2 I).
    \end{align*}
\end{lemma}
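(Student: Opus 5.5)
The plan is to identify the law of the memoryless SDE~\eqref{eq:flow_sde} with the linear interpolation $Y_t=(1-t)Y_0+tY_1$ conditionally on the endpoint, and not only marginally. Since the SDE reproduces the correct marginals $\pi_t$, it suffices to show that under the memoryless schedule the joint law of $(Y_t,Y_1)$ produced by the SDE equals that of the interpolation. For the interpolation, conditioning on $Y_1=z$ gives $Y_t=(1-t)Y_0+tz$ with $Y_0\sim\mathcal N(0,I)$ independent of $Y_1$, hence $\mathcal N(tz,(1-t)^2I)$. The statement would then follow immediately.

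To match the joint laws, I would describe the SDE through its time-reversal or its conditional (bridge) structure. With $\sigma_t^2=2(1-t)/t$, Lemma~\ref{lem:score_flow} turns the drift into something affine in $y$ plus a multiple of $E[Y_1\mid Y_t=y]$:
\[
b_t(y)=v_t(y)+\tfrac{\sigma_t^2}{2}\nabla\log\pi_t(y)=v_t(y)+\tfrac{tv_t(y)-y}{t}=2v_t(y)-\tfrac{y}{t}.
\]
Next I would use $v_t(y)=(E[Y_1\mid Y_t=y]-y)/(1-t)$, which comes from $Y_1-Y_0=(Y_1-Y_t)/(1-t)$. From there I would check that the process
\[
\d Y_t=\Bigl(\tfrac{2(z-Y_t)}{1-t}-\tfrac{Y_t}{t}+\ldots\Bigr)\d t+\sigma_t\,\d W_t,
\]
conditioned on terminal value $z$, is the Gaussian bridge whose marginals are $\mathcal N(tz,(1-t)^2I)$. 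Concretely, I would guess the reference process $\d Y_t=-\tfrac{Y_t}{t}\,\d t+\sigma_t\,\d W_t$ (the drift when $\pi_1=\delta_0$). Conditioning it on $Y_1=z$ through Doob's $h$-transform gives a linear SDE, and I would verify by computing the mean and covariance ODEs that its marginals are $\mathcal N(tz,(1-t)^2I)$.

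The cleanest route is to show that the mixture over $z\sim\pi_1$ of these bridges solves the Fokker--Planck equation with drift $b_t$, and that the terminal distribution is $\pi_1$. First, fix $z$ and let $q_t^z=\mathcal N(tz,(1-t)^2I)$. I would check that $q^z$ solves Fokker--Planck for the linear bridge drift $\beta_t^z(y)$, which amounts to matching the mean and variance ODEs: $m'=tz$-derivative and $\frac{\d}{\d t}(1-t)^2=2\cdot(\text{linear coeff})(1-t)^2+\sigma_t^2$. Second, I would average, using $\pi_1(z\mid Y_t=y)\propto q_t^z(y)\pi_1(z)$, and show that $E[\beta_t^{Y_1}(y)\mid Y_t=y]=b_t(y)$. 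This gives the Markovian projection, so the joint law of $(Y_t,Y_1)$ under the SDE is $q_t^z(y)\pi_1(z)$.

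The main obstacle is the uniqueness and Markovian-projection step. The drift $b_t$ alone determines the marginals, but the claim is about the conditional law given $Y_1$, which is path-level information. To identify the Markov process uniquely, I would invoke the standard fact that the mixture of $h$-transformed bridges of the reference diffusion, started from the reference marginal at time $\varepsilon$, is Markov with drift equal to the conditional expectation. Memorylessness is needed here: under $\sigma_t^2=2(1-t)/t$ the reference process started near $t=0$ makes $Y_\varepsilon$ and $Y_1$ asymptotically independent, so the static coupling is the product coupling $\mathcal N(0,I)\otimes\pi_1$, which is exactly the interpolation coupling.
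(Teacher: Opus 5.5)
\textbf{Genuine gap.} Your plan is to write the SDE's path law as a $\pi_1$-mixture of Gaussian bridges and conclude by uniqueness in law. That can be made to work, but the step carrying all the weight is not established, and the concrete ingredient you propose for it is wrong. The reference $\mathrm{d}Y_t=-\frac{Y_t}{t}\mathrm{d}t+\sigma_t\mathrm{d}W_t$ is not the $\pi_1=\delta_0$ process. By your own formula that drift is $-\frac{2y}{1-t}-\frac{y}{t}$, and that process is at $0$ at time $1$ almost surely, so it has no bridges to $z\neq 0$. More importantly, the $h$-transform of your guessed reference does not produce the right bridges. Here $tY_t$ is a martingale, $Y_1\mid Y_t=y\sim\mathcal N\bigl(ty,\frac13(1-t)^2(1+2t)I\bigr)$, and the conditioned drift is $-\frac{y}{t}+\frac{6(z-ty)}{(1-t)(1+2t)}$. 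For this drift, $m_t=tz$ violates the mean ODE at every $t\in[\varepsilon,1)$, so the verification you plan would fail.

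The ``cleanest route'' paragraph, meanwhile, proves only a Markovian-projection statement. Take $\beta^z_t(y)=\frac{2(z-y)}{1-t}-\frac{y}{t}$. It is true that $q^z_t=\mathcal N(tz,(1-t)^2I)$ solves the Fokker--Planck equation for $\beta^z$; the variance ODE is exactly where $\sigma_t^2=2(1-t)/t$ is used. It is also true that $\mathbb{E}[\beta^{Y_1}_t(y)\mid Y_t=y]=b_t(y)$ under the interpolation. But, as you note yourself, this identifies only one-time marginals, not the law of $(Y_t,Y_1)$.

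The missing fact is that the mixture $\int P^z\,\pi_1(\mathrm{d}z)$ is Markov, where $P^z$ is the $\beta^z$-diffusion started from $q^z_\varepsilon$. A reference that delivers this is $\mathrm{d}Y_t=\frac{Y_t}{t}\mathrm{d}t+\sigma_t\mathrm{d}W_t$. For it:
\begin{itemize}
\item $Y_t/t$ is a martingale and $Y_1\mid Y_t=y\sim\mathcal N\bigl(y/t,(1-t)^2t^{-2}I\bigr)$;
\item $\beta^z_t(y)=\frac{y}{t}+\sigma_t^2\nabla_y\log q(t,y;1,z)$;
\item $q^z_\varepsilon(y)/q(\varepsilon,y;1,z)=\varepsilon^{-d}$, independent of $(y,z)$.
\end{itemize}
Hence the mixture's density on $[\varepsilon,t]$, relative to this reference started from Lebesgue measure, is a function of $Y_t$ alone. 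So the mixture is an $h$-transform with initial law $\pi_\varepsilon$ and drift $b_t$, and uniqueness in law for the SDE then gives the claim.

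What is needed is this exact compatibility at time $\varepsilon$. Asymptotic independence of $Y_\varepsilon$ and $Y_1$ is not the mechanism, and it fails at fixed $\varepsilon>0$.

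The paper avoids bridges entirely by time reversal. Under the memoryless schedule, Lemma~\ref{lem:score_flow} collapses the reversed drift to $-x/(1-s)$, independent of $\pi_1$. Conditioning on $Y_1=z$ is then just conditioning a Markov process on its initial value, and the Gaussian law follows from solving a linear SDE.
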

\begin{proof}
    The time reversal of Eq.~\eqref{eq:flow_sde} is given by
    \begin{align*}
        \d X_t = \ab(-v_{1-t}(X_t) + \frac{\sigma_{1-t}^2}{2} \grad \log \pi_{1-t}(X_t)) \d t + \sigma_{1-t} \d W_t,
        \qquad X_0 \sim \pi_1.
    \end{align*}
    Since $\sigma_{1-t}^2 = 2t/(1-t)$, Lemma~\ref{lem:score_flow} yields
    \begin{align*}
        \d X_t
         & = \ab(-v_{1-t}(X_t) + \frac{t}{1-t}
        \cdot \frac{(1-t)v_{1-t}(X_t) - X_t}{t}) \d t
        + \sqrt{\frac{2t}{1-t}} \d W_t         \\
         & = -\frac{X_t}{1-t} \d t
        + \sqrt{\frac{2t}{1-t}} \d W_t.
    \end{align*}
    To solve this linear SDE, define $Z_s := X_s/(1-s)$.
    Then, we have
    \begin{align*}
        \d Z_s
         & = \frac{1}{1-s}\d X_s + \frac{X_s}{(1-s)^2}\d s                                \\
         & = \frac{1}{1-s}\left(-\frac{X_s}{1-s}\d s + \sqrt{\frac{2s}{1-s}}\d W_s\right)
        + \frac{X_s}{(1-s)^2}\d s                                                         \\
         & = \sqrt{\frac{2s}{(1-s)^3}}\d W_s.
    \end{align*}
    Conditional on $X_0=z$, we have $Z_0=z$. Integrating the above equation from $0$ to $s$ and multiplying by $1-s$ gives
    \begin{align*}
        X_s = (1-s)z
        + (1-s)\int_0^s \sqrt{\frac{2r}{(1-r)^3}} \d W_r.
    \end{align*}
    The stochastic integral is centered Gaussian with covariance
    \begin{align*}
        (1-s)^2 \int_0^s \frac{2r}{(1-r)^3}\d r \cdot I
        = s^2 I.
    \end{align*}
    Hence,
    \begin{align*}
        X_s \mid X_0=z \sim \mathcal{N}\ab((1-s)z,s^2I).
    \end{align*}
    Under the time-reversal coupling, $X_s=Y_{1-s}$ and $X_0=Y_1$.
    Taking $s=1-t$ proves
    \begin{align*}
        Y_t \mid Y_1=z \sim \mathcal{N}\ab(tz,(1-t)^2I).
    \end{align*}
\end{proof}

\begin{lemma}[Kolmogorov Backward Equation]\label{lem:kolmogorov_backward}
    Let $K_t$ be the transition kernel of the base SDE from time $t$ to time $1$, and define
    $V(t, \mu) := R[K_t\mu]$. Then, for any
    $t \in [\varepsilon, 1)$ and $\mu \in \mathcal{P}(\mathcal{Y})$,
    \begin{align*}
        \partial_t V(t, \mu) + \mathfrak{L}_t V(t, \mu) = 0,
    \end{align*}
    where
    \begin{align*}
        \mathfrak{L}_t V(t, \mu) & := \Expec[\mu]{\mathcal{L}_t \fdv{V(t, \mu)}{\mu}(y)},         \\
        \mathcal{L}_t f(y)       & := b_t(y) \cdot \grad f(y) + \frac{\sigma_t^2}{2} \Delta f(y).
    \end{align*}
\end{lemma}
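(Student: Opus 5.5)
The plan is to exploit the flow property of the base process, $K_t\mu = K_{t+h}(P_{t,t+h}\mu)$, where $P_{t,t+h}$ is the transition kernel of the base SDE from $t$ to $t+h$. From it, $V(t,\mu) = V(t+h, P_{t,t+h}\mu)$ for all small $h>0$. Differentiating this identity in $h$ at $h=0$ should give the backward equation.

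The first step is to write
\[
0 = V(t+h, P_{t,t+h}\mu) - V(t,\mu) = \bigl[V(t+h,P_{t,t+h}\mu) - V(t+h,\mu)\bigr] + \bigl[V(t+h,\mu) - V(t,\mu)\bigr],
\]
divide by $h$, and let $h\downarrow 0$. The second bracket gives $\partial_t V(t,\mu)$; its existence follows from the identity itself once the first limit is shown to exist.

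For the first bracket, I would use the chain rule for functional derivatives along the curve $s\mapsto P_{t,t+s}\mu$, which solves the Fokker--Planck equation $\partial_s \rho_s = \mathcal{L}_{t+s}^*\rho_s$. Concretely, for a fixed functional $F = V(t+h,\cdot)$, I would interpolate with the mixture $\mu_\theta = (1-\theta)\mu + \theta P_{t,t+h}\mu$. By the definition of the functional derivative and the fundamental theorem of calculus,
\[
F[P_{t,t+h}\mu] - F[\mu] = \int_0^1 \int \fdv{F}{\mu}[\mu_\theta](y)\,(P_{t,t+h}\mu - \mu)(\d y)\,\d\theta .
\]
Dynkin's formula gives $\int f\,\d(P_{t,t+h}\mu - \mu) = \int_0^h \Expec[P_{t,t+s}\mu]{\mathcal{L}_{t+s} f}\,\d s$ for smooth bounded $f$ with bounded derivatives. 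Dividing by $h$ and letting $h\to0$, using continuity of $\fdv{V(t',\nu)}{\mu}$ in $(t',\nu)$ and $\mu_\theta\to\mu$, the limit becomes $\Expec[\mu]{\mathcal{L}_t \fdv{V(t,\mu)}{\mu}}$. Combining the two brackets then gives $\partial_t V + \mathfrak{L}_t V = 0$.

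The main obstacle is justifying the limit interchange. This needs $\fdv{V(t,\mu)}{\mu}$ to be $C^2$ in $y$ with controlled growth, so that the generator applies and Dynkin's formula holds, together with joint continuity in $(t,\mu)$. I would handle it via the representation $\fdv{V(t,\mu)}{\mu}(y) = \int \fdv{R}{\mu}[K_t\mu](z)\,K_t(y,\d z)$, which follows from linearity of $K_t$ in $\mu$. This representation makes the first variation a backward-heat-type smoothing of the bounded function $\fdv{R}{\mu}$. Since $\sigma_t>0$ on $[\varepsilon,1)$, the kernel $K_t$ smooths, and the boundedness assumptions on $\fdv{R}{\mu}$ and its gradient, combined with the finite second moments of $\pi_t$, give the needed integrability. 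A cleaner alternative, which I would present if the direct argument gets heavy, is the linear case. There $\fdv{V}{\mu}(y) = \Expec{\fdv{R}{\mu}(Y_1)\mid Y_t=y} =: u_t(y)$ satisfies the classical backward equation $\partial_t u_t + \mathcal{L}_t u_t = 0$. The general case then follows by freezing the outer measure $K_t\mu$ in $\fdv{R}{\mu}$, whose variation contributes only at second order.
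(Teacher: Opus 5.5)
Your proposal is correct and follows the paper's argument: the paper also starts from the Chapman--Kolmogorov identity $V(t,\mu)=V(s,\mu_s)$ for $s\in[t,1]$ and differentiates at $s=t$, using the chain rule for functionals of measures together with the Fokker--Planck equation and integration by parts. That is exactly your two-bracket split, with Dynkin's formula playing the role of the dual Fokker--Planck step. Where the paper performs this differentiation formally, you additionally spell out the mixture interpolation, the limit interchange, and the regularity of the representation $\int \frac{\delta R}{\delta\mu}[K_t\mu](z)\,K_t(y,dz)$, all of which the paper implicitly assumes.
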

\begin{proof}
    The adjoint of $\mathcal{L}_t$ is defined as
    \begin{align*}
        \mathcal{L}_t^*\rho(y) := -\operatorname{div}(b_t(y)\rho(y)) + \frac{\sigma_t^2}{2}\Delta\rho(y).
    \end{align*}
    Fix $t \in [\varepsilon, 1)$ and $\mu \in \mathcal{P}(\mathcal{Y})$, and let $(\mu_s)_{s \in [t, 1]}$ be the marginals of the base SDE with $\mu_t = \mu$.
    By the Chapman--Kolmogorov property, $V(t, \mu) = R[\mu_1] = V(s, \mu_s)$ for every $s \in [t, 1]$.
    Differentiating this identity with respect to $s$ at $s=t$, and using the Fokker--Planck equation $\partial_s \mu_s = \mathcal{L}_s^*\mu_s$, gives
    \begin{align*}
        0
         & = \partial_t V(t, \mu) + \int \fdv{V(t, \mu)}{\mu}(y) \mathcal{L}_t^*\mu(\d y) \\
         & = \partial_t V(t, \mu) + \Expec[\mu]{\mathcal{L}_t \fdv{V(t, \mu)}{\mu}(y)}    \\
         & = \partial_t V(t, \mu) + \mathfrak{L}_t V(t, \mu),
    \end{align*}
    where the second equality follows from integration by parts. This proves the claim.
\end{proof}

\begin{lemma}[Relative Entropy Dissipation Formula]\label{lem:relative_entropy_dissipation}
    Let $\mu_\tau$ and $\pi_\tau$ be the marginal distributions of $Y_\tau$ at time $\tau \in [\varepsilon, 1]$.
    Assume that $\mu_t$ and $\pi_t$ for $t \in [\tau, 1]$ are defined by the marginal distributions of the SDE:
    \begin{align*}
        \d Y_t = b_t(Y_t) \d t + \sigma_t \d W_t,
    \end{align*}
    with initial distributions $\mu_\tau$ and $\pi_\tau$, respectively.
    Then, for almost every $t \in (\tau, 1)$, we have
    \begin{align*}
        \odv{\kl(\mu_t \mid \pi_t)}{t} = -\frac{\sigma_t^2}{2} \Expec[\mu_t]{\left\|\grad_y \log \frac{\d\mu_t}{\d\pi_t}(Y_t)\right\|^2}.
    \end{align*}
\end{lemma}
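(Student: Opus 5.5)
The plan is to differentiate $\kl(\mu_t \mid \pi_t)$ along the two Fokker--Planck flows and integrate by parts, the standard de Bruijn-type computation. Both $\mu_t$ and $\pi_t$ solve $\partial_t \rho_t = \mathcal{L}_t^* \rho_t = -\operatorname{div}(b_t \rho_t) + \frac{\sigma_t^2}{2}\Delta \rho_t$ on $(\tau,1)$, with the same generator $\mathcal{L}_t$ used in Lemma~\ref{lem:kolmogorov_backward}. Write $h_t := \d\mu_t/\d\pi_t$, so that
\[
\kl(\mu_t \mid \pi_t) = \int \mu_t \log \mu_t - \int \mu_t \log \pi_t .
\]
Formally differentiating in $t$ and using $\int \partial_t \mu_t = 0$ gives
\[
\odv{}{t}\kl(\mu_t\mid\pi_t) = \int \partial_t\mu_t \,\log h_t \,\d y - \int \mu_t \frac{\partial_t \pi_t}{\pi_t}\,\d y .
\]

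Next I substitute the Fokker--Planck equations into both terms. For the first term, integration by parts gives
\[
\int \mathcal{L}_t^*\mu_t \,\log h_t = \int \mu_t\, \mathcal{L}_t \log h_t .
\]
For the second term, I write $\int h_t\, \mathcal{L}_t^*\pi_t = \int \pi_t\, \mathcal{L}_t h_t$. The drift parts then cancel: $\int \mu_t\, b_t\cdot\grad\log h_t = \int \pi_t\, b_t\cdot \grad h_t$. This leaves
\[
\frac{\sigma_t^2}{2}\left(\int \mu_t \Delta \log h_t - \int \pi_t \Delta h_t\right).
\]
Using $\Delta \log h = \Delta h/h - \|\grad \log h\|^2$, the two Laplacian terms cancel. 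What remains is exactly
\[
-\frac{\sigma_t^2}{2}\, \Expec[\mu_t]{\|\grad\log h_t\|^2},
\]
which is the claimed identity. An equivalent and cleaner route is to note that $\partial_t h_t = \mathcal{L}_t h_t + \sigma_t^2\, \grad\log\pi_t\cdot\grad h_t$ (the generator of the time reversal) and compute $\odv{}{t}\int \pi_t\, h_t\log h_t$. I would present whichever version needs fewer boundary terms.

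The main obstacle is justifying these formal manipulations: differentiating under the integral, vanishing of boundary terms at infinity, and finiteness of the Fisher information. This is why the statement only claims the identity for almost every $t$. I would first prove the integrated version
\[
\kl(\mu_{t_2}\mid\pi_{t_2}) - \kl(\mu_{t_1}\mid\pi_{t_1}) = -\int_{t_1}^{t_2}\frac{\sigma_t^2}{2}\Expec[\mu_t]{\|\grad\log h_t\|^2}\,\d t
\]
for $\tau<t_1<t_2<1$, and then deduce the a.e.\ derivative by Lebesgue differentiation. Three facts drive the justification:
\begin{itemize}
\item $\sigma_t$ is bounded away from $0$ and $\infty$ on $[t_1,t_2]\subset(\tau,1)$, since $\varepsilon>0$.
\item The Gaussian noise makes $\mu_t$ and $\pi_t$ smooth and positive for $t>\tau$.
\item The second moments are finite by assumption.
\end{itemize}
Given these, the boundary terms can be handled by a cutoff-function argument. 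Alternatively, I would invoke the standard Wasserstein gradient-flow/Fokker--Planck entropy dissipation result (e.g.\ Bogachev--Krylov--Röckner--Shaposhnikov, or the Ambrosio--Gigli--Savaré framework) applied to the relative entropy. Since the paper treats such regularity informally elsewhere, citing that result for the rigorous justification is acceptable.
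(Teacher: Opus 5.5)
Your main computation is correct and is essentially the paper's proof. Both arguments differentiate the relative entropy, substitute the Fokker--Planck equation for both marginals, and integrate by parts to reach $\int \mu_t\,\mathcal{L}_t \log h_t - \int \pi_t\,\mathcal{L}_t h_t$. The paper then concludes with the single identity $\mathcal{L}_t \log h = \mathcal{L}_t h / h - \frac{\sigma_t^2}{2}\|\nabla \log h\|^2$, which is exactly your drift cancellation together with your Laplacian identity. Your discussion of the integrated identity and the boundary terms supplies justification that the paper omits.

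There is one slip in your aside. The density ratio actually evolves by $\partial_t h_t = -b_t\cdot\nabla h_t + \sigma_t^2\nabla\log\pi_t\cdot\nabla h_t + \frac{\sigma_t^2}{2}\Delta h_t$, with the drift entering with a minus sign. This matches your ``time-reversal generator'' description, but it is not $\mathcal{L}_t h_t + \sigma_t^2\nabla\log\pi_t\cdot\nabla h_t$ as you wrote. With your formula the drift terms in $\frac{d}{dt}\int \pi_t h_t\log h_t$ would add instead of cancel, so that alternative route fails as written.
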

\begin{proof}
    Let $f_t := \d\mu_t/\d\pi_t$. Both $\mu_t$ and $\pi_t$ satisfy the Fokker--Planck equation
    $\partial_t \rho_t = \mathcal{L}_t^*\rho_t$.
    Hence, differentiating the relative entropy and using integration by parts gives
    \begin{align*}
        \odv{\kl(\mu_t \mid \pi_t)}{t}
         & = \int \log f_t(y) \mathcal{L}_t^*\mu_t(\d y)
        - \int f_t(y) \mathcal{L}_t^*\pi_t(\d y)         \\
         & = \Expec[\mu_t]{\mathcal{L}_t \log f_t(Y_t)}
        - \Expec[\pi_t]{\mathcal{L}_t f_t(Y_t)}.
    \end{align*}
    Since
    \begin{align*}
        \mathcal{L}_t \log f_t
        = \frac{\mathcal{L}_t f_t}{f_t} - \frac{\sigma_t^2}{2}\norm{\grad_y \log f_t}^2,
    \end{align*}
    the preceding display becomes
    \begin{align*}
        \odv{\kl(\mu_t \mid \pi_t)}{t}
         & = -\frac{\sigma_t^2}{2}\Expec[\mu_t]{\norm{\grad_y \log f_t(Y_t)}^2},
    \end{align*}
    which is the desired result.
\end{proof}

\begin{lemma}[Optimality Gap Bounds]\label{lem:regularized_functional_gap}
    Fix $t \in [\varepsilon, 1]$, and assume that $V(t, \cdot)$ is concave.
    Then, for every $\mu \in \mathcal{P}$, $\pi_t^* \in \argmax_{\mu \in \mathcal{P}}\mathcal{V}(t, \mu)$ satisfies
    \begin{align*}
        \kl(\mu \mid \pi_t^*)
        \leq \eta\left(\mathcal{V}(t, \pi_t^*) - \mathcal{V}(t, \mu)\right)
        \leq \kl(\mu \mid \nu_t^\mu).
    \end{align*}
\end{lemma}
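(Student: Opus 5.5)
The plan is the standard mean-field (entropic) argument: write everything through the Gibbs measures $\nu_t^\mu \propto \exp(\eta \fdv{V(t,\mu)}{\mu})\pi_t$, and combine the two tangent-line inequalities that concavity of $V(t,\cdot)$ provides with exact entropy identities. The first step is the first-order optimality condition for the maximizer. Perturb along the mixture $\pi_t^* + \epsilon(\rho - \pi_t^*)$ for an arbitrary $\rho \in \mathcal{P}$. Maximality then gives
\begin{align*}
\int \Big(\eta \fdv{V(t,\pi_t^*)}{\mu} - \log \frac{\pi_t^*}{\pi_t}\Big)\,\d(\rho - \pi_t^*) \le 0
\quad\text{for all }\rho .
\end{align*}
From this I will conclude that $\pi_t^*$ has full support and that the integrand is constant $\pi_t^*$-a.e., i.e. $\pi_t^* = \nu_t^{\pi_t^*}$. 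I expect this to be the main technical obstacle. The delicate parts are the differentiability of the entropy term along mixtures, handled via convexity of $x\log x$ and monotone convergence, and ruling out regions where $\pi_t^*$ vanishes: there $-\log(\pi_t^*/\pi_t)=+\infty$, so moving mass onto them would strictly increase $\mathcal{V}$. The bound $|\fdv{V}{\mu}|\le C_V$, inherited from the standing bound on $\fdv{R}{\mu}$, keeps everything finite and makes $\nu_t^\mu$ well defined.

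\emph{Lower bound.} Concavity of $V(t,\cdot)$ at $\pi_t^*$ gives $V(t,\mu) \le V(t,\pi_t^*) + \int \fdv{V(t,\pi_t^*)}{\mu}\,\d(\mu - \pi_t^*)$. Hence
\begin{align*}
\eta\big(\mathcal{V}(t,\pi_t^*) - \mathcal{V}(t,\mu)\big) \ge -\int \eta\fdv{V(t,\pi_t^*)}{\mu}\,\d(\mu-\pi_t^*) + \kl(\mu\mid\pi_t) - \kl(\pi_t^*\mid\pi_t).
\end{align*}
Now substitute $\eta \fdv{V(t,\pi_t^*)}{\mu} = \log(\pi_t^*/\pi_t) + \log Z^*$, where $Z^*$ is the normalizing constant; the constant integrates to zero against the signed measure $\mu - \pi_t^*$. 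The right-hand side becomes
\begin{align*}
\int \mu \log\frac{\mu}{\pi_t} - \int \mu \log \frac{\pi_t^*}{\pi_t} = \kl(\mu\mid\pi_t^*),
\end{align*}
because the two $\pi_t^*$-terms cancel exactly.

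\emph{Upper bound.} Use concavity at $\mu$ instead: $V(t,\pi_t^*) \le V(t,\mu) + \int \fdv{V(t,\mu)}{\mu}\,\d(\pi_t^*-\mu)$. Writing $\eta\fdv{V(t,\mu)}{\mu} = \log(\nu_t^\mu/\pi_t) + \log Z_\mu$ and dropping the constant, we get
\begin{align*}
\eta\big(\mathcal{V}(t,\pi_t^*) - \mathcal{V}(t,\mu)\big) \le \int \log\frac{\nu_t^\mu}{\pi_t}\,\d(\pi_t^*-\mu) - \kl(\pi_t^*\mid\pi_t) + \kl(\mu\mid\pi_t) = \kl(\mu\mid\nu_t^\mu) - \kl(\pi_t^*\mid\nu_t^\mu),
\end{align*}
which is at most $\kl(\mu\mid\nu_t^\mu)$ since KL divergence is nonnegative.

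All integrals involved are finite because measures in $\mathcal{P}$ have finite entropy and second moment, $\log \pi_t$ is integrable under them, and $\fdv{V}{\mu}$ is bounded. This justifies splitting the KL terms as above.
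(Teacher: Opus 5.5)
Your argument is correct and is essentially the paper's proof: the paper simply defers to Proposition~1 of \citet{nitanda2022convex}, whose proof combines the first-order condition $\pi_t^* = \nu_t^{\pi_t^*}$ with the two concavity tangent inequalities (at $\pi_t^*$ for the lower bound, at $\mu$ for the upper bound) and the entropy identities you wrote out. One point to tighten is that ``full support'' should be read relative to $\pi_t$ (at $t=1$, $\pi_1$ need not charge all of $\R^d$); this relative version is all the lower-bound substitution needs.
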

\begin{proof}
    See Proposition~1 of~\citet{nitanda2022convex}.
\end{proof}

\begin{lemma}[Initial optimality gap at the cutoff]\label{lem:initial_gap_cutoff}
    Let
    \begin{align*}
        \phi(y) & := \fdv{R}{\mu}[\pi_1](y),                      \\
        L_\phi  & := \operatorname*{ess\,sup}_{\pi_1}\phi
        - \operatorname*{ess\,inf}_{\pi_1}\phi,                   \\
        S_2     & := \Expec[\pi_1]{\|Y_1-\Expec[\pi_1]{Y_1}\|^2}.
    \end{align*}
    Under the bounded first-order-variation assumption, $L_\phi\leq2C_V$.
    Suppose that $R$ is concave, $L_\phi<\infty$, and $S_2<\infty$.
    Let $\pi_\varepsilon^*\in\argmax_{\mu\in\mathcal P}\mathcal V(\varepsilon,\mu)$ and define
    \begin{align*}
        \Delta_\varepsilon
        := \mathcal V(\varepsilon,\pi_\varepsilon^*)
        - \mathcal V(\varepsilon,\pi_\varepsilon).
    \end{align*}
    Then, for the linear flow interpolation with independent endpoints,
    \begin{align*}
        \Delta_\varepsilon
        \leq \frac{S_2}{4\eta}
        \left(e^{\eta L_\phi}-1-\eta L_\phi\right)
        \frac{\varepsilon^2}{(1-\varepsilon)^2}
        = O(\varepsilon^2).
    \end{align*}
    For the diffusion interpolation,
    \begin{align*}
        \Delta_\varepsilon
        \leq \frac{S_2}{4\eta}
        \left(e^{\eta L_\phi}-1-\eta L_\phi\right)
        \frac{\varepsilon}{1-\varepsilon}
        = O(\varepsilon).
    \end{align*}
\end{lemma}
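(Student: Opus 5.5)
The plan is to reduce $\Delta_\varepsilon$ to a log-moment-generating-function bound, and then to a variance bound for a smoothed version of $\phi$.

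\emph{Step 1: reduction to a log-MGF.} Since $K_\varepsilon$ is linear and $R$ is concave, $V(\varepsilon,\cdot)=R[K_\varepsilon\,\cdot\,]$ is concave. I would therefore apply Lemma~\ref{lem:regularized_functional_gap} with $\mu=\pi_\varepsilon$. This gives
\begin{align*}
\eta\,\Delta_\varepsilon \leq \kl(\pi_\varepsilon \mid \nu_\varepsilon^{\pi_\varepsilon}), \qquad \nu_\varepsilon^{\pi_\varepsilon}\propto \exp\Bigl(\eta \fdv{V(\varepsilon,\pi_\varepsilon)}{\mu}\Bigr)\pi_\varepsilon .
\end{align*}
By the chain rule for the linear map $K_\varepsilon$, and because $K_\varepsilon\pi_\varepsilon=\pi_1$, we have $\fdv{V(\varepsilon,\pi_\varepsilon)}{\mu}(y)=\psi(y):=\Expec{\phi(Y_1)\mid Y_\varepsilon=y}$. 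Writing $X:=\eta(\psi(Y_\varepsilon)-\Expec{\psi(Y_\varepsilon)})$ with $Y_\varepsilon\sim\pi_\varepsilon$, the KL term becomes exactly $\log \Expec{e^{X}}$.

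\emph{Step 2: Bennett-type inequality.} $\psi$ is a conditional average of $\phi$, so its oscillation is at most $L_\phi$, and hence $X\leq \eta L_\phi$ almost surely. I would use $\log \Expec{e^X}\leq \Expec{e^X}-1$ together with the fact that $u\mapsto (e^u-1-u)/u^2$ is nondecreasing. Since $X$ is centered, this yields
\begin{align*}
\log\Expec{e^X}\leq \frac{e^{\eta L_\phi}-1-\eta L_\phi}{\eta^2L_\phi^2}\,\Expec{X^2}=\frac{e^{\eta L_\phi}-1-\eta L_\phi}{L_\phi^2}\operatorname{Var}(\psi(Y_\varepsilon)).
\end{align*}
Dividing by $\eta$, the claim reduces to the variance estimate
\begin{align*}
\operatorname{Var}(\psi(Y_\varepsilon))\leq \frac{L_\phi^2 S_2}{4}\,a_\varepsilon^2,
\end{align*}
with $a_\varepsilon=\varepsilon/(1-\varepsilon)$ for the flow and $a_\varepsilon^2=\varepsilon/(1-\varepsilon)$ for the diffusion.

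\emph{Step 3 (main obstacle): the variance of the conditional expectation.} Use $Y_\varepsilon=\varepsilon Y_1+(1-\varepsilon)Y_0$ for the flow, or $\sqrt{\varepsilon}Y_1+\sqrt{1-\varepsilon}Y_0$ for the diffusion, with $Y_0\perp Y_1$. The approach I would try first compares $\psi(Y_\varepsilon)$ with $\psi$ evaluated at the "signal-free" point $\tilde Y=\varepsilon m+(1-\varepsilon)Y_0$, where $m=\Expec{Y_1}$.
\begin{itemize}
\item Differentiating the Gaussian likelihood, as in Lemma~\ref{lem:score_flow}, gives $\grad\psi(y)=\frac{\varepsilon}{(1-\varepsilon)^2}\operatorname{Cov}(\phi(Y_1),Y_1\mid Y_\varepsilon=y)$.
\item By Cauchy--Schwarz and Popoviciu this covariance is bounded by $\frac{L_\phi}{2}\sqrt{\operatorname{tr}\operatorname{Var}(Y_1\mid Y_\varepsilon=y)}$.
\item Interpolating between $\tilde Y$ and $Y_\varepsilon$, whose difference is $\varepsilon(Y_1-m)$, then controls the fluctuation of $\psi(Y_\varepsilon)$ by $\frac{L_\phi}{2}\cdot\frac{\varepsilon}{1-\varepsilon}\|Y_1-m\|$, up to the factor coming from the posterior spread.
\item Squaring and averaging produces $S_2$.
\end{itemize}
An alternative route is Cauchy--Schwarz with the $\chi^2$ divergence between $\pi_1(\cdot\mid Y_\varepsilon)$ and $\pi_1$, followed by a second-moment bound on the Gaussian channel at SNR $a_\varepsilon$. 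The delicate part of either route is obtaining the exact constant $\tfrac14$ with a clean $a_\varepsilon^2 S_2$ dependence, rather than a bound with an exponential prefactor. For the diffusion case the identical argument runs with $\sqrt{\varepsilon}$ and $\sqrt{1-\varepsilon}$, which changes $a_\varepsilon^2$ to $\varepsilon/(1-\varepsilon)$ and hence gives the $O(\varepsilon)$ rate.
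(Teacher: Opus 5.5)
Your Steps 1 and 2 match the paper. Lemma~\ref{lem:regularized_functional_gap} with $\mu=\pi_\varepsilon$ reduces $\eta\Delta_\varepsilon$ to $\log\Expec{e^{X}}$. Your derivation of the Bennett bound from $\log x\le x-1$ and the monotonicity of $(e^u-1-u)/u^2$ is correct. The gap is Step 3. It carries all the content of the lemma, and you do not prove it.

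Your primary route fails structurally. The split $\psi(Y_\varepsilon)=\psi(\tilde Y)+[\psi(Y_\varepsilon)-\psi(\tilde Y)]$ only controls the second piece. But $\tilde Y=\varepsilon m+(1-\varepsilon)Y_0$ is not signal-free for $\psi$: $\psi(\tilde Y)$ is a nonconstant function of $Y_0$, and its variance is the \emph{leading} term.

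To see this, set $c:=\operatorname{Cov}_{\pi_1}(\phi(Y_1),Y_1)$. To first order in $\varepsilon$, your own gradient formula gives $\psi(y)\approx\Expec[\pi_1]{\phi}+\frac{\varepsilon}{(1-\varepsilon)^2}\,c\cdot(y-\varepsilon m)$. Hence $\operatorname{Var}(\psi(\tilde Y))\approx\frac{\varepsilon^2}{(1-\varepsilon)^2}\|c\|^2$, which is of the same order as the target $\frac{L_\phi^2S_2}{4}\frac{\varepsilon^2}{(1-\varepsilon)^2}$.

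Your interpolation term, by contrast, is $\frac{L_\phi}{2}\frac{\varepsilon^2}{(1-\varepsilon)^2}\|Y_1-m\|\sup\sqrt{\operatorname{tr}\operatorname{Var}(Y_1\mid\cdot)}$, one order higher in $\varepsilon$. Its posterior-spread factor is also evaluated along the segment, not under the law of $Y_\varepsilon$. So it cannot be averaged down to $S_2$, and it is not bounded when only $S_2<\infty$ is assumed. Your $\chi^2$ alternative, as you suspected, needs exponential moments of $\pi_1$ rather than $S_2$.

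The missing idea in the paper is to bound the variance through mutual information rather than gradients. Let $P_y$ be the law of $Y_1$ given $Y_\varepsilon=y$, and write $Y_\varepsilon=a_\varepsilon Y_1+b_\varepsilon Y_0$ with $(a_\varepsilon,b_\varepsilon)=(\varepsilon,1-\varepsilon)$ or $(\sqrt\varepsilon,\sqrt{1-\varepsilon})$.
\begin{itemize}
\item The oscillation bound and Pinsker give $|\psi(y)-\Expec[\pi_1]{\phi}|^2\le L_\phi^2\operatorname{TV}(P_y,\pi_1)^2\le\frac{L_\phi^2}{2}\kl(P_y\mid\pi_1)$.
\item Averaging over $Y_\varepsilon$ gives $\operatorname{Var}(\psi(Y_\varepsilon))\le\frac{L_\phi^2}{2}\operatorname{MI}(Y_1;Y_\varepsilon)$.
\item Comparing each $\mathcal N(a_\varepsilon Y_1,b_\varepsilon^2I)$ with the fixed reference $\mathcal N(a_\varepsilon m,b_\varepsilon^2I)$ gives $\operatorname{MI}(Y_1;Y_\varepsilon)\le\frac{a_\varepsilon^2}{2b_\varepsilon^2}S_2$.
\end{itemize}
Together these give exactly $\frac{L_\phi^2S_2}{4}\frac{a_\varepsilon^2}{b_\varepsilon^2}$, using only $S_2<\infty$.

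Alternatively, your local ingredients do suffice if you integrate along the signal-to-noise ratio instead of along a spatial segment. For $Y^{(s)}:=\sqrt{s}\,Y_1+Y_0$, one has
$\frac{\d}{\d s}\operatorname{Var}(\Expec{\phi(Y_1)\mid Y^{(s)}})=\Expec{\|\operatorname{Cov}(\phi(Y_1),Y_1\mid Y^{(s)})\|^2}$.
By your Cauchy--Schwarz/Popoviciu step this is at most $\frac{L_\phi^2}{4}\Expec{\operatorname{tr}\operatorname{Var}(Y_1\mid Y^{(s)})}$, and by the law of total variance at most $\frac{L_\phi^2}{4}S_2$. Since $Y_\varepsilon/b_\varepsilon=Y^{(a_\varepsilon^2/b_\varepsilon^2)}$, integrating over $s\in[0,a_\varepsilon^2/b_\varepsilon^2]$ recovers the same constant.
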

\begin{proof}
    Write
    \begin{align*}
        Y_\varepsilon = a_\varepsilon Y_1+b_\varepsilon Z,
        \qquad Z\sim\mathcal N(0,I),\quad Z\perp Y_1,
    \end{align*}
    where $(a_\varepsilon,b_\varepsilon)=(\varepsilon,1-\varepsilon)$
    for the linear flow interpolation and
    $(a_\varepsilon,b_\varepsilon)=(\sqrt\varepsilon,\sqrt{1-\varepsilon})$
    for the diffusion interpolation. Since $K_\varepsilon\pi_\varepsilon=\pi_1$, the chain rule gives
    \begin{align*}
        f_\varepsilon(y)
        := \fdv{V(\varepsilon,\pi_\varepsilon)}{\mu}(y)
        = \Expec{\phi(Y_1)\mid Y_\varepsilon=y}.
    \end{align*}
    Since $K_\varepsilon$ is linear, concavity of $R$ implies concavity of $V(\varepsilon,\cdot)$.
    Lemma~\ref{lem:regularized_functional_gap} with $\mu=\pi_\varepsilon$ and the definition of $\nu_\varepsilon^{\pi_\varepsilon}$ imply
    \begin{align}
        \Delta_\varepsilon
         & \leq \frac{1}{\eta}\kl(\pi_\varepsilon\mid\nu_\varepsilon^{\pi_\varepsilon}) \notag \\
         & = \frac{1}{\eta}\log\Expec[\pi_\varepsilon]{
            \exp\left(\eta\left\{f_\varepsilon(Y_\varepsilon)
            -\Expec[\pi_\varepsilon]{f_\varepsilon(Y_\varepsilon)}\right\}\right)}.
        \label{eq:initial_gap_mgf}
    \end{align}

    Let $P_y$ denote the conditional law of $Y_1$ given $Y_\varepsilon=y$.
    Since $\phi$ has essential range of length $L_\phi$, Pinsker's inequality gives
    \begin{align*}
        \left|f_\varepsilon(y)-\Expec[\pi_1]{\phi(Y_1)}\right|^2
         & \leq L_\phi^2\operatorname{TV}(P_y,\pi_1)^2 \\
         & \leq \frac{L_\phi^2}{2}\kl(P_y\mid\pi_1).
    \end{align*}
    Averaging over $Y_\varepsilon$ yields
    \begin{align}
        \operatorname{Var}\left(f_\varepsilon(Y_\varepsilon)\right)
        \leq \frac{L_\phi^2}{2}\operatorname{MI}(Y_1;Y_\varepsilon).
        \label{eq:conditional_variance_mi}
    \end{align}
    Let $Q_\varepsilon:=\mathcal N(a_\varepsilon\Expec[\pi_1]{Y_1},b_\varepsilon^2I)$. Then,
    \begin{align*}
        \operatorname{MI}(Y_1;Y_\varepsilon)
         & \leq \Expec[\pi_1]{
        \kl\left(\mathcal N(a_\varepsilon Y_1,b_\varepsilon^2I)\mid Q_\varepsilon\right)} \\
         & = \frac{a_\varepsilon^2}{2b_\varepsilon^2}S_2.
    \end{align*}
    Combining this with~\eqref{eq:conditional_variance_mi} gives
    \begin{align}
        \operatorname{Var}\left(f_\varepsilon(Y_\varepsilon)\right)
        \leq \frac{L_\phi^2S_2}{4}\frac{a_\varepsilon^2}{b_\varepsilon^2}.
        \label{eq:conditional_variance_bound}
    \end{align}

    Set $U_\varepsilon:=f_\varepsilon(Y_\varepsilon)
        -\Expec[\pi_\varepsilon]{f_\varepsilon(Y_\varepsilon)}$.
    Then $\Expec{U_\varepsilon}=0$ and $U_\varepsilon\leq L_\phi$.
    If $L_\phi=0$, then $U_\varepsilon=0$ almost surely and~\eqref{eq:initial_gap_mgf} gives $\Delta_\varepsilon=0$.
    Suppose that $L_\phi>0$. For any centered random variable $U\leq L$, Bennett's moment-generating-function inequality gives
    \begin{align*}
        \log\Expec{e^{\eta U}}
        \leq \frac{\operatorname{Var}(U)}{L^2}
        \left(e^{\eta L}-1-\eta L\right).
    \end{align*}
    Applying this inequality to~\eqref{eq:initial_gap_mgf} and using~\eqref{eq:conditional_variance_bound} gives
    \begin{align*}
        \Delta_\varepsilon
        \leq \frac{S_2}{4\eta}
        \left(e^{\eta L_\phi}-1-\eta L_\phi\right)
        \frac{a_\varepsilon^2}{b_\varepsilon^2}.
    \end{align*}
    Substituting the two choices of $(a_\varepsilon,b_\varepsilon)$ proves the claim.
\end{proof}

\begin{lemma}[Fisher Information Comparison via Pinsker's Inequality]\label{lem:fisher_information_comparison_pinsker}
    Define
    \begin{align*}
        s_t(y)    & := \grad_y \log \frac{\pi_t^*(y)}{\pi_t(y)},                         \\
        \Phi_t(y) & := \norm{s_t(y)}^2,                                                  \\
        \Psi_t(y) & := \operatorname{div} s_t(y) + s_t(y) \cdot \grad_y \log \pi_t^*(y),
    \end{align*}
    and suppose that there exists $C_t < \infty$ such that
    \begin{align*}
        \sup_y \norm{s_t(y)}^2 \leq C_t,
        \qquad
        \sup_y \abs{\Psi_t(y)} \leq C_t.
    \end{align*}
    Then, for every $\mu \in \mathcal{P}$,
    \begin{align*}
        I(\pi_t^* \mid \pi_t) - I(\mu \mid \pi_t)
        \leq {} & \frac{5C_t}{\sqrt{2}}\sqrt{\kl(\mu \mid \pi_t^*)}  \\
                & - I(\mu \mid \pi_t^*)                              \\
        \leq {} & \frac{5C_t}{\sqrt{2}}\sqrt{\kl(\mu \mid \pi_t^*)}.
    \end{align*}
\end{lemma}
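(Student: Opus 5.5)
The plan is to expand the Fisher information $I(\mu \mid \pi_t) := \Expec[\mu]{\|\grad_y \log \frac{\d\mu}{\d\pi_t}\|^2}$ around $\pi_t^*$, rewrite the cross term by integration by parts, and then control the remaining terms by total variation and Pinsker's inequality.

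\emph{Step 1: decompose the log-density ratio.} Write $\grad \log \frac{\mu}{\pi_t} = \grad \log \frac{\mu}{\pi_t^*} + s_t$. Squaring and integrating against $\mu$ gives
\begin{align*}
    I(\mu \mid \pi_t) = I(\mu \mid \pi_t^*) + 2\Expec[\mu]{\grad \log \tfrac{\mu}{\pi_t^*} \cdot s_t} + \Expec[\mu]{\Phi_t}.
\end{align*}
Also $I(\pi_t^* \mid \pi_t) = \Expec[\pi_t^*]{\Phi_t}$.

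\emph{Step 2: integrate the cross term by parts.} Note that $\mu \grad \log \frac{\mu}{\pi_t^*} = \grad \mu - \mu \grad \log \pi_t^*$. Integrating by parts, assuming enough decay to drop boundary terms, gives
\begin{align*}
    \Expec[\mu]{\grad \log \tfrac{\mu}{\pi_t^*} \cdot s_t} = -\int \mu \left(\operatorname{div} s_t + s_t \cdot \grad \log \pi_t^*\right) = -\Expec[\mu]{\Psi_t}.
\end{align*}
The same identity with $\mu = \pi_t^*$ shows $\Expec[\pi_t^*]{\Psi_t} = 0$. Combining with Step 1,
\begin{align*}
    I(\pi_t^* \mid \pi_t) - I(\mu \mid \pi_t) = \left(\Expec[\pi_t^*]{\Phi_t} - \Expec[\mu]{\Phi_t}\right) + 2\left(\Expec[\mu]{\Psi_t} - \Expec[\pi_t^*]{\Psi_t}\right) - I(\mu \mid \pi_t^*).
\end{align*}

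\emph{Step 3: bound the two differences.} I would use $|\int f \, \d(\mu - \pi_t^*)| \leq \operatorname{osc}(f)\, \mathrm{TV}(\mu, \pi_t^*)$.
\begin{itemize}
    \item Since $0 \leq \Phi_t \leq C_t$, the $\Phi_t$-term is at most $C_t\, \mathrm{TV}$.
    \item Since $|\Psi_t| \leq C_t$, the oscillation of $\Psi_t$ is at most $2C_t$, so the $\Psi_t$-term is at most $2 \cdot 2C_t\, \mathrm{TV}$.
\end{itemize}
The total is $5C_t\, \mathrm{TV}(\mu, \pi_t^*)$. Pinsker's inequality, $\mathrm{TV} \leq \sqrt{\kl(\mu \mid \pi_t^*)/2}$, then gives the first inequality. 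The second inequality follows from $I(\mu \mid \pi_t^*) \geq 0$.

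\emph{Main obstacle.} The delicate step is justifying the integration by parts in Step 2, in particular the vanishing boundary terms and the finiteness of $I(\mu \mid \pi_t^*)$. I would first prove the bound for smooth, rapidly decaying $\mu$ with finite Fisher information. If $I(\mu \mid \pi_t) = \infty$, the left-hand side is $-\infty$ and there is nothing to prove. For general $\mu$, I would use a standard approximation argument, using the boundedness of $s_t$ and $\Psi_t$.
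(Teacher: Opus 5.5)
Your proposal is correct and follows the paper's proof essentially step for step: the same splitting $\grad \log \frac{\mu}{\pi_t} = \grad \log \frac{\mu}{\pi_t^*} + s_t$, the same integration by parts giving $-\Expec[\mu]{\Psi_t}$ together with $\Expec[\pi_t^*]{\Psi_t}=0$, the same bounds $C_t\,\mathrm{TV}$ and $2C_t\,\mathrm{TV}$ for the two differences, and then Pinsker's inequality. Your remarks on the vanishing boundary terms and on the trivial case $I(\mu\mid\pi_t)=\infty$ add rigor that the paper leaves implicit.
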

\begin{proof}
    We have
    \begin{align*}
        \grad_y \log \frac{\mu(y)}{\pi_t(y)} = \grad_y \log \frac{\mu(y)}{\pi_t^*(y)} + s_t(y).
    \end{align*}
    Expanding the square yields
    \begin{align*}
        I(\mu \mid \pi_t)
        ={} & I(\mu \mid \pi_t^*)
        + 2 \int \mu(y) \grad_y \log \frac{\mu(y)}{\pi_t^*(y)} \cdot s_t(y) \, \d y
        + \Expec[\mu]{\Phi_t}.
    \end{align*}
    By integration by parts,
    \begin{align*}
        \int \mu(y) \grad_y \log \frac{\mu(y)}{\pi_t^*(y)} \cdot s_t(y) \, \d y = -\Expec[\mu]{\Psi_t}.
    \end{align*}
    Furthermore,
    \begin{align*}
        \Expec[\pi_t^*]{\Psi_t} = 0,
        \qquad
        I(\pi_t^* \mid \pi_t) = \Expec[\pi_t^*]{\Phi_t}.
    \end{align*}
    Consequently,
    \begin{align*}
        I(\pi_t^* \mid \pi_t) - I(\mu \mid \pi_t)
        ={} & \left(\Expec[\pi_t^*]{\Phi_t} - \Expec[\mu]{\Phi_t}\right)    \\
            & + 2\left(\Expec[\mu]{\Psi_t} - \Expec[\pi_t^*]{\Psi_t}\right)
        - I(\mu \mid \pi_t^*).
    \end{align*}
    Since $0 \leq \Phi_t \leq C_t$ and $\abs{\Psi_t} \leq C_t$, we have
    \begin{align*}
        \Expec[\pi_t^*]{\Phi_t} - \Expec[\mu]{\Phi_t}
         & \leq C_t\operatorname{TV}(\mu,\pi_t^*),  \\
        \Expec[\mu]{\Psi_t} - \Expec[\pi_t^*]{\Psi_t}
         & \leq 2C_t\operatorname{TV}(\mu,\pi_t^*).
    \end{align*}
    Pinsker's inequality gives
    \begin{align*}
        \operatorname{TV}(\mu,\pi_t^*) \leq \sqrt{\frac{1}{2}\kl(\mu \mid \pi_t^*)}.
    \end{align*}
    Combining these inequalities gives the first inequality in the statement. The second follows from $I(\mu \mid \pi_t^*) \geq 0$.
\end{proof}

\section{Proof of Lemma~\ref{lem:conditional_score}}\label{proof:conditional_score}
By Bayes' rule, we have
\begin{align*}
    \log \pi_1(z \mid Y_t = y) & = \log \pi_t(y \mid Y_1 = z) + \log \pi_1(z) - \log \pi_t(y),
\end{align*}
and differentiating both sides with respect to $y$ gives
\begin{align*}
    \grad_y \log \pi_1(z \mid Y_t = y) & = \grad_y \log \pi_t(y \mid Y_1 = z) - \grad_y \log \pi_t(y).
\end{align*}
The second term $\grad \log \pi_t(y)$ is the score function of $\pi_t$, which is estimated by a neural network for diffusion models
and can be computed from the velocity field for flow models using Lemma~\ref{lem:score_flow}.
Therefore, we only need to compute the first term $\grad \log \pi_t(y \mid Y_1 = z)$.

\paragraph{Flow Models.}
Under the memoryless noise schedule, Lemma~\ref{lem:memoryless_flow_conditional} gives
\begin{align*}
    \pi_t(y \mid Y_1 = z) = C \cdot \exp\ab(-\frac{1}{2(1-t)^2} \norm{y - tz}^2),
\end{align*}
where $C$ is a normalization constant. Thus,
\begin{align*}
    \grad_y \log \pi_t(y \mid Y_1 = z) = -\frac{1}{(1-t)^2} (y - tz).
\end{align*}

\paragraph{Diffusion Models.}
For $t \in [\varepsilon,1)$, let $s=1-t \in (0,1-\varepsilon]$. The solution of the forward SDE satisfies
\begin{align*}
    X_s \mid X_0 = z \sim \mathcal{N}(\sqrt{1-s}z, sI).
\end{align*}
The reverse process is the time reversal of the forward process, so $Y_t = X_{1-t}$ and $Y_1 = X_0$ under the time-reversal coupling. Therefore,
\begin{align*}
    Y_t \mid Y_1 = z \sim \mathcal{N}(\sqrt{t}z, (1-t)I)
\end{align*}
and we have
\begin{align*}
    \grad_y \log \pi_t(y \mid Y_1 = z) = -\frac{1}{1-t}(y - \sqrt{t}z).
\end{align*}

\section{Proof of Proposition~\ref{prop:reward_improvement}}\label{proof:reward_improvement}
The marginal distribution $\pi_t^g$ of the guided SDE satisfies the Fokker--Planck equation
\begin{align*}
    \partial_t \pi_t^g = \mathcal{L}_t^*\pi_t^g - \operatorname{div}(g_t\pi_t^g).
\end{align*}
Therefore, the chain rule for functions of measures and integration by parts give
\begin{align*}
    \odv{V(t, \pi_t^g)}{t}
     & = \partial_t V(t, \pi_t^g) + \int \fdv{V(t, \pi_t^g)}{\mu}(y)\partial_t\pi_t^g(\d y)    \\
     & = \partial_t V(t, \pi_t^g) + \Expec[\pi_t^g]{\mathcal{L}_t \fdv{V(t, \pi_t^g)}{\mu}(y)}
    + \Expec[\pi_t^g]{g_t(y) \cdot \grad \fdv{V(t, \pi_t^g)}{\mu}(y)}                          \\
     & = \partial_t V(t, \pi_t^g) + \mathfrak{L}_t^g V(t, \pi_t^g),
\end{align*}
where
\begin{align*}
    \mathfrak{L}_t^g V(t, \mu) & := \mathfrak{L}_t V(t, \mu) + \Expec[\mu]{g_t(y) \cdot \grad \fdv{V(t, \mu)}{\mu}(y)}.
\end{align*}

Applying Lemma~\ref{lem:kolmogorov_backward}, we have
\begin{align*}
    \odv{V(t, \pi_t^g)}{t} & = \partial_t V(t, \pi_t^g) + \mathfrak{L}_t V(t, \pi_t^g) + \Expec[\pi_t^g]{g_t(y) \cdot \grad \fdv{V(t, \pi_t^g)}{\mu}(y)} \\
                           & = \Expec[\pi_t^g]{g_t(y) \cdot \grad \fdv{V(t, \pi_t^g)}{\mu}(y)}.
\end{align*}
Integrating both sides from $t = \varepsilon$ to $t = 1$ and using $K_\varepsilon\pi_\varepsilon=\pi_1$, we have
\begin{align*}
    V(1, \pi_1^g) - V(\varepsilon, \pi_\varepsilon)
     & = \int_\varepsilon^1 \Expec[\pi_t^g]{g_t(y) \cdot \grad \fdv{V(t, \pi_t^g)}{\mu}(y)} \d t \\
     & = R[\pi_1^g] - R[\pi_1].
\end{align*}
This completes the proof of the reward improvement.

Since the base and guided processes share the initial marginal $\pi_\varepsilon$, the KL identity follows from Girsanov's theorem and the admissibility conditions:
\begin{align*}
    \kl(\mathbb P_{\pi^g}\mid\mathbb P_\pi)
    = \frac{1}{2}\int_\varepsilon^1 \Expec[\pi_t^g]{\frac{\|g_t(Y_t)\|^2}{\sigma_t^2}}\d t.
\end{align*}

\section{Proof of Theorem~\ref{thm:improvement_steepest}}\label{proof:improvement_steepest}
Applying Proposition~\ref{prop:reward_improvement} to the steepest guidance $g_t(y) = \lambda \sigma_t^2 \grad_y \fdv{V(t, \pi_t^g)}{\mu}(y)$, we have
\begin{align*}
    R[\pi_1^g] - R[\pi_1]                         & = \lambda \int_\varepsilon^1 \Expec[\pi_t^g]{\sigma_t^2 \left\|\grad_y \fdv{V(t, \pi_t^g)}{\mu}(y)\right\|^2} \d t,             \\
    \kl(\mathbb{P}_{\pi^g} \mid \mathbb{P}_{\pi}) & = \frac{\lambda^2}{2} \int_\varepsilon^1 \Expec[\pi_t^g]{\sigma_t^2 \left\|\grad_y \fdv{V(t, \pi_t^g)}{\mu}(y)\right\|^2} \d t.
\end{align*}
Combining the above two equations, we have
\begin{align*}
    R[\pi_1^g] - R[\pi_1] - \frac{1}{\lambda} \kl(\mathbb{P}_{\pi^g} \mid \mathbb{P}_{\pi}) = \frac{\lambda}{2} \int_\varepsilon^1 \Expec[\pi_t^g]{\sigma_t^2 \left\|\grad_y \fdv{V(t, \pi_t^g)}{\mu}(y)\right\|^2} \d t
\end{align*}
This completes the proof.

\section{Proof of Corollary~\ref{cor:improvement_steepest_kl}}\label{proof:improvement_steepest_kl}
Define
\begin{align*}
    A := \int_\varepsilon^1 \Expec[\pi_t^g]{\sigma_t^2 \left\|\grad_y \fdv{V(t, \pi_t^g)}{\mu}(Y_t)\right\|^2} \d t.
\end{align*}
From Theorem~\ref{thm:improvement_steepest}, we have
\begin{align*}
    R[\pi_1^g] - R[\pi_1] = \lambda A,
    \qquad
    \kl(\mathbb{P}_{\pi^g} \mid \mathbb{P}_{\pi}) = \frac{\lambda^2}{2} A.
\end{align*}
From the information processing inequality, we have $\kl(\pi^g_1 \mid \pi_1) \leq \kl(\mathbb{P}_{\pi^g} \mid \mathbb{P}_{\pi})$.
Therefore,
\begin{align*}
    J_\eta(\pi_1^g) - J_\eta(\pi_1)
     & = R[\pi_1^g] - \frac{1}{\eta}\kl(\pi^g_1 \mid \pi_1) - R[\pi_1] \\
     & \geq \lambda A - \frac{\lambda^2}{2\eta} A                      \\
     & = \lambda \left(1 - \frac{\lambda}{2\eta}\right) A              \\
     & \geq 0,
\end{align*}
where the last inequality follows from $0 < \lambda \leq 2\eta$.
This completes the proof.


\section{Proof of Proposition~\ref{prop:equivalence_sde}}\label{proof:equivalence_sde}
The result follows from Lemma~\ref{lem:equivalence_sde}.

\section{Proof of Theorem~\ref{thm:improvement_steepest_terminal_kl}}\label{proof:improvement_steepest_terminal_kl}
By Proposition~\ref{prop:equivalence_sde}, it suffices to analyze the guided SDE with $g_t(y) = \lambda \sigma_t^2 \grad_y \fdv{\mathcal{V}(t, \pi_t^g)}{\mu}(y)$.
Since $V(t, \mu) = R[K_t\mu]$ satisfies the Kolmogorov backward equation, Lemmas~\ref{lem:kolmogorov_backward} and~\ref{lem:relative_entropy_dissipation} give
\begin{align*}
    \partial_t \mathcal{V}(t, \mu) + \mathfrak{L}_t \mathcal{V}(t, \mu)
    = \frac{\sigma_t^2}{2\eta} I(\mu \mid \pi_t) \geq 0.
\end{align*}
The chain-rule calculation for the guided SDE, together with
$g_t(y) = \lambda \sigma_t^2 \grad_y \fdv{\mathcal{V}(t, \pi_t^g)}{\mu}(y)$, then yields
\begin{align*}
    \odv{\mathcal{V}(t, \pi_t^g)}{t}
     & = \partial_t \mathcal{V}(t, \pi_t^g) + \mathfrak{L}_t \mathcal{V}(t, \pi_t^g)
    + \Expec[\pi_t^g]{g_t(Y_t) \cdot \grad_y \fdv{\mathcal{V}(t, \pi_t^g)}{\mu}(Y_t)}                             \\
     & = \frac{\sigma_t^2}{2\eta} I(\pi_t^g \mid \pi_t)
    + \lambda \Expec[\pi_t^g]{\sigma_t^2 \left\|\grad_y \fdv{\mathcal{V}(t, \pi_t^g)}{\mu}(Y_t)\right\|^2}        \\
     & \geq \lambda \Expec[\pi_t^g]{\sigma_t^2 \left\|\grad_y \fdv{\mathcal{V}(t, \pi_t^g)}{\mu}(Y_t)\right\|^2}.
\end{align*}
Integrating over $t \in [\varepsilon, 1]$ and using
\begin{align*}
    \mathcal{V}(\varepsilon, \pi_\varepsilon)
    = R[K_\varepsilon\pi_\varepsilon]
    - \eta^{-1}\kl(\pi_\varepsilon \mid \pi_\varepsilon)
    = R[\pi_1] = J_\eta(\pi_1)
\end{align*}
and $\mathcal{V}(1, \pi_1^g) = J_\eta(\pi_1^g)$ proves the claim.

\section{Proof of Theorem~\ref{thm:convergence}}\label{proof:convergence}
For $t \in [\varepsilon,1]$, let
$\Delta_t = \mathcal{V}(t, \pi_t^*) - \mathcal{V}(t, \pi_t^g)$.
Since $\pi_\varepsilon^g=\pi_\varepsilon$, the initial gap is
\begin{align*}
    \Delta_\varepsilon
    = \mathcal V(\varepsilon,\pi_\varepsilon^*)
    - \mathcal V(\varepsilon,\pi_\varepsilon) \geq 0.
\end{align*}
Since we assume that $\pi_1$ has a finite second moment, and the first-order variation of $R$ is bounded,
it follows from Lemma~\ref{lem:initial_gap_cutoff} that $\Delta_\varepsilon=O(\varepsilon^2)$ for the linear flow interpolation and $\Delta_\varepsilon=O(\varepsilon)$ for the diffusion interpolation.
For simplicity, write $\nu_t^g := \nu_t^{\pi_t^g}$. By the definition of $\mathcal{V}$,
\begin{align*}
    \grad_y \fdv{\mathcal{V}(t, \mu)}{\mu}(y)
    = -\frac{1}{\eta}\grad_y \log \frac{\mu(y)}{\nu_t^\mu(y)}.
\end{align*}
From the first-order optimality condition, $\pi_t^* = \nu_t^{\pi_t^*}$.
Hence, $\fdv{\mathcal{V}(t, \pi_t^*)}{\mu}$ is constant in $y$ and
$\mathfrak{L}_t\mathcal{V}(t, \pi_t^*) = 0$.
Moreover, Lemmas~\ref{lem:kolmogorov_backward} and~\ref{lem:relative_entropy_dissipation} imply
\begin{align*}
    \partial_t\mathcal{V}(t, \mu) + \mathfrak{L}_t\mathcal{V}(t, \mu)
    = \frac{\sigma_t^2}{2\eta}I(\mu \mid \pi_t).
\end{align*}
In particular, since $\mathfrak{L}_t\mathcal{V}(t, \pi_t^*) = 0$, we have
\begin{align*}
    \partial_t\mathcal{V}(t, \pi_t^*)
    = \frac{\sigma_t^2}{2\eta}I(\pi_t^* \mid \pi_t),
\end{align*}
whereas
\begin{align*}
    \partial_t\mathcal{V}(t, \pi_t^g) + \mathfrak{L}_t\mathcal{V}(t, \pi_t^g)
    = \frac{\sigma_t^2}{2\eta}I(\pi_t^g \mid \pi_t).
\end{align*}
By Proposition~\ref{prop:equivalence_sde}, we may use the Fokker--Planck equation of the marginally equivalent ideal SDE:
\begin{align*}
    \partial_t\pi_t^g
    = \mathcal{L}_t^*\pi_t^g
    - \lambda\sigma_t^2\operatorname{div}\left(\pi_t^g\grad_y\fdv{\mathcal{V}(t, \pi_t^g)}{\mu}\right).
\end{align*}
The envelope theorem and integration by parts now yield
\begin{align*}
    \frac{\d}{\d t}\Delta_t
    ={} & \partial_t\mathcal{V}(t, \pi_t^*)
    - \left(\partial_t\mathcal{V}(t, \pi_t^g) + \mathfrak{L}_t\mathcal{V}(t, \pi_t^g)\right)                  \\
        & - \lambda\sigma_t^2\Expec[\pi_t^g]{\left\|\grad_y\fdv{\mathcal{V}(t, \pi_t^g)}{\mu}(Y_t)\right\|^2} \\
    ={} & -\frac{\lambda\sigma_t^2}{\eta^2}I(\pi_t^g \mid \nu_t^g)
    + \frac{\sigma_t^2}{2\eta}\left(I(\pi_t^* \mid \pi_t) - I(\pi_t^g \mid \pi_t)\right).
\end{align*}
Since $K_t$ is linear, concavity of $R$ implies concavity of $V(t, \cdot)$.
Applying Lemma~\ref{lem:regularized_functional_gap} with $\mu = \pi_t^g$ gives
\begin{align*}
    \kl(\pi_t^g \mid \pi_t^*) \leq \eta\Delta_t \leq \kl(\pi_t^g \mid \nu_t^g).
\end{align*}
Thus, LSI$(\alpha_t)$ yields
\begin{align*}
    I(\pi_t^g \mid \nu_t^g) \geq 2\eta\alpha_t\Delta_t.
\end{align*}
Applying Lemma~\ref{lem:fisher_information_comparison_pinsker} with $\mu = \pi_t^g$ gives
\begin{align*}
    I(\pi_t^* \mid \pi_t) - I(\pi_t^g \mid \pi_t) & \leq \frac{5C_t}{\sqrt{2}} \sqrt{\kl(\pi_t^g \mid \pi_t^*)} \\
                                                  & \leq 5C_t\sqrt{\frac{\eta}{2}}\sqrt{\Delta_t}.
\end{align*}
Consequently,
\begin{align*}
    \frac{\d}{\d t}\Delta_t
    \leq -2a_t\Delta_t + 2a_th_t\sqrt{\Delta_t},
\end{align*}
where
\begin{align*}
    a_t                     & := \frac{\lambda\sigma_t^2\alpha_t}{\eta},
                            & h_t                                        & := \frac{5\sqrt{\eta}C_t}{4\sqrt{2}\lambda\alpha_t}, \\
    A_{\lambda,\varepsilon} & := \int_\varepsilon^1 a_s \d s,
                            & w_\lambda(t)                               & := a_t\exp\left(-\int_t^1 a_s \d s\right).
\end{align*}
For $\delta>0$, let $u_{\delta,t}:=\sqrt{\Delta_t+\delta}$. Then,
\begin{align*}
    \frac{\d}{\d t}u_{\delta,t}
    \leq -a_tu_{\delta,t}+a_th_t+a_t\sqrt{\delta}.
\end{align*}
Applying Gr\"onwall's inequality and sending $\delta\downarrow0$ implies
\begin{align*}
    \sqrt{\Delta_1}
    \leq e^{-A_{\lambda,\varepsilon}}\sqrt{\Delta_\varepsilon}
    + \int_\varepsilon^1 h_t w_\lambda(t) \d t.
\end{align*}
Moreover,
\begin{align*}
    \int_\varepsilon^1 w_\lambda(t) \d t
    = 1-e^{-A_{\lambda,\varepsilon}}.
\end{align*}
Hence, $e^{-A_{\lambda,\varepsilon}}\delta_\varepsilon + w_\lambda(t)\d t$ is a probability measure on $\{\varepsilon\}\cup[\varepsilon,1]$. Jensen's inequality gives
\begin{align*}
    \Delta_1
     & \leq e^{-A_{\lambda,\varepsilon}}\Delta_\varepsilon
    + \int_\varepsilon^1 h_t^2 w_\lambda(t) \d t                 \\
     & = e^{-A_{\lambda,\varepsilon}}\Delta_\varepsilon
    + \frac{25\eta}{32\lambda^2}
    \int_\varepsilon^1 \frac{C_t^2}{\alpha_t^2}w_\lambda(t) \d t \\
     & \leq O(\varepsilon)
    + \frac{25\eta}{32\lambda^2}
    \int_\varepsilon^1 \frac{C_t^2}{\alpha_t^2}w_\lambda(t) \d t,
\end{align*}
where the initialization term is in fact $O(\varepsilon^2)$ for the linear flow interpolation.
Finally, since $K_1$ is the identity kernel, $\mathcal V(1,\mu)=J_\eta(\mu)$ and therefore
\begin{align*}
    \Delta_1 = \sup_{\mu\in\mathcal P}J_\eta(\mu)-J_\eta(\pi_1^g),
\end{align*}
which proves the claim.

\section{Experimental Details and Additional Results}\label{sec:experimental_details}

\subsection{Variance Reduction}\label{sec:variance_reduction}
Equations~\eqref{eq:steepest_estimator} and~\eqref{eq:steepest_estimator_nonlinear} state the uncentered estimators used in the theoretical discussion.
In all experiments, we center the coefficients multiplying the conditional score to reduce variance and remove the finite-sample dependence on the additive constant of the first-order variation.

For Steepest Guidance, let
\begin{align*}
    \psi_i^{(j)}
    := \fdv{R}{\mu}[\hat\mu_{1,l}](y_{l,i}^{(j)}),
    \qquad
    b_{-i}^{(j)}
    := \frac{1}{k-1}\sum_{i'\neq i}\psi_{i'}^{(j)},
\end{align*}
where $k\geq2$ and $b_{-i}^{(j)}$ is the leave-one-out baseline.
The estimator used in the experiments is
\begin{align*}
    \hat g_{t_l}^{\mathrm{steepest,LOO}}(y_l^{(j)})
    = \lambda\sigma_{t_l}^2\frac{1}{k}\sum_{i=1}^k
    \left(\psi_i^{(j)}-b_{-i}^{(j)}\right)
    \grad_{y_l^{(j)}}\log\pi_1(y_{l,i}^{(j)}\mid Y_{t_l}=y_l^{(j)}).
\end{align*}
This estimator is exactly invariant to replacing every $\psi_i^{(j)}$ by $\psi_i^{(j)}+c$ for an arbitrary constant $c$.
For a linear reward functional, the leave-one-out baseline is independent of the $i$-th lookahead sample conditional on $Y_{t_l}=y_l^{(j)}$ and therefore preserves the unbiasedness of the estimator because the conditional score has zero expectation.
For non-linear reward functionals, the finite-particle estimator remains generally biased as discussed in the main text; the leave-one-out centering is used for invariance and variance reduction.

For the softmax estimator used by DOIT, the coefficients have empirical mean $1/k$ by construction, where $k$ is the number of lookahead samples.
We therefore subtract $1/k$ from each softmax coefficient before multiplying it by the conditional score.
Since the conditional score has zero expectation, this centering leaves the target expectation unchanged while removing the constant component of the coefficient and reducing variance.

\subsection{Experimental Setup for Bias Evaluation.}
\label{sec:bias_experimental_setup}
Here, we describe the experimental setup of the bias evaluation.
We evaluate the finite-sample bias in Fig.~\ref{fig:bias} on a one-dimensional
Gaussian toy problem. The terminal distribution is $X_1\sim\mathcal N(0,1)$,
and we evaluate the guidance at $t=0.5$.
We set the linear reward $r(x)=x$ and $\lambda=5$.
In this toy problem, we can sample the conditional distribution exactly.

For each of 128 independently sampled states and each
$k\in\{1,2,4,8,16,32\}$, we draw 512 independent REINFORCE estimates. Let
$g^*$ denote the analytic Doob's guidance, $\bar g_k$ the empirical one,
and $s_k^2$ the unbiased empirical variance across repeats. We report the
relative bias
\begin{align*}
    \frac{\sqrt{\Expec[y]{(\bar g_k-g^*)^2-s_k^2/512}}}
    {\sqrt{\Expec[y]{(g^*)^2}}},
\end{align*}
where the expectations are empirical averages across states.
The subtraction in the numerator removes the finite-repeat variance contribution to the squared
error of $\bar g_k$. Error bars show the standard error across states.

Finally, we show that the plug-in estimator has no bias in this Gaussian linear-reward setting.
The conditional sample can be expressed as follows:
\begin{align*}
    X_i(y)=a_t y+b_t \epsilon_i, \qquad \epsilon_i\sim\mathcal N(0,1),
\end{align*}
where $a_t$ and $b_t$ are functions of $t$ but not of $y$.
Then, the plug-in estimator is
\begin{align*}
    \hat g^{\mathrm{plug}}_k(y)
     & = \sigma_t^2 \grad_y \log\left(\frac{1}{k}\sum_{i=1}^k
    \exp\bigl(\lambda X_i(y)\bigr)\right)                     \\
     & = \sigma_t^2 \grad_y\left[\lambda ay+
        \log\left(\frac{1}{k}\sum_{i=1}^k
        \exp(\lambda b\epsilon_i)\right)\right] = \sigma_t^2 \lambda a.
\end{align*}
On the other hand, the analytic Doob's guidance is
\begin{align*}
    g^*(y) & = \sigma_t^2 \grad_y\log\Expec{
    \exp\bigl(\lambda(ay+b\epsilon)\bigr)}                                      \\
           & = \sigma_t^2 \grad_y\left(\lambda ay+\frac{\lambda^2b^2}{2}\right)
    = \sigma_t^2 \lambda a.
\end{align*}
Thus, $\hat g^{\mathrm{plug}}_k(y)=g^*(y)$, which implies that the plug-in estimator has no bias in this setting.

\subsection{Reward Functions.}
\paragraph{ImageReward and PickScore.}
ImageReward~\citep{xu2023imagereward} and PickScore~\citep{kirstain2023pick} are reward models that predict the score of an image based on human preferences.
\paragraph{Blueness.}
Following~\citep{dandapanthula2026we}, we define the blueness reward function as
\begin{align*}
    r_{\mathrm{blue}}(x) = \frac{1}{HW} \sum_{i=1}^{H} \sum_{j=1}^{W} (B_{i,j} - R_{i, j} - G_{i, j}),
\end{align*}
where $H$ and $W$ are the height and width of the image, respectively, and $R_{i,j}$, $G_{i,j}$, and $B_{i,j}$ are the red, green, and blue pixel values at position $(i,j)$, respectively.
\paragraph{Compressibility.}
Compressibility is defined as
\begin{align*}
    r_{\mathrm{compress}}(x) = -\frac{\mathrm{JPEGSize}_{q=95}(x)}{10000},
\end{align*}
where $\mathrm{JPEGSize}_{q=95}(x)$ is the size of the JPEG-compressed image $x$ with quality factor $q=95$.
This is an example of a non-differentiable reward function.

\subsection{Detailed Setup for Text-to-Image Experiments.}
\paragraph{Hyperparameter Settings.}
We set $L = 50$ for FLUX and $L=160$ for SD and sample 32 images with batch size $M=8$ for each method in the main experiments.
We summarize the grid for hyperparameter search and optimal $\lambda$ in Table~\ref{tab:t2i_hyperparameters}.

\paragraph{Guidance Schedule.}
For ImageReward and PickScore, we apply the guidance from $l = 10$ to $l = 50$ for FLUX and from $l = 120$ to $l = 160$ for SD, where $l$ is the diffusion step index.
For Blueness and Compressibility, we apply the guidance from $l = 0$ to $l = 10$ for FLUX and $l=0$ to $l=50$ for SD since the rewards are based on global structure acquired early in the process.
For SD, we apply the guidance every 5 steps to reduce the computational cost.

\paragraph{Approximate Posterior Sampling.}
Following~\citet{dandapanthula2026we}, we use the Diamond Map~\citep{holderrieth2026diamond} for approximate posterior sampling
of FLUX. Pretrained weights are available at \url{https://huggingface.co/gabeguofanclub/flux-1-dev-flowmap-lsd}.
For SD, we cannot find a pretrained Diamond Map, so
we utilize Tweedie's formula instead for 1-step denoising.
The noise ratio is set to $5.0$ for both models.

\begin{table*}[t]
    \centering
    \caption{A summary of the grid search for the guidance scale $\lambda$ in the text-to-image experiments.
        The table shows the search grid and selected optimal $\lambda$ for each prompt, reward function, model, and method.
        Lion, Tokyo, and Lake denote the prompts in Tables~\ref{tab:t2i_reward_summary}, \ref{tab:t2i_reward_summary_tokyo}, and~\ref{tab:t2i_reward_summary_lake}, respectively.}
    \label{tab:t2i_hyperparameters}
    \begin{tabular}{llrrrr}
\toprule
 & & \multicolumn{2}{c}{SD} & \multicolumn{2}{c}{FLUX} \\
\cmidrule(lr){3-4} \cmidrule(lr){5-6}
Reward & Grid & Steepest & Doob & Steepest & Doob \\
\midrule
\multicolumn{6}{l}{\textit{Lion}} \\
Blueness & [50, 100, 200, 300, 400, 500] & 500 & 300 & 500 & 400 \\
ImageReward & [20, 30, 40, 50, 60, 70] & 50 & 30 & 40 & 30 \\
PickScore & [20, 30, 40, 50, 60, 70] & 40 & 40 & 30 & 40 \\
Compressibility & [10, 20, 40, 60, 80, 100] & 100 & 20 & 100 & 100 \\
\midrule
\multicolumn{6}{l}{\textit{Tokyo}} \\
Blueness & [50, 100, 200, 300, 400, 500] & 500 & 400 & 400 & 200 \\
ImageReward & [20, 30, 40, 50, 60, 70] & 70 & 70 & 40 & 60 \\
PickScore & [20, 30, 40, 50, 60, 70] & 30 & 60 & 20 & 50 \\
Compressibility & [10, 20, 40, 60, 80, 100] & 100 & 10 & 40 & 10 \\
\midrule
\multicolumn{6}{l}{\textit{Lake}} \\
Blueness & [50, 100, 200, 300, 400, 500] & 500 & 400 & 500 & 500 \\
ImageReward & [20, 30, 40, 50, 60, 70] & 70 & 60 & 50 & 70 \\
PickScore & [20, 30, 40, 50, 60, 70] & 40 & 50 & 20 & 30 \\
Compressibility & [10, 20, 40, 60, 80, 100] & 100 & 80 & 40 & 100 \\
\bottomrule
\end{tabular}

\end{table*}

\subsection{Additional Results on Text-to-Image Generation.}
In addition to the main results reported in Table~\ref{tab:t2i_reward_summary}, we provide results for two additional prompts in Tables~\ref{tab:t2i_reward_summary_tokyo} and~\ref{tab:t2i_reward_summary_lake}.
We observe that the proposed method achieves better or comparable performance across the additional prompts, demonstrating its robustness.

\begin{table*}[t]
    \centering
    \caption{Text-to-image reward summary for the prompt ``A cinematic night photograph of a rainy Tokyo street with neon reflections.'' Results are reported as mean $\pm$ standard deviation over $n=32$ generated samples.}
    \label{tab:t2i_reward_summary_tokyo}
    \begin{tabular}{@{}llrrrr@{}}
\toprule
Model & Reward & Unguided & Steepest (Ours) & DOIT & SVDD \\
\midrule
SD & Blueness & $-0.34 \pm 0.07$ & $\mathbf{0.15} \pm 0.21$ & $-0.31 \pm 0.07$ & $-0.28 \pm 0.07$ \\
 & ImageReward & $1.32 \pm 0.29$ & $\mathbf{1.78} \pm 0.14$ & $1.55 \pm 0.24$ & $1.75 \pm 0.15$ \\
 & PickScore & $22.12 \pm 0.47$ & $22.90 \pm 0.48$ & $22.46 \pm 0.50$ & $\mathbf{22.95} \pm 0.51$ \\
 & Compressibility & $-16.35 \pm 1.73$ & $\mathbf{-4.61} \pm 1.99$ & $-15.98 \pm 1.77$ & $-15.60 \pm 1.67$ \\
\midrule
FLUX & Blueness & $-0.22 \pm 0.04$ & $\mathbf{-0.03} \pm 0.07$ & $-0.17 \pm 0.04$ & $-0.11 \pm 0.03$ \\
 & ImageReward & $0.93 \pm 0.52$ & $\mathbf{1.84} \pm 0.07$ & $1.41 \pm 0.48$ & $1.56 \pm 0.34$ \\
 & PickScore & $22.23 \pm 0.47$ & $\mathbf{23.53} \pm 0.32$ & $22.90 \pm 0.49$ & $23.30 \pm 0.34$ \\
 & Compressibility & $-8.12 \pm 1.12$ & $\mathbf{-5.13} \pm 0.27$ & $-7.15 \pm 0.76$ & $-6.35 \pm 0.57$ \\
\bottomrule
\end{tabular}

\end{table*}

\begin{table*}[t]
    \centering
    \caption{Text-to-image reward summary for the prompt ``A dreamy watercolor painting of a mountain lake at sunrise.'' Results are reported as mean $\pm$ standard deviation over $n=32$ generated samples.}
    \label{tab:t2i_reward_summary_lake}
    \begin{tabular}{@{}llrrrr@{}}
\toprule
Model & Reward & Unguided & Steepest (Ours) & DOIT & SVDD \\
\midrule
SD & Blueness & $-0.47 \pm 0.10$ & $\mathbf{0.32} \pm 0.20$ & $-0.44 \pm 0.10$ & $-0.40 \pm 0.09$ \\
 & ImageReward & $0.03 \pm 0.43$ & $\mathbf{0.64} \pm 0.16$ & $0.33 \pm 0.26$ & $0.58 \pm 0.18$ \\
 & PickScore & $21.84 \pm 0.54$ & $22.54 \pm 0.52$ & $22.12 \pm 0.53$ & $\mathbf{22.57} \pm 0.52$ \\
 & Compressibility & $-11.76 \pm 1.42$ & $\mathbf{-4.21} \pm 0.96$ & $-11.41 \pm 1.44$ & $-10.98 \pm 1.59$ \\
\midrule
FLUX & Blueness & $-0.58 \pm 0.07$ & $\mathbf{0.08} \pm 0.09$ & $-0.48 \pm 0.07$ & $-0.38 \pm 0.07$ \\
 & ImageReward & $0.97 \pm 0.21$ & $\mathbf{1.42} \pm 0.12$ & $1.17 \pm 0.18$ & $1.34 \pm 0.17$ \\
 & PickScore & $23.44 \pm 0.49$ & $\mathbf{24.37} \pm 0.36$ & $24.02 \pm 0.38$ & $24.32 \pm 0.45$ \\
 & Compressibility & $-7.78 \pm 1.12$ & $\mathbf{-4.66} \pm 0.29$ & $-6.83 \pm 1.03$ & $-5.54 \pm 0.48$ \\
\bottomrule
\end{tabular}

\end{table*}

\subsection{Examples of Generated Images.}
Here, we provide examples of images generated with SD and FLUX for the Blueness, ImageReward, PickScore, and Compressibility reward functions.
\begin{figure}
    \centering
    \includegraphics[width=\textwidth]{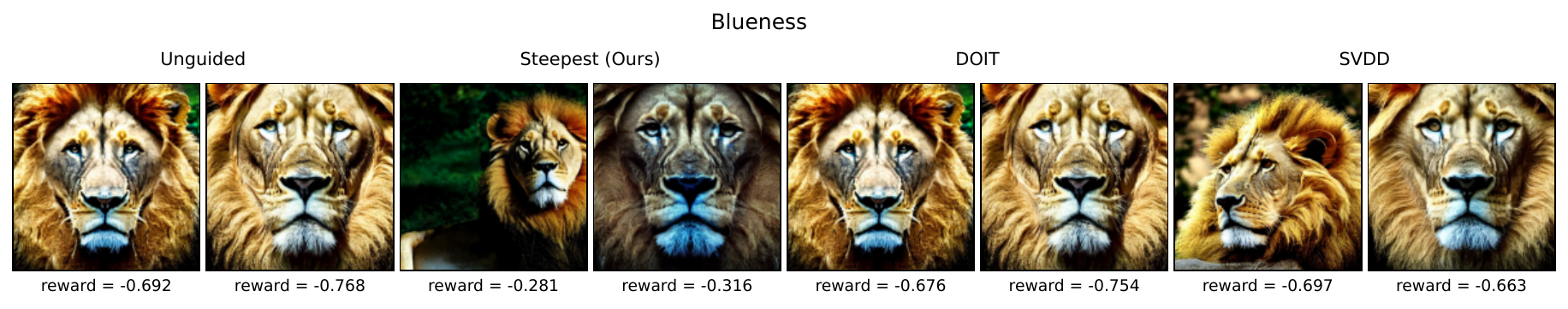}
    \includegraphics[width=\textwidth]{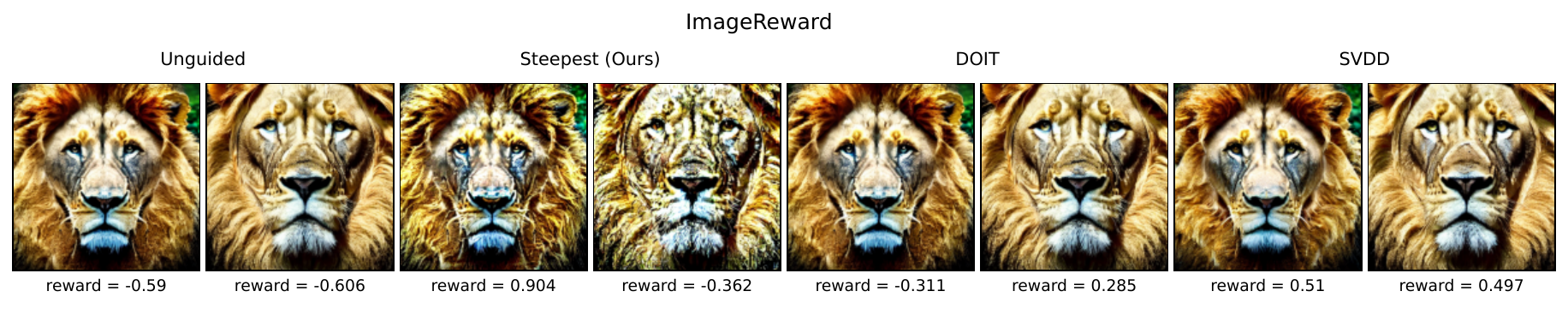}
    \includegraphics[width=\textwidth]{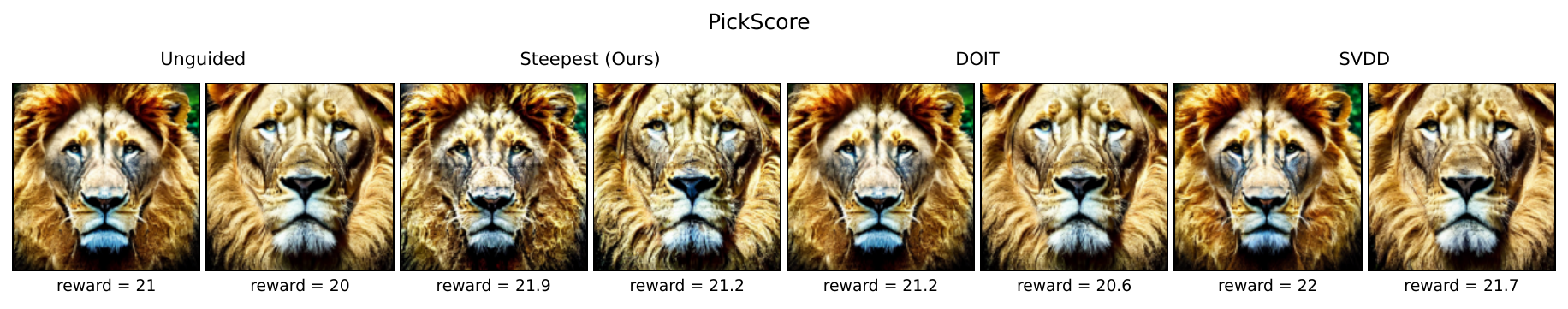}
    \includegraphics[width=\textwidth]{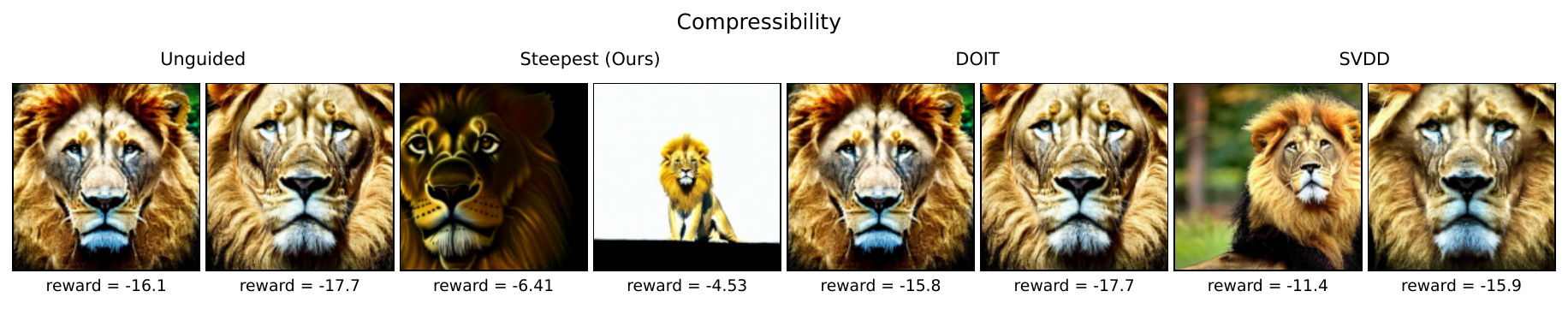}
    \captionof{figure}{Selected images generated with SD for the Blueness, ImageReward, PickScore, and Compressibility reward functions (top to bottom).}
    \label{fig:selected_images_sd}
\end{figure}

\begin{figure}
    \centering
    \includegraphics[width=\textwidth]{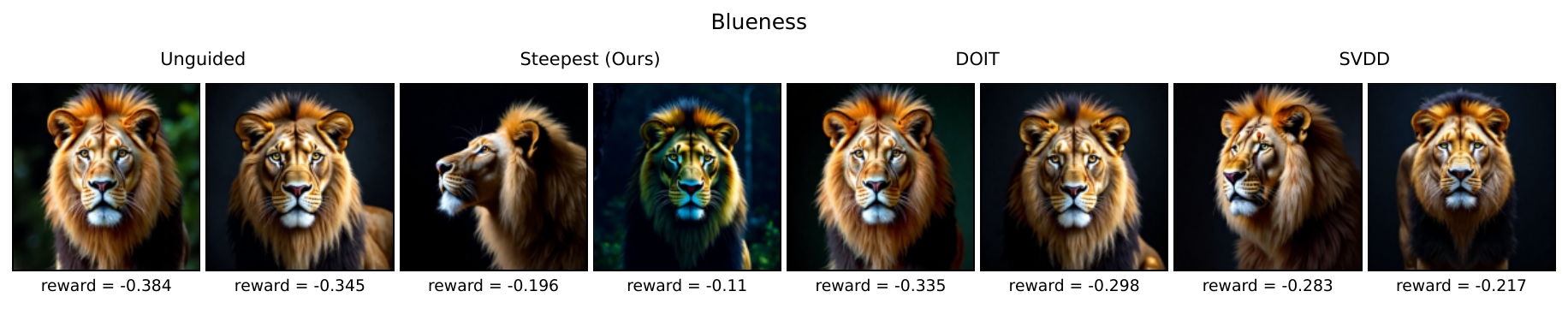}
    \includegraphics[width=\textwidth]{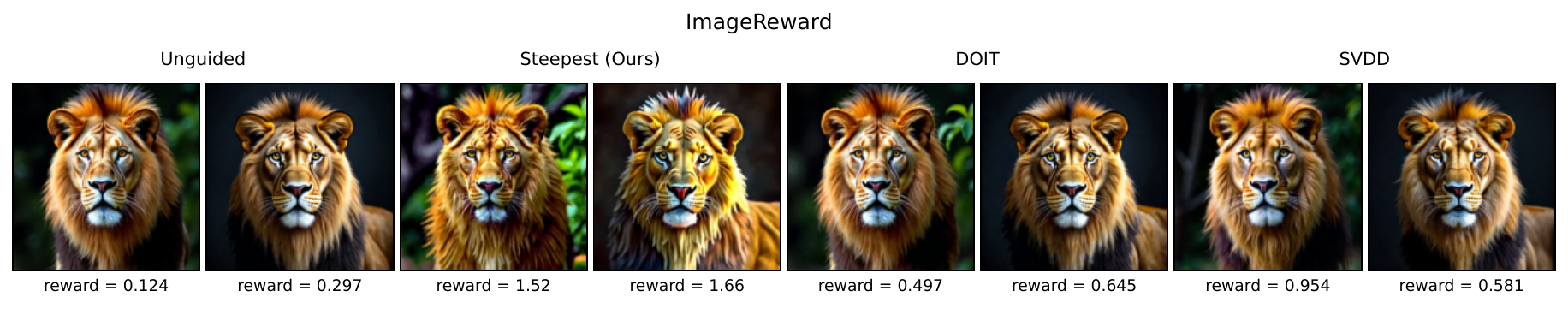}
    \includegraphics[width=\textwidth]{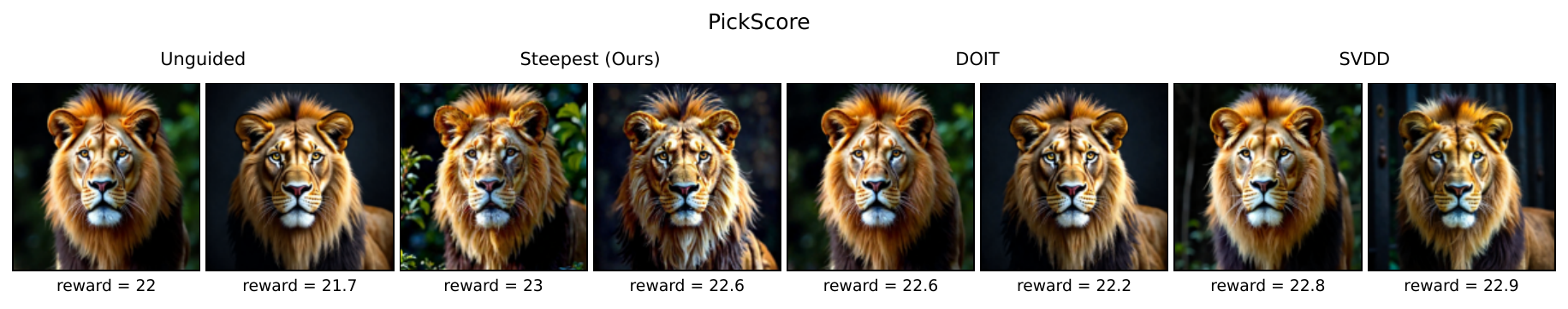}
    \includegraphics[width=\textwidth]{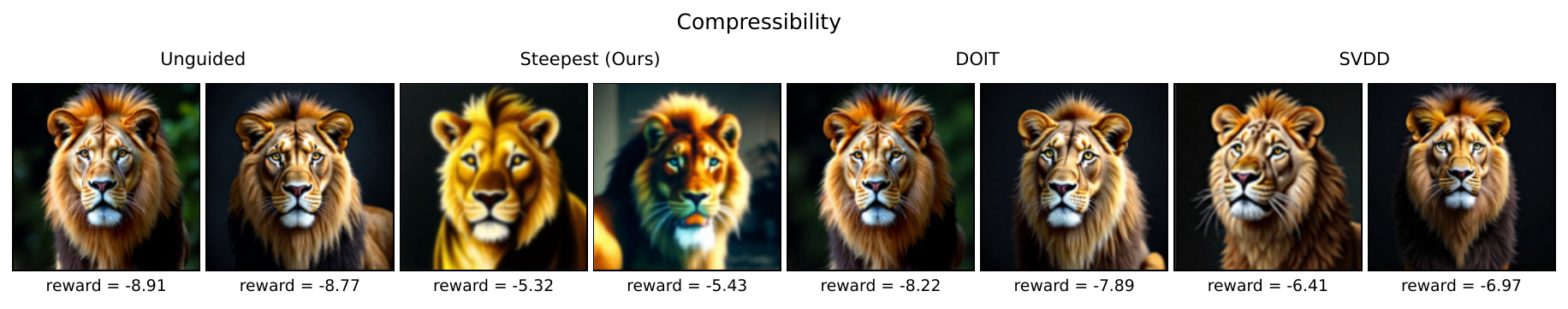}
    \captionof{figure}{Selected images generated with FLUX for the Blueness, ImageReward, PickScore, and Compressibility reward functions (top to bottom).}
    \label{fig:selected_images_flux}
\end{figure}

\subsection{Additional Results on Non-linear Reward Functionals.}
\paragraph{Diversity Seeking Generation.}
To enhance diversity in generated images, we compare entropy and Rao's quadratic entropy as reward functionals.
For Rao's quadratic entropy, we use DINOv2 to extract features and an RBF kernel to compute the entropy.
Note that entropy regularization does not require an additional model for feature extraction.
The bandwidth of the RBF kernel is adaptively set to the median of pairwise squared distances between features divided by $\max\{\log M, 1\}$.
We set the guidance scale $\lambda$ to $1,000$ and Rao's quadratic entropy reward weight to $1$.
For entropy maximization, we use the KL-regularized formulation described in Section~\ref{sec:examples_reward_functionals} and set its reward and KL weights to $2\times10^{-4}$.
Figure~\ref{fig:t2i_sd_kernel_weight_comparison} shows images generated for the prompt ``VAN GOGH CAFE TERASSE'' using Rao's quadratic entropy, entropy regularization, and unguided sampling.
While unguided sampling produces images with similar scene composition (e.g., the 1st, 3rd, 4th, and 7th images in the bottom row), regularized sampling produces visually varied images.
We also report in-batch diversity scores in Table~\ref{tab:t2i_sd_kernel_dino_diversity}.
The score is computed as the mean pairwise cosine distance between the normalized DINOv2 image embeddings within the batch.
As we discuss in Section~\ref{sec:related_work}, several methods~\citep{corso2023particle,vinograd2026diverse,zilberstein2024repulsive} have been proposed to enhance diversity in generated images
but they are based on heuristics or require first- and second-order derivatives of the kernel.
\begin{table}[h]
    \centering
    \caption{Within-batch diversity for the eight images in Fig.~\ref{fig:t2i_sd_kernel_weight_comparison}, measured as mean pairwise cosine distance between normalized DINOv2 image embeddings. Higher is more diverse.}
    \begin{tabular}{lr}
    \toprule
    Method                  & DINO diversity $\uparrow$ \\
    \midrule
    Rao's Quadratic Entropy & $0.555$                   \\
    Entropy                 & $0.504$                   \\
    Unguided                & $0.428$                   \\
    \bottomrule
\end{tabular}

    \label{tab:t2i_sd_kernel_dino_diversity}
\end{table}
\begin{figure}[h]
    \centering
    \includegraphics[width=\textwidth]{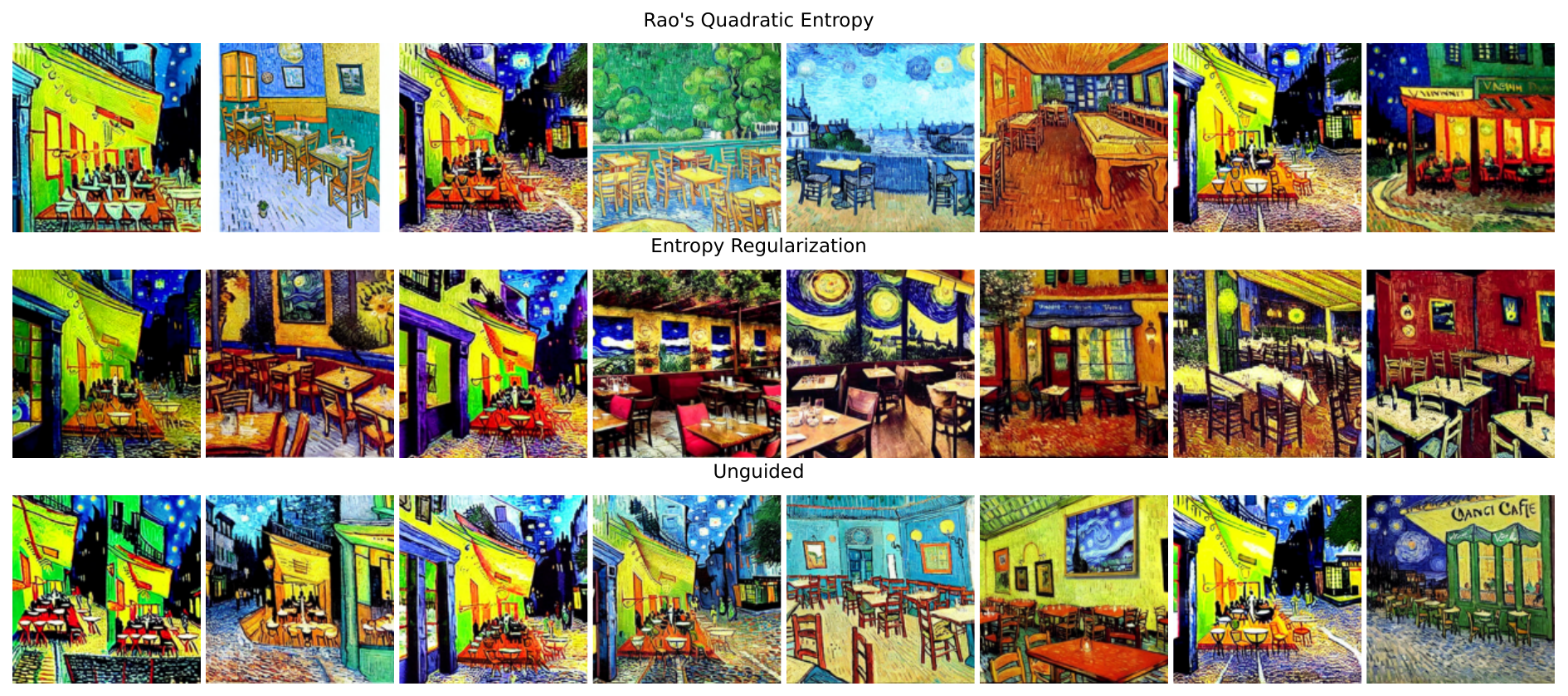}
    \captionof{figure}{Comparison of diversity-seeking generation using SD. From top to bottom, the rows show Steepest Guidance with Rao's quadratic entropy, entropy regularization, and the unguided baseline.}
    \label{fig:t2i_sd_kernel_weight_comparison}
\end{figure}

\subsection{Effect of Guidance Scaling}
Some prior works~\citep{zhu2026training,dandapanthula2026we} introduce the heuristic of using a guidance scaling factor $\gamma$ to control the strength of the guidance.
That is, the guidance is scaled by the factor $\gamma$: $\gamma g_t(y)$.
To see the effect of guidance scaling, we vary $\gamma$ for empirical Doob's h-transform and observe how it influences the resulting reward.
Here, we fix $\lambda=30$ for ImageReward and $\lambda=100$ for Compressibility, with $k=4$.
We use $\gamma\in\{2,4,6,8,10,12\}$ for ImageReward and $\gamma\in\{5,10,20,40,60,80\}$ for Compressibility.
Fig.~\ref{fig:flux_gamma_ablation} shows the results of varying the guidance scaling factor $\gamma$
and compares them to our steepest guidance.
We see that applying a guidance scaling improves the performance to some extent,
but it does not outperform steepest guidance.
One possible explanation is that guidance scaling may amplify a bias in the guidance, leading to suboptimal performance.
This suggests that the observed performance gap in the main experiments cannot be explained solely by insufficient guidance strength

\begin{figure}[h]
    \centering
    \includegraphics[width=\textwidth]{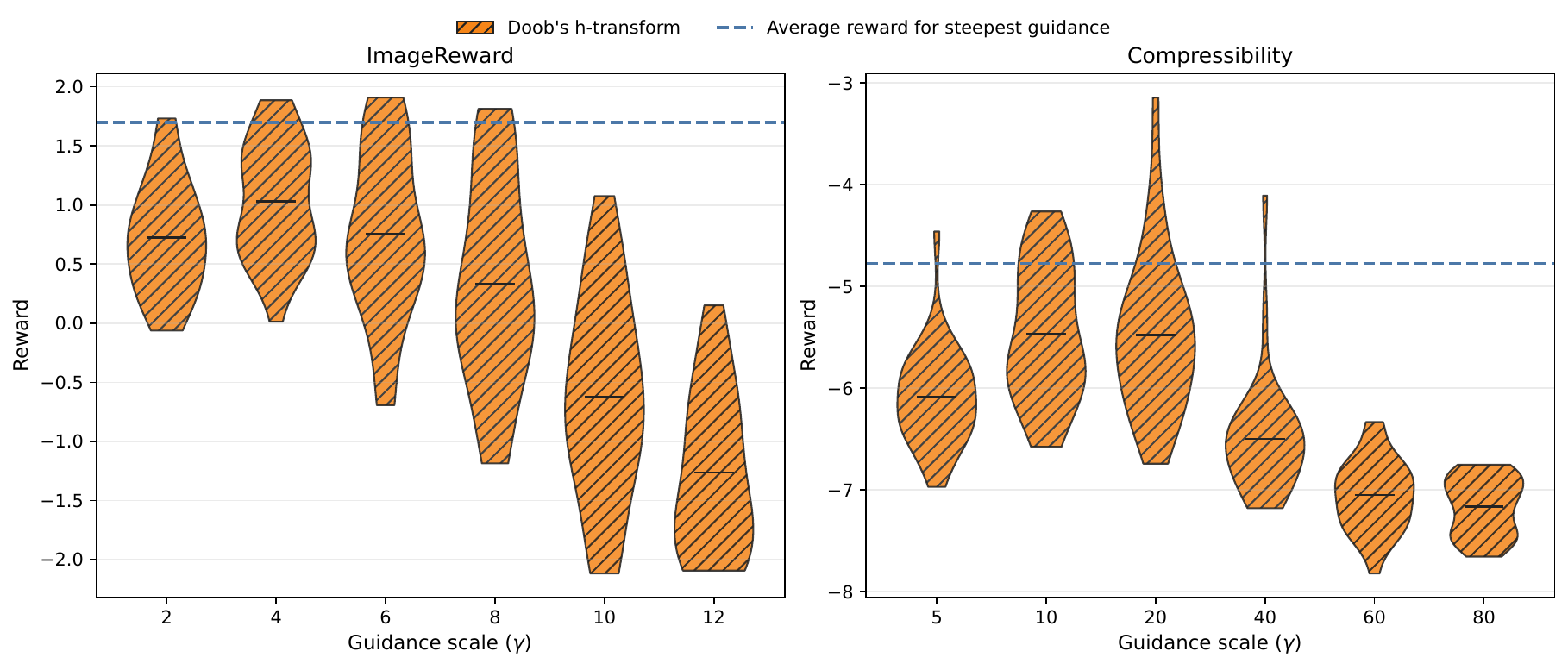}
    \captionof{figure}{Ablation study on the guidance scaling factor $\gamma$ for FLUX with ImageReward and Compressibility.
        Average performance is shown as horizontal lines.}
    \label{fig:flux_gamma_ablation}
\end{figure}

\subsection{Comparison with SALD}
Here, we compare the performance of the proposed steepest guidance method with SALD~\citep{nitanda2026slowly}.
We do not apply slowdown here to reduce the computational cost.
Following the original paper, we consider define $f_t$ in SALD as the Gaussian-smoothed reward function, i.e.,
$
    f_{t_l}(y) = \mathbb{E}_{Z \sim \mathcal{N}(0, I)}[r(y + \bar \sigma_{t_l} Z)]
$ for $\bar \sigma_{t_l} = \sqrt{\Delta_{t_l}}\sigma_t$.
and gradients are estimated using $\nabla f_{t_l}(y) \simeq \frac{1}{k\bar \sigma_{t_l}} \sum_{i=1}^{k} r(y + \bar \sigma_{t_l} Z_i) Z_i$, where $Z_i \sim \mathcal{N}(0, I)$ independently.
Then, the guidance in SALD is defined as $\lambda \sigma_t^2 \hat \nabla f_{t_l}(y)$.
For a fair comparison, we subtract the leave-one-out baseline from the reward.
Fig.~\ref{fig:flux_sald_lambda_ablation} shows the results of varying the hyperparameter $\lambda$ in SALD and compares them to our steepest guidance.
We see that SALD cannot achieve the same level of performance as our steepest guidance method.
In particular, the difference is large for ImageReward experiments.
This may be because ImageReward is sensitive to fine-grained features that are lost through Gaussian smoothing.

\begin{figure}[h]
    \centering
    \includegraphics[width=\textwidth]{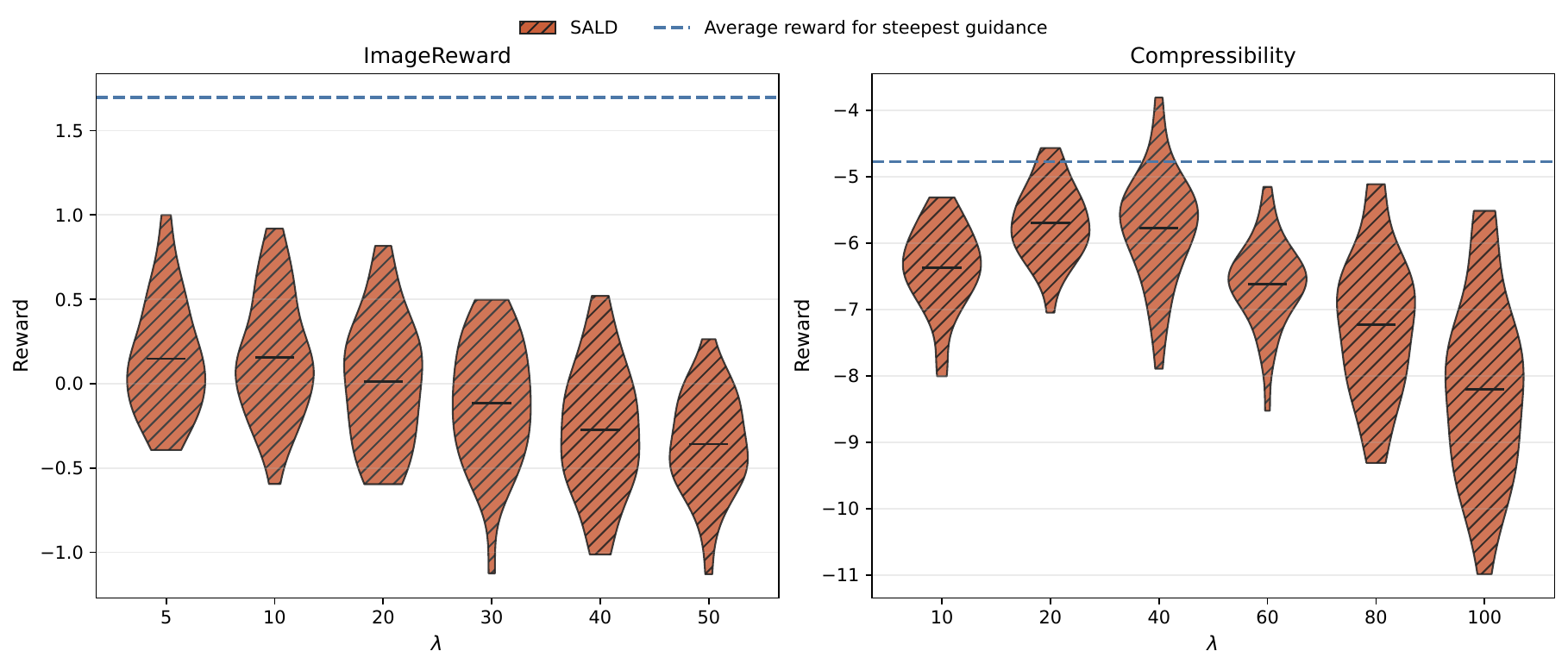}
    \captionof{figure}{Comparison of the proposed steepest guidance method with SALD~\citep{nitanda2026slowly} for FLUX with ImageReward and Compressibility.
        Average performance is shown as horizontal lines.}
    \label{fig:flux_sald_lambda_ablation}
\end{figure}

\subsection{Ablation Study on $k$.}
We conduct an ablation study on the number of lookahead particles $k$.
We vary $k\in\{4,8,16,32\}$ while keeping $\lambda$ fixed.
We use the same $\lambda$ values as in the main experiments.
Fig.~\ref{fig:flux_k_ablation} shows that the performance improves as $k$ increases, but the proposed method consistently outperforms the baselines for all $k$ values.
\begin{figure}[h]
    \centering
    \includegraphics[width=\textwidth]{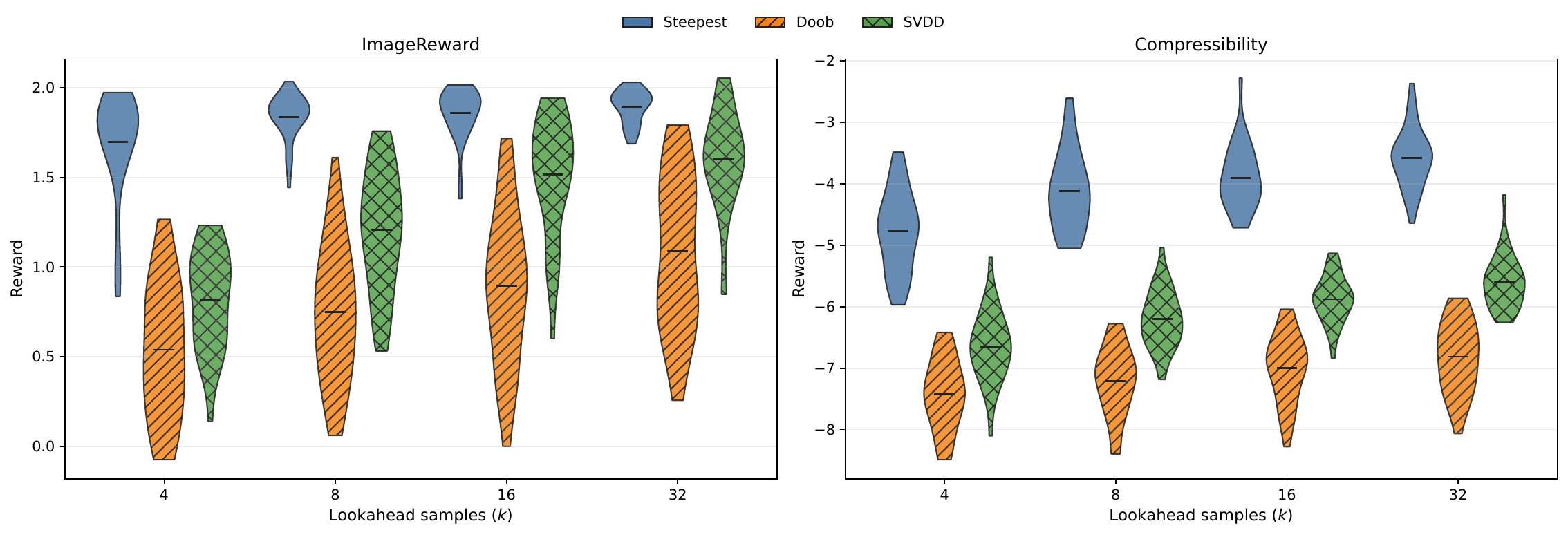}
    \captionof{figure}{Ablation study on the number of lookahead particles $k$ for FLUX with ImageReward and Compressibility.
        Average performance is shown as horizontal lines.}
    \label{fig:flux_k_ablation}
\end{figure}

\end{document}